\newtheorem{theorem}{Theorem}
\newtheorem{corollary}{Corollary}
\newtheorem{proposition}{Proposition}
\newtheorem{lemma}{Lemma}
\DeclareMathOperator*{\argmax}{arg\,max}
\DeclareMathOperator*{\argmin}{arg\,min}
\newcommand{\st}{\text{ s.t. }}
\newcommand{\beq}{\begin{equation}}
\newcommand{\eeq}{\end{equation}}
\newcommand{\beqs}{\begin{equation*}}
\newcommand{\eeqs}{\end{equation*}}
\renewcommand{\L}{\mathcal{L}}
\newcommand{\LL}{\mathbb{L}}
\newcommand{\R}{\mathbb{R}}
\newcommand{\EE}{\mathcal{E}}
\newcommand{\U}{\mathcal{U}}
\newcommand{\G}{\mathcal{G}}
\newcommand{\M}{\mathcal{M}}
\newcommand{\define}{\stackrel{\text{def.}}{=}}
\newcommand{\E}{\mathbb{E}}
\newcommand{\X}{\mathcal{X}}
\newcommand{\C}{\mathcal{C}}
\newcommand{\D}{\mathcal{D}}
\renewcommand{\Gamma}{\bm \mu}
\newcommand{\1}{\mathbbm{1}}
\newcommand{\<}{\langle}
\renewcommand{\>}{\rangle}
\renewcommand{\tilde}{\widetilde}
\newcommand{\ceil}[1]{\lceil #1 \rceil}
\newcommand{\V}{\mathcal{V}}
\newcommand{\T}{\mathsf{T}}
\renewcommand{\P}{\mathcal{P}}
\renewcommand{\T}{\mathcal{T}}
\newcounter{theo}[section] \setcounter{theo}{0}
\renewcommand{\thetheo}{\arabic{section}.\arabic{theo}}
\newcounter{lem}[section] \setcounter{lem}{0}
\renewcommand{\thelem}{\arabic{section}.\arabic{lem}}
\newcounter{prf}[section]\setcounter{prf}{0}
\renewcommand{\theprf}{\arabic{section}.\arabic{prf}}
\newcounter{cor}[section]\setcounter{cor}{0}
\renewcommand{\thecor}{\arabic{section}.\arabic{cor}}
\newcounter{prop}[section] \setcounter{prop}{0}
\renewcommand{\thelem}{\arabic{section}.\arabic{prop}}
\begin{document}

\runningtitle{Convergence Rates of Smooth Message Passing with Rounding}

\twocolumn[

\aistatstitle{Convergence Rates of Smooth Message Passing with Rounding in Entropy-Regularized MAP Inference}

\aistatsauthor{Jonathan N. Lee$^*$\And Aldo Pacchiano$^*$ \And Michael I. Jordan  }
\aistatsaddress{Stanford University\And UC Berkeley\And UC Berkeley} ]

\begin{abstract}
Maximum a posteriori (MAP) inference is a fundamental computational paradigm for statistical inference. In the setting of graphical models, MAP inference entails solving a combinatorial optimization problem to find the most likely configuration of the discrete-valued model. Linear programming (LP) relaxations in the Sherali-Adams hierarchy are widely used to attempt to solve this problem, and smooth message passing algorithms have been proposed to solve regularized versions of these LPs with great success. This paper leverages recent work in entropy-regularized LPs to analyze convergence rates of a class of edge-based smooth message passing algorithms to $\epsilon$-optimality in the relaxation. With an appropriately chosen regularization constant, we present a theoretical guarantee on the number of iterations sufficient to recover the true integral MAP solution when the LP is tight and the solution is unique.

\end{abstract}

\section{INTRODUCTION}

Undirected graphical models are a central modeling formalism in machine learning, providing a compact and powerful way to model dependencies between variables. Here we focus on the important class of discrete-valued pairwise models. Inference in discrete-valued graphical models has applications in many areas including computer vision, statistical physics, information theory, and genome research~\citep{antonucci2014probabilistic,wainwright2008graphical,mezard2009information}.  

We focus on the problem of identifying a configuration of all variables that has highest probability, termed \emph{maximum a posteriori} (MAP) inference. This problem has an extensive literature across multiple communities, where it is described by various names, including energy minimization \citep{kappes2013comparative} and constraint satisfaction \citep{schiex1995valued}.  In the binary case, the MAP problem is sometimes described as quadratic-pseudo Boolean optimization \citep{hammer1984roof} and it is known to be NP-hard to compute exactly~\citep{kolmogorov2004energy, cooper1990computational} or even to approximate \citep{dagum1993approximating}. Consequently, much work has attempted to identify settings where polynomial-time methods are feasible. We call such settings ``tractable'' and the methods ``efficient.''
A general framework for obtaining tractable methodology involves ``relaxation''---the MAP problem is formulated as an integer linear program (ILP) and is then relaxed to a linear program (LP). If the vertex at which the LP achieves optimality is integral, then it provides an exact solution to the original problem. In this case we say that the LP is tight. If the LP is performed over the convex hull of all integral assignments, otherwise known as the marginal polytope $\mathcal{M}$, then it will always be tight. Inference over the marginal polytope is generally intractable because it requires exponentially many constraints to enforce global consistency.

A popular workaround is to relax the marginal polytope to the local polytope $\LL_2$~\citep{wainwright2008graphical}. Instead of enforcing global consistency, the local polytope enforces consistency only over pairs of variables, thus yielding pseudo-marginals which are pairwise consistent but may not correspond to any true global distribution.  The number of constraints needed to specify the local polytope is linear in the number of edges.  More generally, \cite{sherali1990hierarchy} introduced a series of successively tighter relaxations of the marginal polytope, or convex hull, while retaining control on the number of constraints. However, even with these relaxations, it has been observed that standard LP solvers do not scale well \citep{yanover2006linear}, motivating the study of solvers that exploit the structure of the problem, such as message passing algorithms.

Of particular interest to this paper are \textit{smooth} message passing algorithms, i.e. algorithms derived from regularized versions of the relaxed LP \citep{meshi2012convergence, savchynskyy2011study,savchynskyy2012efficient, hazan2008convergent, ravikumar2010message}. These regularized LPs conduce to efficient optimization in practice and have the special property that their fixed points are unique and optimal; however, this comes at the cost of solving an approximation of the true MAP problem and, without rounding, they do not recover integral solutions in general. Non-asymptotic convergence rates to the optimal \textit{regularized} function value have been studied \citep{meshi2012convergence}, but guarantees on the number of iterations sufficient to recover the optimal integral assignment of the true MAP problem have not been considered to our knowledge. 

In this work we provide a sharp analysis of the entropy-regularized MAP inference problem with Sherali-Adams relaxations. We first characterize the approximation error of the regularized LP in $l_1$ distance, based on new results on entropy-regularized LPs \citep{weed2018explicit}. 
We then analyze an edge-based smooth message passing algorithm, modified from the algorithms described in \citet{werner2007linear} and \citet{ravikumar2010message}. We prove a $O(1/\epsilon^2)$ rate of convergence of iterates in $l_1$ distance. Combining the approximation error and convergence results, we present a guarantee on the number of iterations sufficient to recover of the true integral MAP assignment using a standard vertex rounding scheme when the LP relaxation is tight and the solution is unique.

\section{RELATED WORK}\label{section::related work}

The idea of entropy regularization to aid optimization in inference problems is well studied. It is well known that solving a scaled and entropy-regularized linear program over the marginal polytope yields the scaled Gibbs free energy, intimately related to the log partition function, when the temperature parameter equals one \citep{wainwright2008graphical}.  As the temperature parameter is driven to zero, the calculation of the free energy reduces to the value of the MAP problem. However, this problem is intractable due to the difficulty of both computing the exact entropy and characterizing the marginal polytope \citep{deza2009geometry}. 
Therefore, there has been much work in trying to turn this observation into tractable inference algorithms. The standard Bethe approximation instead minimizes an approximation of the true entropy \citep{bethe1935statistical}. It was show by \cite{yedidia2003understanding} that fixed points of the loopy belief propagation correspond to its stationary points, but still the optimization problem resulting from this approximation is non-convex and convergence is not always guaranteed.

To alleviate convergence issues, much work has considered convexifying the free energy problem leading to classes of convergent convex belief propagation often derived directly from convex regularizers \citep{meshi2009convexifying,heskes2006convexity,hazan2008convergent,johnson2008convex,savchynskyy2012efficient}.  For instance, \cite{weiss2012map} proposed a general convexified belief propagation and explored some sufficient conditions that enable heuristically recovering the MAP solution of the LP via a convex sum-product variant.
However, the approximation error was still unclear and non-asymptotic convergence rates were not considered. A number of algorithms have also been proposed to directly optimize the unregularized LP relaxation often with only asymptotic convergence guarantees such as block-coordinate methods \citep{werner2007linear,globerson2008fixing,kovalevsky1975diffusion,tourani2018mplp,kappes2013comparative} and tree-reweighted message passing \citep{wainwright2005map,kolmogorov2006convergent}. 
The relationship between the regularized and unregularized problems can equivalently be viewed as applying a soft-max to the dual objective typically considered in the latter to recover that of the former \citep{nesterov2005smooth,sontag2011introduction}. 
Many other convergent methods exist such as augmented Lagrangian \citep{martins2011augmented, meshi2011alternating}, bundle \citep{kappes2012bundle}, and steepest descent  \citep{schwing2012globally, schwing2014globally} approaches, but again they are difficult to compare without rates.

Most closely related to our work is recent work in convergence analysis of certain smoothed message passing algorithms that aim to solve the regularized LP objective. \cite{savchynskyy2011study} proposed an accelerated gradient method that achieves $O(1/\epsilon)$ convergence to the optimal regularized dual objective value. Convergence of the primal iterates was only shown asymptotically. \cite{meshi2012convergence} considered a general dual coordinate minimization algorithm based on the entropy-regularized MAP objective. They proved upper bounds on the rate of convergence to the optimal regularized dual objective \textit{value}; however, closeness to the true MAP assignment was not formally characterized. Furthermore, convergence in the dual objective value again does not make it easy to determine when the true MAP assignment can be recovered. \cite{meshi2015smooth} later studied the benefits of adding a quadratic term to the LP objective instead and proved similar guarantees. \cite{ravikumar2010message} also considered entropic and quadratic regularization, using a proximal minimization scheme with inner and outer loops. They additionally provided rounding guarantees to recover true primal solutions. However, as noted by the authors, the inexact calculation of the inner loop prevents a convergence rate analysis once combined with the outer loop. Additionally, rates on the inner loop convergence were not addressed.

The approach of this paper can be understood as the bridging the gap between \cite{meshi2012convergence} and \cite{ravikumar2010message}. 
Our first contribution is a characterization of the approximation error of the entropy-regularized MAP inference problem. We then study an edge-based message passing algorithm that solves the regularized LP, 
which is essentially a smoothed max-sum diffusion \citep{werner2007linear} or the inner loop of the proximal steps of \cite{ravikumar2010message}.
For our main contribution, we provide non-asymptotic guarantees to the integral MAP assignment for this message passing algorithm when the LP is tight and the solution is unique. To our knowledge, this is the first analysis with rates guaranteeing recovery of the true MAP assignment for smooth methods.

\section{BACKGROUND}

	 We denote the $d$-dimensional probability simplex as $	\Sigma_d \define \left\{ p \in \R_+^d \: : \: \sum_{i} p_i = 1 \right\}$.
	The set of joint distributions which give rise to $p, q \in \Sigma_d$ is defined as $
	    \U_d(p, q) \define \left\{ P \in \R^{d \times d}_+  \: : \:    P\1 = p, \ P^\top \1 = q  \right\}.$
	For any two vectors or matrices $p$ and $q$ having the same number of elements, we use $\< p, q\> $ to denote the dot product, i.e. elementwise multiplication then sum over all elements. We use $\|p\|_1$ to denote the sum of absolute values of the elements of $p$. 	The Bregman divergence between $p, q \in \R_+^d$ with respect to a strictly convex function $\Phi: \R_+^d \mapsto \R$ is $\D_\Phi(p, q)  \define \Phi(p) - \Phi(q) - \< \nabla \Phi(q), p - q\>.$
	We will consider the Bregman divergence with respect to the negative entropy $\Phi(p) = -H(p) \define \sum_{i} p_i (\log p_i - 1)$, where $p$ need not be a distribution. 
	When $p$ is a distribution, this corresponds to the Kullback-Leibler (KL) divergence. 
	The Bregman projection with respect to $\Phi$ of $q \in \R_+^d$ onto the set $\X$ is defined as $\P_\X \left( q \right) \define \argmin_{p \in \X} \ \D_\Phi(p, q)$.
	The Hellinger distance between $p, q \in \Sigma_n$ is defined as $h(p, q) \define \frac{1}{\sqrt2} \| \sqrt p - \sqrt q \|_2$, where $\| \cdot \|_2$ is the $l_2$-norm. We denote the square of the Hellinger distance by $h^2(p, q)$. We will often deal with \textit{marginal vectors} 
	which are ordered collections of joint and marginal distributions in the form of matrices and vectors, respectively.

	\subsection{Pairwise Models}
	
	For a set of vertices, $\V = \{1, \ldots, n\}$, and edges $\EE$, a pairwise graphical model, $\G \define \{\V, \EE\}$, is a Markov random field that represents the joint distribution of variables $X_\V \define \left(X_i\right)_{i \in \V}$, taking on values from the set of states $\chi = \{0, \ldots, d - 1\}$. We assume that each vertex has at least one edge. For pairwise models, the joint distribution can be written 
    as a function of doubletons and singletons:
$	\textstyle p_\theta(x_\V) \propto \exp\left( \sum_{i \in \V} \theta_i(x_i) + \sum_{ij \in \EE} \theta_{ij}(x_i, x_j) \right).
$	
We wish to find maximum a posteriori (MAP) estimates of this model. That is, we consider the integer program:
	\begin{align} \label{eq:integer qp map}
	    \tag{Int}
	    \textstyle \max_{x_\V \in \chi^n} \quad   \sum_{i \in \V} \theta_i(x_i) + \sum_{ij \in \EE} \theta_{ij}(x_i, x_j). 
	\end{align}
	The maximization in (\ref{eq:integer qp map}) can be written as a linear program by defining a marginal vector $\Gamma$ over variable vertices $\{\Gamma_i\}_{i \in \V}$ and variable edges $\{\Gamma_{ij}\}_{ij \in \EE}$.  The vector $\Gamma_i \in \R_+^d$ represents the marginal distribution probabilities on vertex $i$ while the matrix $\Gamma_{ij} \in \R_+^{d \times d}$ represents the joint distribution probabilities shared between vertices $i$ and $j$. We follow the notation of \cite{globerson2008fixing} and denote indexing into the vector and matrix variables with parentheses, e.g. $\Gamma_{ij} (x_i,  x_j)$ for $x_i, x_j \in \chi$. The set of marginal vectors that are valid probability distributions is known as the marginal polytope and is defined as
	\begin{align} \label{eq:marginal polytope}
	    {\small\M \define \left\{ \Gamma \ : \ \exists \ \mathbb P,    \begin{array}{lr}
	\mathbb P_{X_i}(x_i) = \Gamma_i(x_i), \ \forall i, x_i \\
	\mathbb P_{X_i, X_j}(x_i, x_j) = \Gamma_{ij}(x_i, x_j), \\ \quad \forall ij,  x_i, x_j \\    
	\end{array}
	    \right\} }
	\end{align}
	We can think of $\M$ as the set of mean parameters of the model for which there exists a globally consistent distribution $\mathbb P$. We abuse notation slightly and dually view $\theta$ as a potential ``vector.''  The edge matrix $\theta_{ij} \in \R^{d\times d}$ is indexed as $\theta_{ij}(x_i, x_j)$, indicating the element at the $x_i$th row and $x_j$th column. The vertex vector $\theta_i$ is indexed as $\theta_i(x_i)$, indicating the $x_i$th element.
	The MAP problem in (\ref{eq:integer qp map}) can be shown to be equivalent to the following LP \citep{wainwright2008graphical}: 
	\begin{align*}
	    \max \quad  \< \theta, \Gamma\> \quad \st \quad \Gamma \in \M 
	\end{align*}
	where $\< \theta, \Gamma\> = \sum_{i \in \V}\sum_{x_i} \theta_i(x_i) \Gamma_{i}(x_i) + \sum_{ij \in \EE} \sum_{x_i, x_j} \theta_{ij}(x_i, x_j) \Gamma_{ij}(x_i, x_j)$.
	
	\subsection{Sherali-Adams Relaxations}
	
	The number of constraints in $\mathcal M$ is unfortunately superpolynomial  \citep{sontag2010approximate}.  This motivates considering relaxations of the marginal polytope to outer polytopes that involve fewer constraints.  For example, the \emph{local outer polytope} is obtained by enforcing consistency only on edges and vertices:
	\begin{align} \label{eq:local polytope}
	\LL_2 \define \left\{ \Gamma \geq 0 \ : \ \begin{array}{lr}
	\Gamma_i \in \Sigma_d & \forall i \in \V  \\
	\Gamma_{ij} \in \U_d(\Gamma_i, \Gamma_j)  & \forall ij \in \EE\\    
	\end{array} \right\}
	\end{align}
	Relaxations of higher orders have also been studied, in particular by \cite{sherali1990hierarchy} who introduced a hierarchy of polytopes by enforcing consistency on joint distributions of increasing order up to $n$:
	$\LL_2 \supseteq \LL_3 \supseteq \ldots \supseteq \LL_n \equiv \M$.
	The corresponding Sherali-Adams LP relaxation of order $m$ is then 
    \begin{align}\label{eq:exact}
    \tag{LP}
    \max \quad \<  \theta, \Gamma\> \quad \st \quad \Gamma \in \LL_m ,
\end{align}
    where $1 \leq m \leq n$. Because $\LL_m$ is an outer polytope of $\M$, we no longer have that the solution to (\ref{eq:exact}) recovers the true MAP solution of (\ref{eq:integer qp map}) in general. However if the solution to (\ref{eq:exact}) is integral, then $x_i = \argmax_{x} \Gamma_i(x)$ recovers the optimal solution of the true MAP problem. In this case, we say $\LL_m$ is \textit{tight}.
    
\section{ENTROPY-REGULARIZED MAP}\label{section:entropy reg SA}

In this section, we present our first main technical contribution, characterizing the approximation error in the entropy-regularized MAP problem for Sherali-Adams relaxations. In contrast to solving the exact (\ref{eq:exact}), we aim to solve the entropy-regularized LP:
\begin{align}\label{eq:reg}
    \tag{Reg}
    \min \quad \<C, \Gamma\> - \frac{1}{\eta}H(\Gamma) \quad \st \quad \Gamma \in \LL_m,
\end{align}
where $C \define - \theta$
and $H(\Gamma) = \<\Gamma, - \log \Gamma + \1\>$. 
The hyperparameter $\eta$ adjusts the level of regularization. 
Denote by $\Gamma^*_\eta$ the solution of (\ref{eq:reg}) where we omit the reference to $m$ to alleviate notation. In addition to their extensive history in inference problems, entropy-regularized LPs have arisen in a number of other fields to aid optimization when standard LP solvers are insufficient. For example, recent work in optimal transport has relied on entropy regularization to derive alternating projection algorithms \citep{cuturi2013sinkhorn,benamou2015iterative} which admit almost linear time convergence guarantees in the size of the cost matrix \citep{altschuler2017near}. Some of our theoretical results draw inspiration from these works.

\subsection{Approximation Error}
When $\LL_m$ is \textit{tight} and the solution is unique, we show that approximate solutions from solving (\ref{eq:reg}) are not necessarily detrimental because we can apply standard vertex rounding schemes to yield consistent integral solutions. It was shown by \cite{cominetti1994asymptotic}, and later refined by \cite{weed2018explicit}, that the approximation error of general entropy-regularized linear programs converges to zero at an exponential rate in $\eta$. Furthermore, it is possible to determine how large $\eta$ should be chosen in order for rounding to exactly recover the optimal solution to (\ref{eq:integer qp map}). The result is summarized in the following extension of Theorem 1 of \cite{weed2018explicit}\footnote{The entropy is defined without the linear offset in \cite{weed2018explicit}.}. 
\begin{theorem}\label{prop:approximation error}
Let $\mathcal{R}_1 = \max_{\Gamma \in \LL_m} \| \Gamma \|_1$, $\mathcal{R}_H = \max_{\Gamma, \Gamma' \in \LL_m} H(\Gamma) - H(\Gamma')$, $\mathcal{V}_m$ be the set of vertices of $\LL_m$, and $\mathcal{V}^*_m \subseteq \mathcal{V}_m$ the set of optimal vertices with respect to $C$. Let $\Delta = \min_{ V_1 \in \mathcal{V}_m \backslash \mathcal{V}^*_m, V_2 \in \mathcal{V}_m^*} \langle C, V_1 \rangle - \langle C, V_2 \rangle$ be the smallest gap in objective value between an optimal vertex and any suboptimal vertex of $\LL_m$. 
Suppose $\LL_m$ is \textit{tight} and $|\mathcal{V}^*_m | = 1$. If $\eta \geq \frac{2\mathcal{R}_1\log{64\mathcal{R}_1} + 2\mathcal{R}_1 + 2\mathcal{R}_H}{\Delta}$, the following rounded solution is a MAP assignment:
\begin{equation*}
    \left(\mathrm{round}(\Gamma_\eta^*) \right)_i := 
        \argmax_{x \in \chi} (\Gamma_\eta^*)_i(x)
\end{equation*}
\end{theorem}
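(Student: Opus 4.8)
My plan is to reduce the entire statement to a single quantitative claim and then dispatch the rounding step combinatorially. Let $V^*$ denote the unique element of $\mathcal{V}_m^*$. Since $\LL_m$ is tight, $V^*$ is integral, so for every vertex $i \in \V$ the block $V^*_i$ is an indicator: $V^*_i(x_i^*) = 1$ for the MAP label $x_i^*$ and $V^*_i(x) = 0$ otherwise, and rounding $V^*$ returns the MAP assignment $x^* = (x_i^*)_i$. I claim it suffices to prove $\norm{\Gamma_\eta^* - V^*}_1 < \tfrac12$. Granting this, because $(\Gamma_\eta^*)_i \in \Sigma_d$, restricting the full $\ell_1$ bound to block $i$ gives $\norm{(\Gamma_\eta^*)_i - V^*_i}_1 \le \norm{\Gamma_\eta^* - V^*}_1 < \tfrac12$, so $(\Gamma_\eta^*)_i(x_i^*) > \tfrac12$; since the block sums to one, every competing entry is $< \tfrac12$, hence $\argmax_{x}(\Gamma_\eta^*)_i(x) = x_i^*$ for all $i$. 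Thus $\mathrm{round}(\Gamma_\eta^*) = \mathrm{round}(V^*) = x^*$, a MAP assignment.

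It remains to bound $\norm{\Gamma_\eta^* - V^*}_1$; this is the extension of Weed's Theorem~1. I would first record a crude estimate to see what is needed. Fixing any convex decomposition $\Gamma_\eta^* = \sum_{V \in \mathcal{V}_m}\lambda_V V$ and writing $s = 1 - \lambda_{V^*}$, the gap assumption gives $\Delta\, s \le \<C, \Gamma_\eta^* - V^*\>$, while optimality of $\Gamma_\eta^*$ for (\ref{eq:reg}) tested against the feasible $V^*$ gives $\<C, \Gamma_\eta^* - V^*\> \le \tfrac1\eta(H(\Gamma_\eta^*) - H(V^*)) \le \tfrac{\mathcal{R}_H}{\eta}$. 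Combined with $\norm{\Gamma_\eta^* - V^*}_1 \le 2\mathcal{R}_1 s$, this yields only the weak rate $\norm{\Gamma_\eta^* - V^*}_1 = O(1/\eta)$, which cannot be forced below $\tfrac12$ by a finite threshold of the stated form.

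The sharpening to an exponential rate is the crux, and here I would invoke Weed's analysis. The mechanism is the stationarity condition for (\ref{eq:reg}): writing $\LL_m = \{\Gamma \ge 0 : A\Gamma = b\}$, the $+\1$ offset in $H$ is chosen precisely so that $\nabla H = -\log\Gamma$, giving the clean Gibbs form $(\Gamma_\eta^*)_k = \exp\!\big(\eta((A^\top\lambda)_k - C_k)\big)$ for dual multipliers $\lambda$. Components forced to zero at the unique optimal vertex carry strictly positive reduced cost, lower-bounded through $\Delta$; since a suboptimal vertex is objective-worse by $\ge \Delta$ yet differs from $V^*$ by at most $2\mathcal{R}_1$ in $\ell_1$, the effective gap per unit displacement is $\ge \Delta/(2\mathcal{R}_1)$, so the suboptimal mass decays like $e^{-\eta\Delta/(2\mathcal{R}_1)}$. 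After controlling the multiplier $\lambda$ this is exactly Weed's Theorem~1, yielding a bound of the form
\begin{equation*}
\norm{\Gamma_\eta^* - V^*}_1 \;\le\; 32\,\mathcal{R}_1 \exp\!\Big( 1 + \tfrac{\mathcal{R}_H}{\mathcal{R}_1} - \tfrac{\eta\Delta}{2\mathcal{R}_1}\Big).
\end{equation*}
Transferring the statement requires only light bookkeeping: the offset $\<\Gamma,\1\> = \norm{\Gamma}_1$ is the constant $n + |\EE|$ on all of $\LL_m$, so it alters neither the minimizer of (\ref{eq:reg}) nor $\mathcal{R}_H$ (a difference of entropies). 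The threshold check is then pure algebra: the hypothesis $\eta \ge (2\mathcal{R}_1\log 64\mathcal{R}_1 + 2\mathcal{R}_1 + 2\mathcal{R}_H)/\Delta$ is, after multiplying by $\Delta$, dividing by $2\mathcal{R}_1$, and exponentiating, exactly the statement $64\,\mathcal{R}_1\exp(1 + \mathcal{R}_H/\mathcal{R}_1 - \eta\Delta/(2\mathcal{R}_1)) \le 1$, i.e. that the displayed bound is at most $\tfrac12$, closing the argument.

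The \emph{main obstacle} is the exponential estimate, not the rounding step. Establishing it demands the sharp Gibbs/KKT control of suboptimal mass (equivalently, a self-referential bound on the entropy gap near a $0/1$ vertex of modulus $s\log(1/s)$), which replaces the lossy $\mathcal{R}_H$ bound of the crude argument, together with care that the suppression is uniform over the possibly exponentially many vertices of $\LL_m$; checking that the marginal-vector formulation and the shifted entropy fit Weed's hypotheses is the remaining routine verification.
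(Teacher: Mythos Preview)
Your proposal is correct and follows the same two-step skeleton as the paper: reduce to an $\ell_1$ bound $\norm{\Gamma_\eta^* - V^*}_1 < c$ (the paper uses $c = \tfrac{1}{32}$, you use $c = \tfrac12$; both suffice for rounding), then invoke Weed's exponential estimate (the paper cites Corollary~9 of \cite{weed2018explicit}). The one substantive difference is how the linear offset $+\1$ in $H$ is absorbed. The paper shifts the cost to $\widetilde C = C + \tfrac{1}{\eta}\1$ and argues that the gap degrades by at most a factor of two, $\widetilde\Delta \geq \Delta/2$, via $\langle \widetilde C, V' - V\rangle \geq \Delta - \tfrac{1}{\eta}\norm{V'-V}_1$. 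Your route is cleaner: you observe that $\langle \Gamma, \1\rangle = \norm{\Gamma}_1$ is \emph{constant} on $\LL_m$ (every block is a normalized distribution), so the offset changes neither the minimizer nor $\mathcal R_H$, and Weed applies directly with the original $C$ and $\Delta$. In fact this shows the paper's factor-two loss is vacuous here, since $\langle \1, V'-V\rangle = 0$ for $V,V' \in \LL_m$. You then quote Weed's bound in a slightly looser form (exponent $\eta\Delta/(2\mathcal R_1)$ rather than $\eta\Delta/\mathcal R_1$ as the paper extracts), so the two routes land on the identical threshold despite the different bookkeeping.
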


\begin{proof}
Define $\widetilde{C} = C + \mathbf{1} \frac{1}{\eta}$, where $\mathbf{1}$ denotes an all-ones vector with the same dimensions as $C$. If $\eta \geq \frac{4\mathcal{R}_1}{\Delta}$ then $\widetilde{\V}_m^*$, the set of optimal vertices of $\LL_m$ with respect to $\widetilde{C}$, satisfies $\widetilde{\mathcal{V}}_m^* = \mathcal{V}_m^*$ and $ \min_{ V_1 \in \mathcal{V}_m \backslash \widetilde{\mathcal{V}}^*_m, V_2 \in \widetilde{\mathcal{V}}_m^* } \langle C, V_1 \rangle - \langle C, V_2 \rangle \geq \frac{\Delta}{2}$.   If $V \in \widetilde{\mathcal{V}}_m^*$; and $V' \in \mathcal{V}_m \backslash \widetilde{\mathcal{V}}_m^*$, then $\langle \widetilde{C}, V' \rangle - \langle \widetilde{C} , V \rangle \geq \Delta - \frac{1}{\eta}\| V ' - V \|_1 \geq \frac{\Delta}{2}$. Let $\widetilde{\Delta} = \frac{\Delta}{2}$. 
If $\eta \geq \frac{\mathcal{R}_1\log{64\mathcal{R}_1} + \mathcal{R}_1 + \mathcal{R}_H}{\widetilde{\Delta}}$, and $| \widetilde{\mathcal{V}}_m^* | = 1$ then $2\mathcal{R}_1\exp\left( -\eta \frac{\widetilde{\Delta}}{\mathcal{R}_1} + \frac{\mathcal{R}_1 + \mathcal{R}_H}{\mathcal{R}_1}\right) \leq \frac{1}{32}$. And therefore, by Corollary 9 of \cite{weed2018explicit}  $\min_{ \Gamma \in \mathcal{V}^*_m } \| \Gamma - \Gamma_\eta^* \|_1 \leq \frac{1}{32}$. Since $\LL_m$ is assumed to be tight and $\widetilde{\mathcal{V}}_m^* = \mathcal{V}_m^*$ contains a single integral vertex $\Gamma^*$, the last equation implies $\mathrm{round}(\Gamma_\eta^*) = \Gamma^*$.
\end{proof}

Consequently, since $\mathcal{R}_1 \leq \sum_{j=1}^m \binom{n}{j} d^j$ and $\mathcal{R}_H \leq  \sum_{j=1}^m \binom{n}{j} \log(d^j)$\footnote{For $m= 2$ we can get tighter bounds corresponding to the number of edges in the graph $\mathcal{G}$.}, we have:
\begin{corollary}
If $\LL_m$ is tight, $|\mathcal{V}_m^*| = 1$, and $\eta \geq \frac{\log(8m n^md^m ) + 2mn^m d^m }{\Delta} $,  the rounded solution $\mathrm{round}(\Gamma_\eta^*)$ is a MAP assignment.
\end{corollary}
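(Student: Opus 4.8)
The plan is to treat this as a direct corollary of Theorem~\ref{prop:approximation error}: that theorem already certifies exact recovery, $\mathrm{round}(\Gamma_\eta^*) = \Gamma^*$, whenever $\eta \geq \frac{2\mathcal{R}_1\log(64\mathcal{R}_1) + 2\mathcal{R}_1 + 2\mathcal{R}_H}{\Delta}$, so all that remains is to replace the instance-dependent quantities $\mathcal{R}_1$ and $\mathcal{R}_H$ by explicit functions of $n$, $d$, and $m$. Since the threshold is increasing in both $\mathcal{R}_1$ and $\mathcal{R}_H$, it suffices to upper bound each of them by a combinatorial expression and then show that the resulting (larger) threshold is dominated by $\frac{\log(8mn^md^m)+2mn^md^m}{\Delta}$; any $\eta$ exceeding this clean quantity then also exceeds the exact threshold of Theorem~\ref{prop:approximation error}, while the hypotheses ($\LL_m$ tight, $|\mathcal{V}_m^*|=1$) carry over unchanged.

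First I would bound the two geometric constants using the block structure of a marginal vector in $\LL_m$. A point $\Gamma \in \LL_m$ carries one block for every subset $S \subseteq \V$ with $1 \leq |S| \leq m$; there are $\binom{n}{j}$ subsets of size $j$, and each block is a distribution over $d^{|S|}$ states with entries in $[0,1]$, so its $\ell_1$ mass is at most $d^{|S|}$. Summing over all blocks gives $\mathcal{R}_1 \leq \sum_{j=1}^m \binom{n}{j} d^j$. For the entropy span, $H$ decomposes additively over blocks and the (offset) entropy of a distribution on $d^j$ states varies over a range of width at most $\log(d^j)$, so $\mathcal{R}_H = \max_{\Gamma,\Gamma'} H(\Gamma)-H(\Gamma') \leq \sum_{j=1}^m \binom{n}{j}\log(d^j)$. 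These are exactly the two estimates quoted immediately before the corollary.

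Finally I would collapse the sums to closed form. Using $\binom{n}{j}\leq n^j$ and bounding each of the $m$ summands by the largest, $\sum_{j=1}^m \binom{n}{j} d^j \leq \sum_{j=1}^m (nd)^j \leq m(nd)^m = mn^md^m$, and the logarithmic sum is dominated by the same quantity; writing $S=mn^md^m$ one obtains $\mathcal{R}_1 \leq S$ and $\mathcal{R}_H \leq S$. Substituting into the threshold of Theorem~\ref{prop:approximation error} and using monotonicity then yields the stated bound. The step requiring the most care is the bookkeeping in this last simplification: one must track the logarithmic factor $\log(64\mathcal{R}_1)$ and the additive constants consistently against the $\tfrac{1}{32}$ slack exploited in the proof of Theorem~\ref{prop:approximation error}, and verify the elementary inequalities ($nd\geq 1$, $d\geq 2$, and term-by-term domination) that justify collapsing $\sum_j\binom{n}{j}d^j$ and $\sum_j\binom{n}{j}\log(d^j)$ into a single factor $mn^md^m$. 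There is no conceptual obstacle beyond this arithmetic, since Theorem~\ref{prop:approximation error} does all of the analytic work.
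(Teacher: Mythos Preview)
Your approach is exactly the paper's: the corollary is stated immediately after the two estimates $\mathcal{R}_1 \leq \sum_{j=1}^m \binom{n}{j} d^j$ and $\mathcal{R}_H \leq \sum_{j=1}^m \binom{n}{j}\log(d^j)$ with only the word ``Consequently'' as proof, so invoking Theorem~\ref{prop:approximation error} and substituting these bounds is precisely what the paper does.

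One remark on the step you flag as ``bookkeeping.'' Direct substitution of $\mathcal{R}_1,\mathcal{R}_H \leq S := mn^md^m$ into the threshold $\tfrac{2\mathcal{R}_1\log(64\mathcal{R}_1)+2\mathcal{R}_1+2\mathcal{R}_H}{\Delta}$ of Theorem~\ref{prop:approximation error} gives an expression of order $\tfrac{S\log S}{\Delta}$, which is \emph{larger} than the corollary's stated $\tfrac{\log(8S)+2S}{\Delta}$; the inequality runs the wrong way for monotonicity alone to close the argument. The paper does not supply this arithmetic either, so the gap is shared. Your outline is the intended one, but be aware that the constants in the corollary as printed do not follow from a na\"ive plug-in; either a sharper bound on $\mathcal{R}_1$ (each block in $\LL_m$ is a normalized distribution with $\ell_1$ mass $1$, not $d^j$) or a typo in the stated threshold is needed to reconcile the two.
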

In general the dependence of $\Delta$ on $\eta$ suggested by Theorem \ref{prop:approximation error} is not improvable \citep{weed2018explicit}. Nevertheless, when $m=2$ and $d = 2$, since all vertices in $\mathcal{V}_2$ have entries equal to either $0, \frac{1}{2}$ or $1$---see \cite{padberg1989boolean} or Theorem 3 of \cite{weller2016tightness}---if the entries of $C$ are all integral, we have $\Delta \geq \frac{1}{2}$, thus yielding a more concrete guarantee.
The disadvantage of choosing exorbitantly large $\eta$ is that efficient computation of solutions often becomes more difficult in practice \citep{weed2018explicit,benamou2015iterative,altschuler2017near}. Thus, in practice, there exists a trade-off between computation time and approximation error that is controlled by $\eta$. We will provide a precise theoretical characterization of the trade-off in Section \ref{section::convergence_analysis}. In our guarantees, multiplying $C$ by a constant $a$ (and therefore multiplying $\Delta$ by $a$) is equivalent to multiplying $\eta$ by the same value.

\subsection{Equivalent Bregman Projection}

The objective (\ref{eq:reg}) can be interpreted as a Bregman projection. This interpretation has been explored by \cite{ravikumar2010message} as a basis for proximal updates and also \cite{benamou2015iterative} for the optimal transport problem. 
The objective is equivalent to
\begin{align*}\label{eq:proj obj}
\tag{Proj}
    \min \quad  \D_\Phi\left(\Gamma, \exp({-\eta C})\right) \quad \st \quad \Gamma \in \LL_m,
\end{align*}
where $\Phi := - H$. The derivation, based on a mirror descent step can be found in the appendix. 
The projection, however, cannot be computed in closed form in general due to the complex geometry of $\LL_m$.

 \cite{ravikumar2010message} proposed using the \textit{Bregman method} \citep{bregman1966relaxation}, which has been applied in many fields to solve difficult constrained problems \citep{benamou2015iterative,goldstein2009split,osher2005iterative,osher2011fast}, to compute $\P_{\LL_m} (\exp({-\eta C}))$ for the inner loop calculation of their proximal algorithm. While the outer loop proximal algorithm can be shown to converge at least linearly, the inner loop rate was not analyzed and the constants (possibly dependent on dimension) were not made clear. 
 Furthermore, the Bregman method is in general inexact, which makes the approximation and the effect on the outer loop unclear \citep{liu2013variational}.
 
 
\section{SMOOTH MESSAGE PASSING}\label{section:bregman method SA}

We are interested in analyzing a class of algorithms closely inspired by max-sum diffusion (MSD)  as presented by \citet{werner2007linear} and the proximal updates of \cite{ravikumar2010message} to solve (\ref{eq:proj obj}) over the $\LL_2$ polytope. We describe it in detail here, with a few minor modifications and variations to facilitate theoretical analysis. In $\LL_2$, the constraints occur only over edges between vertices\footnote{Written explicitly, the constraints actually occur between any pair of vertices, but these variables play no role in the objective or constraints.}. 
Given an edge $ij \in \EE$, we must enforce the constraints prescribed by (\ref{eq:local polytope}), which is the intersection of the following sets:
\begin{align*}
\ref{proj ij} &  \quad \X_{ij\rightarrow i}  = \{ \Gamma \ : \ \Gamma_{ij} \1  =  \Gamma_i\}  \\
\ref{proj i}&  \quad \X_{ij,i}  = \{  \Gamma  \ : \ \Gamma_i^\top \1 = 1, \ \1^\top \Gamma_{ij} \1= 1 \} \\  
\ref{proj ji} &  \quad \X_{ij \rightarrow j}  = \{  \Gamma   \ : \ \Gamma_{ij}^\top  \1  = \Gamma_j \} \\
\ref{proj j} &   \quad \X_{ij,j}  = \{  \Gamma \ : \ \Gamma_j^\top \1 = 1,\ \1^\top \Gamma_{ij} \1 = 1\}.
\end{align*}
The normalization of the joint distribution $\Gamma_{ij}$ in \ref{proj i} and \ref{proj j} is actually a redundant constraint, but it  facilitates analysis as we demonstrate in Section~\ref{section::convergence_analysis}. For each of these affine constraints, we can compute the Bregman projections in closed form with simple multiplicative updates.

\begin{proposition}\label{theorem:projections}
	For a given edge ${ij} \in \EE$, the closed-form solutions of the Bregman projections for each of the above individual constraints are given below.
\begin{enumerate}[label=(\alph*)]
        \item \label{proj ij} Left consistency: If $\Gamma' = \P_{\X_{ij \rightarrow i}}(\Gamma)$, then for all $x_i, x_j \in \chi$, 
		$\Gamma_{ij}'(x_i, x_j)  \leftarrow \Gamma_{ij}(x_i, x_j) \sqrt{ \frac{\Gamma_i(x_i)}{\sum_{x} \Gamma_{ij}(x_i, x)} }$ and 
		$\Gamma_{i}'(x_i)  \leftarrow 
		\Gamma_{i}(x_i) \sqrt{ \frac{\sum_{x} \Gamma_{ij}(x_i, x)}{\Gamma_i(x_i)} }$.

\item \label{proj i} Left normalization: If $\Gamma' = \P_{\X_{ij, i}}(\Gamma)$, then for all $x_i \in \chi$,
$\Gamma_i' \leftarrow \frac{\Gamma_i}{\sum_{x} \Gamma_i(x)}$ and 
 $\Gamma_{ij}' \leftarrow \frac{\Gamma_{ij}}{\sum_{x_i, x_j} \Gamma_{ij}(x_i, x_j)}$.

\item \label{proj ji} Right consistency: If $\Gamma' = \P_{\X_{ij\rightarrow j}}(\Gamma)$, then for all $x_i, x_j \in \chi$, 
		$\Gamma_{ij}'(x_i, x_j) \leftarrow \Gamma_{ij}(x_i, x_j) \sqrt{ \frac{\Gamma_j(x_j)}{\sum_{x} \Gamma_{ij}(x, x_j)} }$ and 
		$\Gamma_{j}'(x_j) \leftarrow \Gamma_{j}(x_j) \sqrt{ \frac{\sum_{x} \Gamma_{ij}(x, x_j)}{\Gamma_j(x_j)} }$.
	
		\item \label{proj j} Right normalization: If $\Gamma' = \P_{\X_{ij, j}}(\Gamma)$, then for all $x_j \in \chi$,
		$\Gamma_j' \leftarrow \frac{\Gamma_j}{\sum_{x} \Gamma_j(x)}$
		and $\Gamma_{ij}' \leftarrow \frac{\Gamma_{ij}}{\sum_{x_i, x_j} \Gamma_{ij}(x_i, x_j)}$.
	\end{enumerate}
\end{proposition}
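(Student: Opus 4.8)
The plan is to treat each projection as a small convex program and solve it through its KKT conditions. Recall that for $\Phi = -H$ the Bregman divergence is the generalized Kullback--Leibler divergence
\begin{equation*}
\D_\Phi(\Gamma', \Gamma) = \sum_k \left[ \Gamma_k' \log \frac{\Gamma_k'}{\Gamma_k} - \Gamma_k' + \Gamma_k \right],
\end{equation*}
where $k$ ranges over all coordinates of the marginal vector. Each of the four constraint sets restricts only the block $(\Gamma_{ij}, \Gamma_i)$ (or $(\Gamma_{ij}, \Gamma_j)$), leaving every other coordinate free; since the divergence is a sum of per-coordinate terms, each of which is uniquely minimized at $\Gamma_k' = \Gamma_k$, I would first note that the projection leaves all unconstrained coordinates unchanged, reducing the problem to a low-dimensional program in the active block. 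Because this objective is strictly convex on the positive orthant and the constraints are affine, the KKT conditions are necessary and sufficient and the minimizer is unique.

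I would carry out the left-consistency case \ref{proj ij} in full, the others being analogous. Its constraint is $\sum_{x_j} \Gamma_{ij}'(x_i, x_j) = \Gamma_i'(x_i)$ for each $x_i$, so I attach a multiplier $\lambda(x_i)$ to each. Setting the gradient of the Lagrangian to zero gives $\log\big(\Gamma_{ij}'(x_i,x_j)/\Gamma_{ij}(x_i,x_j)\big) = -\lambda(x_i)$ together with $\log\big(\Gamma_i'(x_i)/\Gamma_i(x_i)\big) = \lambda(x_i)$, i.e. the multiplicative forms $\Gamma_{ij}'(x_i,x_j) = \Gamma_{ij}(x_i,x_j)\,e^{-\lambda(x_i)}$ and $\Gamma_i'(x_i) = \Gamma_i(x_i)\,e^{\lambda(x_i)}$. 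Substituting these back into the constraint yields
\begin{equation*}
e^{2\lambda(x_i)} = \frac{\sum_{x} \Gamma_{ij}(x_i, x)}{\Gamma_i(x_i)},
\end{equation*}
so that $e^{\lambda(x_i)}$ is exactly the square root appearing in the statement, and plugging it into the two multiplicative forms recovers the claimed updates. The square root --- the one mildly nontrivial feature --- arises because a single multiplier simultaneously scales the joint block down and the marginal block up by reciprocal factors, so the feasibility correction is split evenly between them; case \ref{proj ji} is identical after interchanging the row and column sums of $\Gamma_{ij}$.

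For the normalization cases \ref{proj i} and \ref{proj j}, the two constraints $\1^\top \Gamma_i' = 1$ and $\1^\top \Gamma_{ij}' \1 = 1$ decouple across the two blocks, and the same Lagrangian calculation produces a single scalar multiplier per block whose stationarity condition forces $\Gamma'$ to be a constant rescaling of $\Gamma$; the constraint then fixes the constant to be the reciprocal of the current total mass, giving exactly the stated renormalizations. The only points needing care --- and the closest thing to an obstacle --- are checking that the stationary point remains strictly positive, so that the $\log$ is well defined and no positivity constraint is active (this holds because $\Gamma > 0$ and the factors $e^{\pm\lambda}$ are positive), and invoking strict convexity of the generalized KL divergence to certify that the unique stationary point is the global minimizer.
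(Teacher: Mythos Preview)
Your proposal is correct and follows essentially the same Lagrangian/KKT route as the paper: form the Lagrangian, obtain the multiplicative primal forms $\Gamma_{ij}' = \Gamma_{ij}e^{-\lambda(x_i)}$, $\Gamma_i' = \Gamma_i e^{\lambda(x_i)}$, and then solve for the multiplier (you do this by substituting into the primal constraint, the paper by substituting into the Lagrangian and maximizing the resulting dual, which yields the identical $\lambda(x_i) = \tfrac12\log\big(\sum_{x}\Gamma_{ij}(x_i,x)/\Gamma_i(x_i)\big)$). The normalization cases are handled the same way in both, and your added remarks on positivity and strict convexity simply make explicit what the paper leaves implicit.
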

\vspace{-.3cm}

\begin{figure}
	\begin{algorithm}[H]
		\caption{  EMP-cyclic $(C, \eta, \epsilon)$} \label{alg1}
		\begin{algorithmic}[1]
			\STATE $\Gamma \leftarrow \textsc{Normalize}(\exp({-\eta C}))$
			\STATE $k \leftarrow 1$
			\WHILE{ 
				$\max_{ij} \left\{ \begin{array}{lr}\max\{ \|\Gamma_{ij}^{(k)}\1 - \Gamma_i^{(k)} \|_1,\\   \|(\Gamma_{ij}^{(k)})^\top\1 - \Gamma_j^{(k)}\|_1 \} \end{array} \right\} \geq \epsilon$}
			
			\STATE $\Gamma \leftarrow \Gamma^{(k)}$
			\FOR{$ij \in \EE$}
			\STATE $\Gamma \leftarrow (\P_{\X_{ij, j}} \circ \P_{\X_{ij \rightarrow j}} \circ \P_{\X_{ij, i}}\circ \P_{\X_{ij\rightarrow i}}) (\Gamma)$

			\ENDFOR
			
			\STATE $\Gamma^{(k + 1)} \leftarrow \Gamma$ \STATE $k \leftarrow k+ 1$
			\ENDWHILE
			\RETURN $\textsc{round}(\Gamma^{(k)})$
		\end{algorithmic}
	\end{algorithm}
\vspace{-.5cm}
	\caption{The EMP-cyclic algorithm \citep{ravikumar2010message} projects on all edges in order until the constraints are satisfied up to $\epsilon$ in $l_1$ distance. The operator $\circ$ denotes the composition of the projection operations.}
	\vspace{-.5cm}
\end{figure}
These update rules are similar to a number of algorithms throughout the literature on LP relaxations. Notably, they can be viewed as a smoothed version of MSD \citep{werner2007linear, kovalevsky1975diffusion} in that the updates enforce agreement between variables on the edges and vertices. Nearly identical smoothed updates were also initially proposed by \cite{ravikumar2010message}. As in MSD, it is common for message passing schemes derived from LP relaxations to operate on dual objective instead. We presented the primal view here as the Bregman projections lend semantic meaning to the updates and ultimately the stopping conditions in the algorithms. An equivalent dual view is presented in Appendix~\ref{sec::dual}.


Based on these update rules, we formally outline the algorithms we wish to analyze, which we call \textit{edge-based message passing} (EMP) for convenience.  
We consider two variants: EMP-cyclic (Algorithmic~\ref{alg1}), which cyclically applies the updates to each edge in each iteration and EMP-greedy (Algorithmic~\ref{alg2}), which applies a single projection update to only the edge with the greatest constraint violation in each iteration. We emphasize that these algorithms are not fundamentally new, but our analysis in the next section is our main contribution. EMP-cyclic \textit{is} the Bregman method, almost exactly the inner loop proposed by \cite{ravikumar2010message}.
In both variants, $\Gamma^{(1)}$ is defined as the normalized value of $\exp(-\eta C)$. 
The \textsc{GreedyEdge} operation in EMP-greedy is defined as
\begin{align*}
{\small
 \textsc{GreedyEdge}(\Gamma)  
 = \argmax_{ij \in \EE} \left\{ \begin{array}{lr} \max\{ \| \Gamma_{ij}^{(k)}\1 - \Gamma_i^{(k)}\|_1, \\  \| (\Gamma_{ij}^{(k)})^\top\1 - \Gamma_j^{(k)}\|_1 \} \end{array} \right\}
 }
\end{align*}
These procedures are then repeated again until the stopping criterion is met, which is that $\Gamma^{(k)}$ is $\epsilon$-close to satisfying the constraint that the joint distributions sum to the marginals for all edges. Both algorithms also conclude with a rounding operation.
Any fixed point of EMP must correspond to an optimal $\Gamma_\eta^*$ (see details in appendix). Computationally, EMP-greedy requires a search over the edges to identify the greatest constraint violation, which can be efficiently implemented using a max-heap \citep{nutini2015coordinate}.

\begin{figure}
	\begin{algorithm}[H]
		\caption{  EMP-greedy $(C, \eta, \epsilon)$} \label{alg2}
		\begin{algorithmic}[1]
			\STATE $\Gamma \leftarrow \textsc{Normalize}(\exp({-\eta C}))$
			\STATE $k \leftarrow 1$
			\WHILE{ 
				$\max_{ij} \left\{ \begin{array}{lr}\max\{ \|\Gamma_{ij}^{(k)}\1 - \Gamma_i^{(k)} \|_1,\\   \|(\Gamma_{ij}^{(k)})^\top\1 - \Gamma_j^{(k)}\|_1 \} \end{array} \right\} \geq \epsilon$}
			
			\STATE $ij \leftarrow \textsc{GreedyEdge}(\Gamma^{(k)})$
			\IF{$\|\Gamma_{ij}^{(k)}\1 - \Gamma_i^{(k)} \|_1 > \|(\Gamma_{ij}^{(k)})^\top\1 - \Gamma_j^{(k)}\|_1$}
			\STATE $\Gamma^{(k+1)}\leftarrow (\P_{\X_{ij, i}}\circ \P_{\X_{ij\rightarrow i}}) (\Gamma^{(k)})$
			\ELSE
			\STATE $\Gamma^{(k+1)}\leftarrow  (\P_{\X_{ij, j}} \circ \P_{\X_{ij \rightarrow j}}) (\Gamma^{(k)})$
			\ENDIF
			
			\STATE $k \leftarrow k+ 1$
			\ENDWHILE
			\RETURN $\textsc{round}(\Gamma^{(k)})$
		\end{algorithmic}
	\end{algorithm}
	\vspace{-.5cm}
	\caption{The EMP-greedy algorithm selects the edge and direction with the greatest constraint violation and projects until all constraints are satisfied up to $\epsilon$ in $l_1$ distance.}
	\vspace{-.5cm}
\end{figure}

\section{THEORETICAL ANALYSIS}\label{section::convergence_analysis}

\begin{figure*}

\begin{mdframed}
\begin{align}\label{eq:lyap}
\begin{aligned}
 L(\lambda, \xi) 
& = -  \textstyle \sum_{ij \in \EE} \sum_{x_i, x_j \in \chi} \exp\left(- \eta C_{ij}(x_i, x_j) - \lambda_{ij}(x_i) - \lambda_{ji}(x_j) - \xi_{ij}\right) \\
&  \quad -  \textstyle \sum_{i \in \V} \sum_{x \in \chi} \exp\left( -\eta C_{i}(x) - \xi_i +  \sum_{j \in N_r(i)} \lambda_{ij}(x) + \sum_{j \in N_c(i)} \lambda_{ji}(x) \right) \\
& \quad  -  \textstyle \sum_{ij \in \EE} \xi_{ij} - \sum_{i \in \V} \xi_i + \sum_{ij \in \EE} \sum_{x_i, x_j \in \chi} \exp(-\eta C_{ij}(x_i, x_j)) + \sum_{i}\sum_{x \in \chi} \exp(-\eta C_{i}(x))
\end{aligned}
\end{align}
\vspace{-.5cm}
\caption{The proposed Lyapunov function. $N_r(i)$ denotes the set of neighboring vertices of $i$ where row consistency is enforced. $N_c(i)$ is the same for column consistency. The Lyapunov function $L$ can be derived from the dual objective of (\ref{eq:proj obj}). A full derivation is provided in the appendix.}
\label{fig:lyap}
\end{mdframed}
\vspace{-.5cm}
\end{figure*}
We now present our main contribution, a theoretical analysis of EMP-cyclic and EMP-greedy. This result combines two aspects. First, we present a convergence guarantee on the number of iterations sufficient to solve (\ref{eq:proj obj}), satisfying the $\LL_2$ constraints with $\epsilon>0$ error in $l_1$ distance. We note that, in finite iterations, the pseudo-marginals of EMP are not primal feasible in general due to this $\epsilon$-error.
We then combine this result with our guarantee on the approximation error in Theorem~\ref{prop:approximation error} to show a bound on the number of iterations sufficient to recover the true integral MAP assignment by rounding, assuming the LP is tight and the solution is unique. This holds with sufficient iterations and a sufficiently large regularization constant \textit{even though} the pseudo-marginals may not be primal feasible. We emphasize that these theorems are a departure from usual convergence rates in the literature \citep{meshi2012convergence,meshi2015smooth}. Prior work has guaranteed convergence in objective value to the optimum of the regularized objective (\ref{eq:proj obj}), making it unclear whether the optimal MAP assignment can be recovered, e.g. by rounding. We address this ambiguity in our results.  

We begin with the upper bound iterations to obtain $\epsilon$-close solutions, which is the result of two facts which we show. The first is that the updates in Proposition~\ref{theorem:projections} monotonically improve a Lyapunov (potential) function by an amount proportional to the constraint violation as measured via the Hellinger distance. The second is that the difference between the initial and optimal values of the Lyapunov function is bounded.

Let $\deg({\G})$ denote the maximum degree of graph $\G$ and define:
{\small
\begin{align*}
     S
    & \define \sum_{ij \in \mathcal{E}} \left[  \log \sum_{x_i, x_j \in \chi}   e^{    -\eta C_{ij}(x_i,x_j)}  + \sum_{x_i, x_j \in \chi}   \frac{\eta}{d^2} C_{ij}(x_i, x_j) \right] \\
       & + \sum_{i \in \mathcal{V}} \left[ \log \sum_{x \in \chi} e^{ -\eta C_i(x)}    +  \sum_{x \in \chi } \frac{\eta}{d} C_{i}(x) \right].
\end{align*}
}
\begin{theorem}\label{theorem:convergence}
	For any $\epsilon > 0$, EMP is guaranteed to satisfy $\|\Gamma_{ij}\1 - \Gamma_i\|_1 < \epsilon$ and $\|\Gamma_{ij}^\top\1 - \Gamma_j\|_1 < \epsilon$ for all $ij \in \EE$ in $\ceil{\frac{4\mathcal{S}_0(\deg(\G) +1)}{\epsilon^{2}}}$ iterations for EMP-cyclic and $\ceil{\frac{4 \mathcal S_0}{\epsilon^2}}$ iterations for EMP-greedy.
\end{theorem}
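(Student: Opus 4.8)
The plan is to run a Lyapunov (potential-function) argument on $L(\lambda,\xi)$ of Figure~\ref{fig:lyap}, which is the concave dual of (\ref{eq:proj obj}). The four maps of Proposition~\ref{theorem:projections} are block-coordinate maximizations of $L$, so every EMP step increases $L$ and any fixed point maximizes it (hence equals an optimal $\Gamma_\eta^*$). The proof then reduces to two quantitative facts: (i) a consistency-plus-normalization projection raises $L$ by at least the squared $l_2$ distance between the square roots of the two marginals it reconciles, and (ii) the total room for improvement, $\sup L - L(\lambda^{(1)},\xi^{(1)})$ measured from the initialization $\Gamma^{(1)}=\textsc{Normalize}(\exp(-\eta C))$, is at most $S$ (written $\mathcal S_0$ in the statement). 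Dividing the admissible gain by a per-step gain lower bound produces both iteration counts.

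For (i) I would plug the closed forms of Proposition~\ref{theorem:projections} into $L$. The left update replaces $\Gamma_i$ and the row marginal $\Gamma_{ij}\1$ by their normalized geometric mean $\propto \sqrt{\Gamma_i\,(\Gamma_{ij}\1)}$, and a direct computation gives an increase of exactly $\sum_x (\sqrt{\Gamma_i(x)} - \sqrt{(\Gamma_{ij}\1)(x)})^2$, i.e.\ twice a squared Hellinger distance between the two distributions being forced to agree. To convert this into the $l_1$ quantity that drives the stopping rule I use the elementary bound $\|p-q\|_1 \le 2\,\|\sqrt p - \sqrt q\|_2$ (Cauchy--Schwarz together with $\sum_x(\sqrt{p_x}+\sqrt{q_x})^2 \le 4$), so each step gains at least $\tfrac14\|\Gamma_i - \Gamma_{ij}\1\|_1^2$. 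In EMP-greedy the processed edge/direction is the most violated one, so while the stopping test fails its violation is $\ge \epsilon$ and each iteration gains at least $\epsilon^2/4$; with (ii) this yields at most $\lceil 4S/\epsilon^2\rceil$ iterations.

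The cyclic case is the crux and the step I expect to fight hardest, because it is responsible for the extra factor $\deg(\G)+1$. Within a full pass the row marginal of the worst edge $ij$ may already have been partially reconciled by the time it is reached, since the shared vertex marginal $\Gamma_i$ is moved by every incident-edge projection and by the normalization of $i$; hence I cannot claim its violation is still $\ge\epsilon$ when processed. Instead I would \emph{charge} the start-of-pass discrepancy $\|\Gamma_{ij}^{(0)}\1 - \Gamma_i^{(0)}\|_1 \ge \epsilon$ to the updates that erase it: the projection of edge $ij$ itself reconciles some amount $a_0$, and the remaining $\ge \epsilon - a_0$ must be accounted for by the $l_1$ displacement of $\Gamma_i$, which (since a geometric-mean update moves a marginal by no more than the discrepancy it reconciles) is at most the sum of the reconciled amounts $a_1,\dots$ of the other updates touching $i$. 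There are at most $\deg(i)+1 \le \deg(\G)+1$ such terms and $\sum_t a_t \ge \epsilon$, so, each gaining at least $\tfrac14 a_t^2$, the inequality $\sum_t a_t^2 \ge (\sum_t a_t)^2/(\deg(\G)+1)$ gives a per-pass gain of at least $\epsilon^2/\big(4(\deg(\G)+1)\big)$. Combined with (ii) this is the claimed $\lceil 4S(\deg(\G)+1)/\epsilon^2\rceil$. The decisive point is that the linear (rather than quadratic) dependence on the degree comes from this Cauchy--Schwarz sum-of-squares bound, not from a pigeonhole on the single largest displacement.

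Finally, for (ii) I would evaluate $L$ directly. At the initialization the dual variables $\lambda$ vanish ($\Gamma^{(1)}\propto\exp(-\eta C)$), collapsing the first two sums of $L$ into the normalizers $\log\sum_x e^{-\eta C_{\bullet}}$, while $\sup L$ is attained at feasibility and bounded using the linear $\tfrac{\eta}{d}C$ and $\tfrac{\eta}{d^2}C$ offsets that appear in $S$; assembling these matches $\sup L - L(\lambda^{(1)},\xi^{(1)}) \le S$. The only genuinely delicate pieces are the exact-constant bookkeeping in (i) (so that the Hellinger identity and the $\|\cdot\|_1$ bound combine to exactly $\tfrac14$) and the displacement-versus-discrepancy estimate for the normalization steps in the charging argument; everything else is a mechanical substitution.
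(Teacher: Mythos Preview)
Your proposal is correct and matches the paper's approach essentially step for step: the same Lyapunov function, the same Hellinger-squared per-projection gain (Lemma~\ref{lemma: improvement}), the same $l_1$-to-Hellinger conversion, and the same triangle-inequality/Cauchy--Schwarz charging argument for the cyclic variant (the paper formalizes your ``a geometric-mean update moves a marginal by no more than the discrepancy it reconciles'' as its Lemmas~\ref{lemma:tv worse} and~\ref{lemma:tv worse joint}). The only slip is notational: $\mathcal S_0$ in the statement is $\min(\|\eta C/d + \exp(-\eta C)\|_1,\, S)$, not just $S$; the paper obtains the first term of that minimum by evaluating the primal objective at the uniform distribution, which is an easy addition to your step~(ii).
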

Here, $\mathcal{S}_0 = \min( \|\eta C / d + \exp({-\eta C})\|_1, S)$. In this theorem, we give our guarantee in terms of $l_1$ distance rather than function value convergence. As we will see, this is significant, allowing us to relate this result to Theorem~\ref{prop:approximation error} in order to derive the main result. The proof is similar in style to \cite{altschuler2017near}.
We leave the full proof for EMP-cyclic for the appendix due to a need to handle tedious edge cases, but we state several intermediate results and sketch the proof for EMP-greedy for intuition as it reveals possibly how similar message passing algorithms can be analyzed. We first introduce a Lyapunov function written in terms of dual variables $(\lambda, \xi)$, indexed by the edges and vertices to which they belong in $\LL_2$. We denote  the iteration-indexed dual variables as $(\lambda^{(k)}, \xi^{(k)})$. For a given edge $ij \in \EE$, constraints enforcing row and column consistency correspond to $\lambda_{ij}, \lambda_{ji} \in \R^m$, respectively. Normalizing constraints correspond to $\xi_i, \xi_j, \xi_{ij}\in \R$.  The Lyapunov function, $L(\lambda, \xi)$, is shown in Figure \ref{fig:lyap}.

We note that maximizing $L$ over $(\lambda, \xi)$ satisfies all constraints and yields the solution to (\ref{eq:proj obj}) by  first-order optimality conditions. We now present a result that establishes the monotone improvement in $L$ due to the updates in Proposition~\ref{theorem:projections}.
\begin{lemma}\label{lemma: improvement}
For a given edge $ij \in \EE$, let $\Gamma'$ and $(\lambda', \xi')$ denote the updated primal and dual variables after a projection from one of \ref{proj ij}--\ref{proj j} in Proposition~\ref{theorem:projections}. We have the following improvements on $L$. If $\Gamma'$ is equal to:
\vspace{-.3cm}
{\small
\begin{enumerate}[label=(\alph*)]
    \item $\P_{\X_{ij \rightarrow i}} (\Gamma)$, then $L(\lambda', \xi')- L(\lambda, \xi) = 2h^2(\Gamma_{ij} \1, \Gamma_{i})$
    \item $\P_{\X_{ij,i}} (\Gamma)$, then $L(\lambda', \xi')- L(\lambda, \xi) \geq 0$
    \item $\P_{\X_{ij \rightarrow j}} (\Gamma)$, then $L(\lambda', \xi')- L(\lambda, \xi) = 2h^2(\Gamma_{ij}^\top \1, \Gamma_j)$
    \item $\P_{\X_{ij,j}} (\Gamma)$, then $L(\lambda', \xi')- L(\lambda, \xi) \geq 0$.
\end{enumerate}}
\end{lemma}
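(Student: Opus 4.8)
The plan is to exploit the fact that the Lyapunov function $L$ in \eqref{eq:lyap} is the dual objective of the Bregman projection (\ref{eq:proj obj}), with the primal iterate recovered from the duals through the Gibbs parameterization $\Gamma_{ij}(x_i,x_j) = \exp\!\big(-\eta C_{ij}(x_i,x_j) - \lambda_{ij}(x_i) - \lambda_{ji}(x_j) - \xi_{ij}\big)$ and $\Gamma_i(x) = \exp\!\big(-\eta C_i(x) - \xi_i + \sum_{j \in N_r(i)} \lambda_{ij}(x) + \sum_{j \in N_c(i)}\lambda_{ji}(x)\big)$. Under this parameterization the first two lines of $L$ collapse to the edge and vertex masses and $L$ takes the compact form $L(\lambda,\xi) = -\sum_{ij\in\EE}\|\Gamma_{ij}\|_1 - \sum_{i\in\V}\|\Gamma_i\|_1 - \sum_{ij\in\EE}\xi_{ij} - \sum_{i\in\V}\xi_i + \mathrm{const}$, where the constant is independent of $(\lambda,\xi)$. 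The first step is therefore to verify that each closed-form update in Proposition~\ref{theorem:projections} is exactly a block update of one group of dual coordinates, so that I know precisely which terms of $L$ move.

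For the consistency projection (a) I would check that the multiplicative rule $\Gamma'_{ij}(x_i,x_j) = \Gamma_{ij}(x_i,x_j)\sqrt{\Gamma_i(x_i)/(\Gamma_{ij}\1)(x_i)}$ corresponds to the shift $\lambda_{ij}(x_i) \mapsto \lambda_{ij}(x_i) - \tfrac12\log\!\big(\Gamma_i(x_i)/(\Gamma_{ij}\1)(x_i)\big)$, leaving every other dual coordinate—in particular all of the $\xi$'s—untouched. Consequently only the edge-$ij$ mass $\|\Gamma_{ij}\|_1$ and the vertex-$i$ mass $\|\Gamma_i\|_1$ change. Writing $r = \Gamma_{ij}\1$ for the row sums, a short computation shows that both updated masses equal $\sum_{x}\sqrt{\Gamma_i(x)\,r(x)}$, so $L' - L = \sum_x \big(r(x) + \Gamma_i(x) - 2\sqrt{\Gamma_i(x)\, r(x)}\big) = \|\sqrt r - \sqrt{\Gamma_i}\|_2^2 = 2h^2(\Gamma_{ij}\1, \Gamma_i)$, which is exactly claim (a). Claim (c) is identical after exchanging rows for columns and $i$ for $j$.

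For the normalization projection (b), the update $\Gamma_i \mapsto \Gamma_i/\|\Gamma_i\|_1$, $\Gamma_{ij}\mapsto \Gamma_{ij}/\|\Gamma_{ij}\|_1$ corresponds to $\xi_i \mapsto \xi_i + \log\|\Gamma_i\|_1$ and $\xi_{ij}\mapsto \xi_{ij} + \log\|\Gamma_{ij}\|_1$, and it moves only the vertex-$i$ mass, the edge-$ij$ mass, and the linear terms $-\xi_i - \xi_{ij}$. Setting $a = \|\Gamma_i\|_1$ and $b = \|\Gamma_{ij}\|_1$, these contributions combine to $L' - L = (a - 1 - \log a) + (b - 1 - \log b)$, which is nonnegative by the elementary inequality $t - 1 - \log t \geq 0$ for $t>0$ (equality iff $t=1$); claim (d) follows symmetrically. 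Equivalently, one may simply note that each normalization projection is exact coordinate maximization of the concave $L$ over the decoupled scalars $\xi_i,\xi_{ij}$ and hence cannot decrease $L$.

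The main obstacle is the bookkeeping in the first step: pinning down the exact primal–dual correspondence and confirming that each projection perturbs only the advertised block of dual variables, and therefore only the advertised terms of $L$. Once this is settled the consistency cases telescope cleanly into a squared $\ell_2$ distance of square roots—i.e.\ a Hellinger distance—while the normalization cases reduce to the scalar inequality $t-1-\log t\ge 0$. A minor point to track is the shared coordinate $\xi_{ij}$, which appears in both the left- and right-normalization constraints; since it enters $L$ only through the single edge-$ij$ term and its own linear term, the computation above is unaffected.
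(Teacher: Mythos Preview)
Your proposal is correct and follows essentially the same route as the paper: both arguments use the Gibbs parameterization to reduce $L'-L$ to the change in the edge/vertex masses (plus the linear $\xi$ terms for normalization), obtain $\|\sqrt{\Gamma_{ij}\1}-\sqrt{\Gamma_i}\|_2^2$ for the consistency projections, and invoke $t-1-\log t\ge 0$ for the normalization projections. Your compact rewriting of $L$ and explicit identification of which dual block moves is slightly tidier bookkeeping, but the substance is the same.
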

\vspace{-.3cm}
This result shows that $L$ improves monotonically after each of the four updates in Proposition~\ref{theorem:projections}. 
Furthermore, at every update, $L$ improves by twice the squared Hellinger distance of the constraint violation between the joint and the marginals.

\begin{lemma}\label{lemma: max value} 
Let $\lambda^*$, $\xi^*$ denote the maximizers of $L$. The difference in function value between the optimal value of $L$ and the first iteration value is upper bounded
\begin{align*}
    L(\lambda^*, \xi^*) - L(\lambda^{(1)}, \xi^{(1)}) \leq \mathcal{S}_0.
\end{align*}
\end{lemma}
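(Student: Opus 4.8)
The plan is to evaluate $L$ in closed form at both $(\lambda^{(1)},\xi^{(1)})$ and $(\lambda^*,\xi^*)$, observe that everything except the linear terms in $\xi$ cancels, and then control the optimal normalization multipliers using feasibility of $\Gamma^*_\eta$. First I would identify the initial dual variables: since $\Gamma^{(1)}=\textsc{Normalize}(\exp(-\eta C))$ is formed with no consistency correction, it corresponds to $\lambda^{(1)}=0$ together with the log-partition normalizers $\xi_{ij}^{(1)}=\log\sum_{x_i,x_j}e^{-\eta C_{ij}(x_i,x_j)}$ and $\xi_i^{(1)}=\log\sum_x e^{-\eta C_i(x)}$, which one checks reproduce $\Gamma^{(1)}$. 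The key structural point is that the normalization constraints hold at \emph{both} points: at the start by construction of \textsc{Normalize}, and at the optimum because maximizing $L$ enforces the full set of $\LL_2$ constraints. Consequently every edge-exponential block of $L$ sums to $1$ (total $|\EE|$) and every vertex block sums to $1$ (total $|\V|$), while the two ``constant'' $\exp(-\eta C)$ terms do not depend on the dual variables at all. Subtracting, all of these cancel, leaving only the linear terms:
{\small
\begin{equation*}
L(\lambda^*,\xi^*)-L(\lambda^{(1)},\xi^{(1)})=\sum_{ij\in\EE}\big(\xi_{ij}^{(1)}-\xi_{ij}^*\big)+\sum_{i\in\V}\big(\xi_i^{(1)}-\xi_i^*\big).
\end{equation*}
}
Since $\sum_{ij}\xi_{ij}^{(1)}+\sum_i\xi_i^{(1)}$ is exactly the ``$\log\sum e$'' part of $S$, the claim $\le S$ reduces to the lower bound $\sum_{ij}\xi_{ij}^*+\sum_i\xi_i^*\ge-\tfrac{\eta}{d^2}\sum_{ij}\sum_{x_i,x_j}C_{ij}-\tfrac{\eta}{d}\sum_i\sum_x C_i$.

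To obtain this bound I would use the primal--dual stationarity relations defining $\Gamma^*_\eta$,
{\small
\begin{align*}
\log\Gamma_{ij}^*(x_i,x_j)&=-\eta C_{ij}(x_i,x_j)-\lambda_{ij}^*(x_i)-\lambda_{ji}^*(x_j)-\xi_{ij}^*,\\
\log\Gamma_i^*(x)&=-\eta C_i(x)-\xi_i^*+\textstyle\sum_{j\in N_r(i)}\lambda_{ij}^*(x)+\sum_{j\in N_c(i)}\lambda_{ji}^*(x),
\end{align*}
}
average the first over all $d^2$ pairs and the second over all $d$ states, solve for $\xi_{ij}^*$ and $\xi_i^*$, and sum. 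This produces the target expression $-\tfrac{\eta}{d^2}\sum C_{ij}-\tfrac{\eta}{d}\sum C_i$ plus two residual contributions: (i) the averaged log-probability terms $-\tfrac{1}{d^2}\sum\log\Gamma_{ij}^*$ and $-\tfrac1d\sum\log\Gamma_i^*$, and (ii) the averaged consistency multipliers. Term (i) is nonnegative because at the optimum each block of $\Gamma^*_\eta$ is a genuine normalized distribution, so its entries lie in $[0,1]$ and their logs are nonpositive; here the edge bound relies precisely on the \emph{redundant} normalization $\1^\top\Gamma_{ij}\1=1$ that the paper deliberately retained, guaranteeing $\Gamma_{ij}^*\le 1$ entrywise. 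Dropping term (i) only lowers the right-hand side and yields the desired inequality, hence $\le S$.

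The main obstacle, and the step I would track most carefully, is term (ii): the consistency multipliers must cancel exactly. Each $\lambda^*$ enters one edge relation with coefficient $-\tfrac1d$ and the relation of its incident vertex with coefficient $+\tfrac1d$, so the bookkeeping of the $N_r,N_c$ incidence must be matched entry by entry to confirm they telescope to zero; this is the conservation structure of the local-polytope dual, but it is easy to mismanage through the averaging. Once the $\le S$ bound is in hand, the complementary bound $\le\|\eta C/d+\exp(-\eta C)\|_1$ in the definition of $\mathcal S_0$ is routine: starting from the same reduced expression and the same lower bound on $\sum\xi^*$, I would bound each $\xi^{(1)}$ via $\log t\le t-1\le t$ to replace $\log\sum e^{-\eta C}$ by $\sum e^{-\eta C}$, and then use $|a|\ge a$ to recognize the result as an $\ell_1$ norm. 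Taking the minimum of the two bounds gives $\mathcal S_0$ and completes the proof.
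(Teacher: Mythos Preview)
Your argument for the $\leq S$ half is essentially the paper's: both reduce to the identity $L(\lambda^*,\xi^*)-L(\lambda^{(1)},\xi^{(1)})=\sum_{ij}(\xi_{ij}^{(1)}-\xi_{ij}^*)+\sum_i(\xi_i^{(1)}-\xi_i^*)$, plug in the log-partition values for $\xi^{(1)}$, and lower-bound $\sum\xi^*$ by summing the inequalities $\log\Gamma_{ij}^*\le 0$, $\log\Gamma_i^*\le 0$ after averaging so that the $\lambda^*$ contributions telescope. Your explicit tracking of the $1/d$ coefficients in that telescoping is in fact more careful than the paper, which simply says ``summing \ldots\ yields'' without noting that the edge relation must be averaged over $d^2$ states and the vertex relation over $d$ for the multipliers to match.

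For the $\leq\|\eta C/d+\exp(-\eta C)\|_1$ half your route differs. The paper does \emph{not} reuse the identity; instead it passes through the point $(\lambda,\xi)=(0,0)$, invokes Lemma~\ref{lemma: improvement} to get $L(0,0)\le L(\lambda^{(1)},\xi^{(1)})$, observes $L(0,0)=0$, and then bounds $L(\lambda^*,\xi^*)$ from above by weak duality, evaluating the primal $\D_\Phi(\cdot,\exp(-\eta C))$ at the feasible uniform point $\Gamma_{ij}=1/d^2$, $\Gamma_i=1/d$. Your approach---bounding $\xi^{(1)}$ via $\log t\le t$ on top of the same lower bound for $\sum\xi^*$---is more economical and avoids the extra appeal to Lemma~\ref{lemma: improvement}; the paper's primal argument, by contrast, makes the role of a concrete feasible point transparent. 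Both arrive at the intermediate quantity $\sum e^{-\eta C}+\tfrac{\eta}{d^2}\sum_{ij,x_i,x_j}C_{ij}+\tfrac{\eta}{d}\sum_{i,x}C_i$, and the final passage to the stated $\ell_1$ form with a uniform $1/d$ weight is a cosmetic step that is handled loosely in the paper as well (it quietly assumes $C_{ij}(0,0)=C_i(0)=0$).
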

Turning to Theorem~\ref{theorem:convergence}, the result is obtained by observing that as long as the constraints are violated by an amount $\epsilon > 0$ (i.e., the algorithm has not terminated), then the Lyapunov function must improve by a known positive amount at each iteration. We provide a proof sketch for EMP-greedy.

\begin{proof} [Proof Sketch of Theorem \ref{theorem:convergence} for EMP-greedy]
	We now show how to combine the results of Lemma~\ref{lemma: improvement} and Lemma~\ref{lemma: max value} to obtain Theorem~\ref{theorem:convergence}. Let $k^*$ be the first iteration such that the termination condition in Algorithm \ref{alg2} holds with respect to some $\epsilon > 0$. Then, for any $k$ satisfying $1 \leq k < k^*$, we have that $\textsc{GreedyEdge}(\Gamma)$ selects $ij$ such that either $\|\Gamma_{ij}\1 - \Gamma_i\|_1 \geq \epsilon$ or $\|\Gamma_{ij}^\top \1 - \Gamma_j\|_1 \geq \epsilon$.
	
	Without loss of generality, suppose $\|\Gamma_{ij}\1 - \Gamma_i\|_1 \geq \|\Gamma_{ij}^\top \1 - \Gamma_j\|_1$. Therefore, we have
	\begin{align*}
	\frac{\epsilon^2}{4} \leq \frac{1}{4}\|\Gamma_{ij}\1 - \Gamma_i\|_1^2 \leq 2h^2(\Gamma_{ij}\1, \Gamma_{i}),
	\end{align*}
	where again $h^2(\Gamma_{ij}\1, \Gamma_i)$ denotes the squared Hellinger distance and the last inequality is the Hellinger inequality. Since $\Gamma_{ij}$ and $\Gamma_i$ are normalized for each iteration, this inequality is valid. Thus, $L$ improves by $2h^2(\Gamma_{ij}\1, \Gamma_{i})$ when $\P_{\X_{ij \rightarrow i}}$ occurs and by a non-negative amount when $\P_{\X_{ij, i}}$ occurs by Lemma~\ref{lemma: improvement}. Therefore, we can guarantee improvement of at least $\frac{\epsilon^2}{4}$ each iteration. Since the optimality gap is at most $\mathcal S_0$ by Lemma~\ref{lemma: max value}, this means the algorithm must terminate in $\ceil{\frac{4 \mathcal S_0}{\epsilon^2}}$ iterations.\end{proof}

We now turn to our main theoretical result. We combine our approximation and iteration convergence guarantees to fully characterize the convergence of EMP for $\LL_2$ to the optimal MAP assignment when the relaxation is tight and the solution is unique.

\begin{theorem}\label{theorem:combined}
Let $\eta \geq \frac{2\log(16  n^2d^2 ) + 16|\mathcal{E}|d^2  }{\min(\Delta, \frac{1}{128})} $, and $\epsilon^{-1} > (25d\deg(\G) |\EE|)^2 \max \left( \eta \|C\|_\infty,68 \right)$. If $\LL_2$ is tight and $|\mathcal{V}^*_2| = 1$, the EMP  algorithm returns a MAP assignment after 
$\ceil{\frac{4\mathcal{S}_0(\deg(\G) +1)}{\epsilon^{2}}}$ iterations for EMP-cyclic and after $\ceil{\frac{4 \mathcal S_0}{\epsilon^2}}$ iterations for EMP-greedy.

\end{theorem}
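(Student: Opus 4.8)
The plan is to bound the $\ell_1$ distance between the returned iterate and the unique integral optimum $\Gamma^*$ by a triangle inequality,
\[
\|\Gamma^{(k)} - \Gamma^*\|_1 \le \|\Gamma^{(k)} - \Gamma^*_\eta\|_1 + \|\Gamma^*_\eta - \Gamma^*\|_1,
\]
and to show that the two hypotheses on $\eta$ and $\epsilon$ force the right-hand side strictly below the margin needed for vertex rounding to succeed. Since each $\Gamma^{(k)}_i$ is a normalized distribution, if the per-vertex error satisfies $\|\Gamma^{(k)}_i - \Gamma^*_i\|_1 < 1$ then $\argmax_x \Gamma^{(k)}_i(x) = \argmax_x \Gamma^*_i(x)$, so it suffices to drive the total distance below $1$. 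Because $\LL_2$ is tight and $|\mathcal{V}^*_2|=1$, the integral vertex $\Gamma^*$ is exactly the MAP assignment, so proving $\mathrm{round}(\Gamma^{(k)}) = \Gamma^*$ settles the claim.

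For the approximation term $\|\Gamma^*_\eta - \Gamma^*\|_1$ I would invoke Theorem~\ref{prop:approximation error} specialized to $m=2$, using the edge-based bounds on $\mathcal{R}_1$ and $\mathcal{R}_H$ noted after the theorem (hence the $|\EE|d^2$ appearing in the hypothesis on $\eta$). The cap $\min(\Delta,\tfrac{1}{128})$ in the denominator is precisely what sharpens that theorem's conclusion: re-running its proof with the enlarged $\eta$ drives the Corollary~9 bound of \cite{weed2018explicit} well below $\tfrac1{32}$, giving $\|\Gamma^*_\eta - \Gamma^*\|_1 \le \tfrac1{128}$ and leaving the remaining margin for the optimization error.

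The crux is the optimization term $\|\Gamma^{(k)} - \Gamma^*_\eta\|_1$, and the obstacle is that Theorem~\ref{theorem:convergence} only guarantees approximate feasibility (marginalization violations $<\epsilon$ on every edge), not $\ell_1$-proximity to $\Gamma^*_\eta$; moreover $\Gamma^{(k)}$ is itself infeasible. I would bridge this in two moves. First, perturb $\Gamma^{(k)}$ to a genuinely feasible $\bar\Gamma \in \LL_2$ by repairing each edge's marginalization constraint; since each of the $O(|\EE|)$ violations is at most $\epsilon$ and each vertex touches at most $\deg(\G)$ edges, this can be arranged with $\|\Gamma^{(k)} - \bar\Gamma\|_1 \lesssim d\deg(\G)|\EE|\,\epsilon$. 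Second, because the negative entropy is $1$-strongly convex with respect to $\|\cdot\|_1$ on each simplex block (Pinsker), the objective $f(\Gamma)=\<C,\Gamma\> - \tfrac1\eta H(\Gamma)$ is $\tfrac{1}{\eta B}$-strongly convex in the total $\ell_1$ norm over the $B=O(|\EE|+n)$ blocks; together with first-order optimality of $\Gamma^*_\eta$ over $\LL_2$ this yields $\|\bar\Gamma - \Gamma^*_\eta\|_1^2 \lesssim \eta B\,(f(\bar\Gamma) - f(\Gamma^*_\eta))$. It then remains to bound the objective gap by the size of the feasibility perturbation: the linear part contributes at most $\|C\|_\infty\|\bar\Gamma - \Gamma^{(k)}\|_1$, and the entropic part at most $\tfrac1\eta$ times a bounded-log factor, using that the iterates $\exp(-\eta C - \text{duals})$ stay away from the simplex boundary. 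Back-substituting produces a bound of the form $\|\Gamma^{(k)} - \Gamma^*_\eta\|_1 \le 25\,d\deg(\G)|\EE|\sqrt{\epsilon\max(\eta\|C\|_\infty,68)}$, and it is exactly this square-root dependence that dictates the squared prefactor $(25 d\deg(\G)|\EE|)^2$ in the hypothesis on $\epsilon^{-1}$.

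Finally I would assemble the pieces. Theorem~\ref{theorem:convergence} guarantees $\epsilon$-feasibility after $\lceil 4\mathcal{S}_0(\deg(\G)+1)/\epsilon^2\rceil$ iterations for EMP-cyclic and $\lceil 4\mathcal{S}_0/\epsilon^2\rceil$ for EMP-greedy; the stability bound above, under $\epsilon^{-1} > (25 d\deg(\G)|\EE|)^2\max(\eta\|C\|_\infty,68)$, makes $\|\Gamma^{(k)} - \Gamma^*_\eta\|_1$ fall below $1$ and, with the constants tuned, below the margin left after the $\tfrac1{128}$ approximation bound. The triangle inequality then gives $\|\Gamma^{(k)} - \Gamma^*\|_1 < 1$, so rounding each vertex marginal recovers $\Gamma^*$, the MAP assignment. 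I expect the feasibility-repair estimate and the boundary-control of the entropic objective gap to be the most delicate steps, since both must be made uniform in the graph parameters $d$, $\deg(\G)$, and $|\EE|$ to yield the stated explicit constants.
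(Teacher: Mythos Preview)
Your high-level architecture matches the paper: a triangle inequality through $\Gamma_\eta^*$, Theorem~\ref{prop:approximation error} for the approximation term, and a feasibility-repair plus strong-convexity/Pinsker step for the optimization term. But the critical link in your optimization-error bound has a gap. After repairing $\Gamma^{(k)}$ to a feasible $\bar\Gamma\in\LL_2$, you need to upper bound $f(\bar\Gamma)-f(\Gamma_\eta^*)$. You propose to control this by the size of the perturbation $\|\bar\Gamma-\Gamma^{(k)}\|_1$, but that only controls $f(\bar\Gamma)-f(\Gamma^{(k)})$; you give no argument for why $f(\Gamma^{(k)})-f(\Gamma_\eta^*)$ should be small. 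Nothing in Theorem~\ref{theorem:convergence} says the iterates are near-optimal in objective value for (\ref{eq:reg}). The paper closes this gap with an observation you do not mention: since every EMP iterate has the exponential form $\Gamma^{(k)}=\exp(-\eta C+\zeta^{(k)})$, it is the \emph{exact} minimizer of $\langle C,\cdot\rangle-\tfrac1\eta H(\cdot)$ over the \emph{slack polytope} $\LL_2^{\nu^{(k)}}$ defined by its own constraint violations $\nu^{(k)}$. One then also projects $\Gamma_\eta^*$ into $\LL_2^{\nu^{(k)}}$ to a nearby point $\Gamma_\eta^*(\nu^{(k)})$, giving $f(\Gamma^{(k)})\le f(\Gamma_\eta^*(\nu^{(k)}))$; the decomposition $f(\bar\Gamma)-f(\Gamma_\eta^*)=[f(\bar\Gamma)-f(\Gamma^{(k)})]+[f(\Gamma^{(k)})-f(\Gamma_\eta^*(\nu^{(k)}))]+[f(\Gamma_\eta^*(\nu^{(k)}))-f(\Gamma_\eta^*)]$ then has a nonpositive middle term and two ends controlled by perturbation size.

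There is a second, smaller gap in your boundary claim. You assert that ``the iterates $\exp(-\eta C-\text{duals})$ stay away from the simplex boundary'' to make the entropy Lipschitz, but no such uniform lower bound is established, and in fact coordinates of $\Gamma^{(k)}$ can be arbitrarily small. The paper handles this by building the repaired points $\bar\Gamma$ and $\Gamma_\eta^*(\nu^{(k)})$ as convex combinations with a near-uniform distribution so that every entry is at least $\tau$; the entropy is then $\log(1/\tau)$-Lipschitz on those constructed points (not on the raw iterate), and one optimizes $\tau\sim\epsilon/(|\EE| d^2)$ at the end. This is also the source of the $\log(1/\epsilon)$ that is absorbed via $\log(1/\epsilon)\le 2\epsilon^{-1/2}$, producing the $\sqrt{\epsilon}$ dependence you correctly anticipated.
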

When $C$ is integral, $\Delta \geq \frac{1}{2}$, yielding a bound of all known parameters.
The main technical challenge in producing this result is to relate the termination condition of EMP to the $l_1$ distance between $\Gamma^{(k)}$ and $\Gamma^*$ (the MAP assignment), as this may lie outside the polytope $\LL_2$. It does not suffice to provide convergence guarantees in function value as the goal of MAP inference is to produce integral assignments. The proof proceeds in two steps. First we show that $\mathbf{\mu}^{(k)}$ is the entropy-regularized solution to objective $C$ over a ``slack" polytope $\LL_2^{\nu^{(k)}}$. Where the slack vector $\nu^{(k)}$ corresponds to the constraint violations of $\mathbf{\mu}^{(k)}$. We use this characterization to ``project" $\mathbf{\mu}^{(k)}$ onto a nearby $\LL_2$ feasible point $\mathbf{\mu}^{(k)}(2)$. Second, we can use the properties of the primal objective to bound $\mathbf{\mu}^{(k)}(2)$ and $\mathbf{\mu}_\eta^*$. The proof is in the appendix. %

\section{NUMERICAL EXPERIMENTS}\label{section::experiments}

\begin{figure}
	\centering
	\includegraphics[width=.45\textwidth]{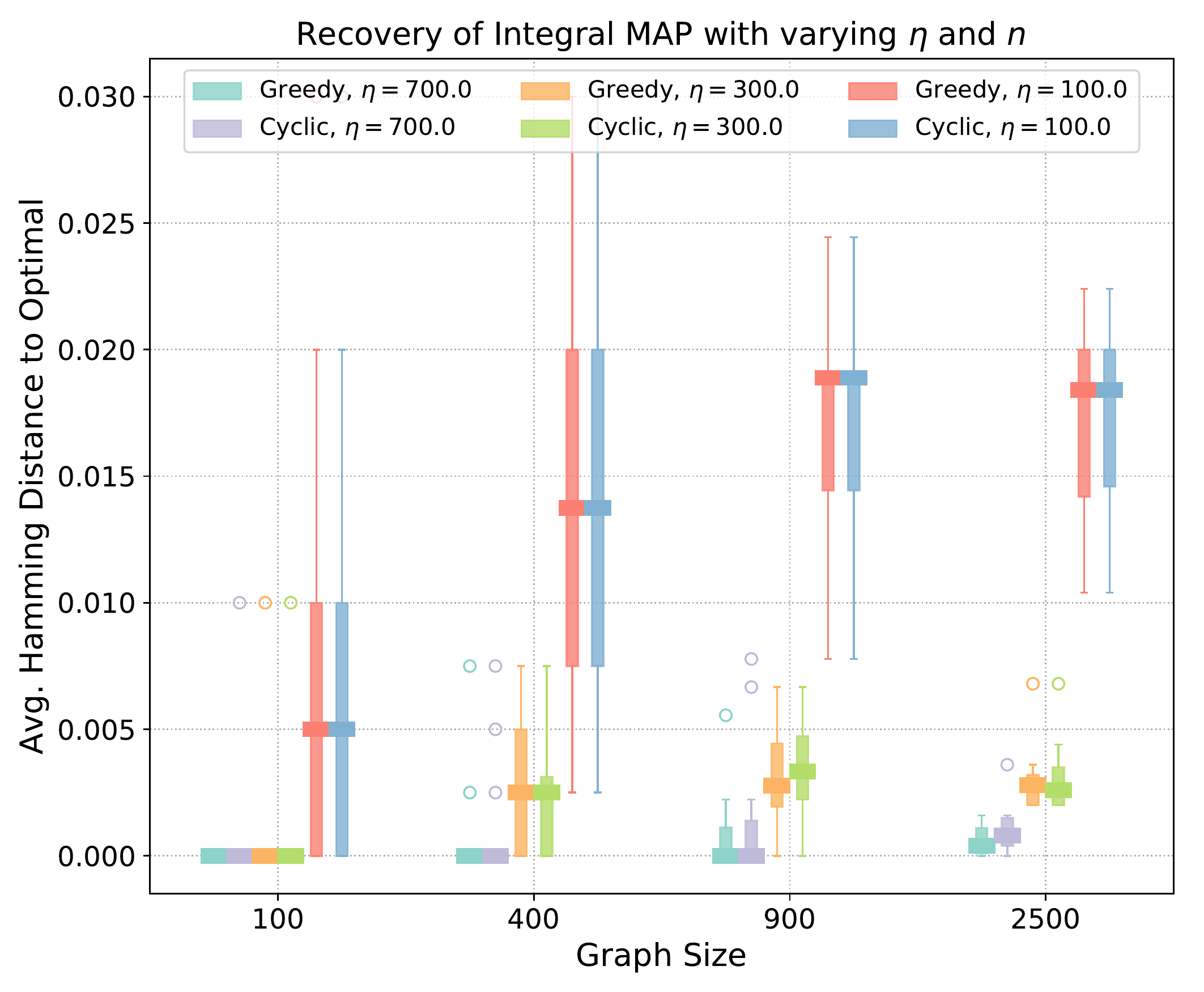}
	\caption{A box-plot showing the effect of graph size ($x$-axis) and regularization on the quality of rounded solutions for both algorithm variants after $80$ iterations. Thick horizontal bars indicate the median over 20 trials each. For large $\eta$ (cyan and purple), the true MAP is almost always recovered.}
	\label{fig:box}
\vspace{-.5cm}
\end{figure}

We illustrate our theoretical results in a practical application of the EMP algorithms. \cite{ravikumar2010message} already gave empirical evidence that the basic EMP-cyclic is competitive with standard solvers. Therefore, the objective of these experiments is to understand how graph and algorithm properties affect approximation (Theorem~\ref{prop:approximation error}) and convergence (Theorem~\ref{theorem:convergence}). We consider the family of multi-label Potts models \citep{wainwright2005map} with $d = 3$ labels on $\LL_2$. For each trial, the cost vector is $C_i(x_i) = \alpha_i(x_i)$, $\forall i, x_i$ and
\begin{align*}
    C_{ij}(x_i, x_j) & = \begin{cases}
        \beta_{ij} & x_i = x_j\\
        0 & \text{otherwise}
    \end{cases} \quad \forall ij, x_i, x_j
\end{align*}
where the parameters are random $\alpha_i(x_i) \sim\text{Unif}(-0.5, 0.5)$ and $\beta_{ij}\sim\text{Unif}\{-0.1, 0.1\}$. The graphs considered are structured as $\sqrt n \times \sqrt n$ grids \citep{globerson2008fixing, ravikumar2010message,erdogdu2017inference} and as Erd\H{o}s-R\'enyi random graphs with edge probability $p = \frac{1.1 \log n}{n}$. To evaluate recovery of the optimal MAP assignment, we first solved each graph with the ECOS LP solver \citep{domahidi2013ecos} and selected graphs that were tight. Solving the LP to find the ground-truth was the main computational bottleneck. Further details can be found in Appendix~\ref{section::experiment-details}.

\paragraph{Approximation} In Figure~\ref{fig:box}, we evaluate the effect of regularization and graph size on the quality of the nearly converged solution from EMP for over $80$ iterations on grids. The box-plots indicate that large choices of $\eta$ often yield the exact MAP solution (cyan and purple). Moderate choices still yield competitive solutions but not optimal for larger graphs (orange and green). Low choices generally give poor solutions with high spread for all graph sizes (red and blue).
\paragraph{Convergence} We then investigate the effects of regularization on convergence for both variants. Figure~\ref{fig:plot_curve} illustrates the distance of the rounded solution to the optimal MAP solution over projection steps on grids of size $n = 2500$. EMP-greedy converges sharply and varying regularization has less of an effect on its convergence rate. Finally, in Figure~\ref{fig:random}, we look at Erd\H{o}s-R\'enyi random graphs to observe the effect of the graph structure for both variants. We considered degree-limited random graphs with $\deg(\G) = 5$ and $\deg(\G) = 10$.
The figure shows convergence over projection steps for graphs of size $n = 400$. For both variants, the convergence rate deteriorates for higher degrees.

\begin{figure}
	\centering
	\includegraphics[width=.45\textwidth]{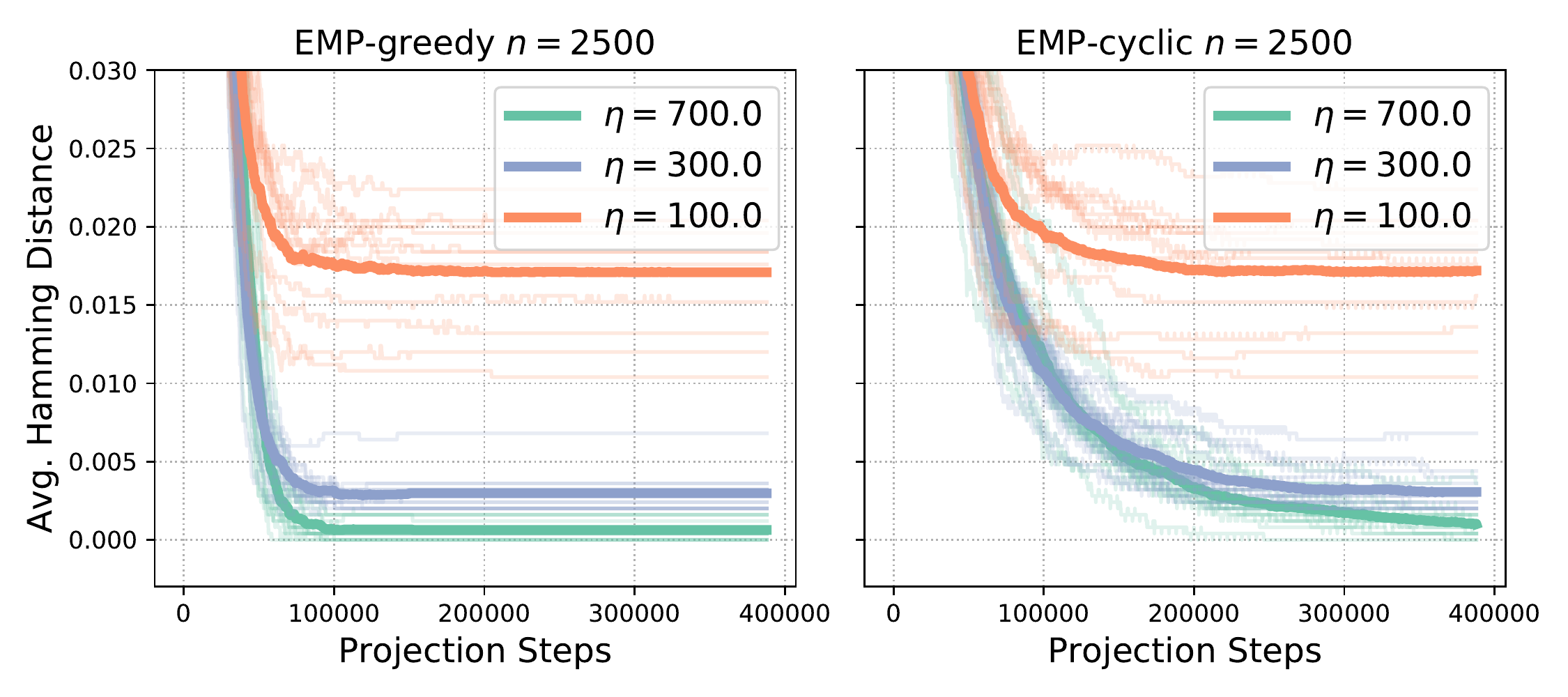}
	\caption{On grids of size $n = 2500$, convergence rates to the optimal MAP assignment of greedy and cyclic variants are shown. The  lines on each plot indicate choices of $\eta$.}
	\label{fig:plot_curve}
\vspace{-.4cm}
\end{figure}

\begin{figure}
	\centering
	\includegraphics[width=.43\textwidth]{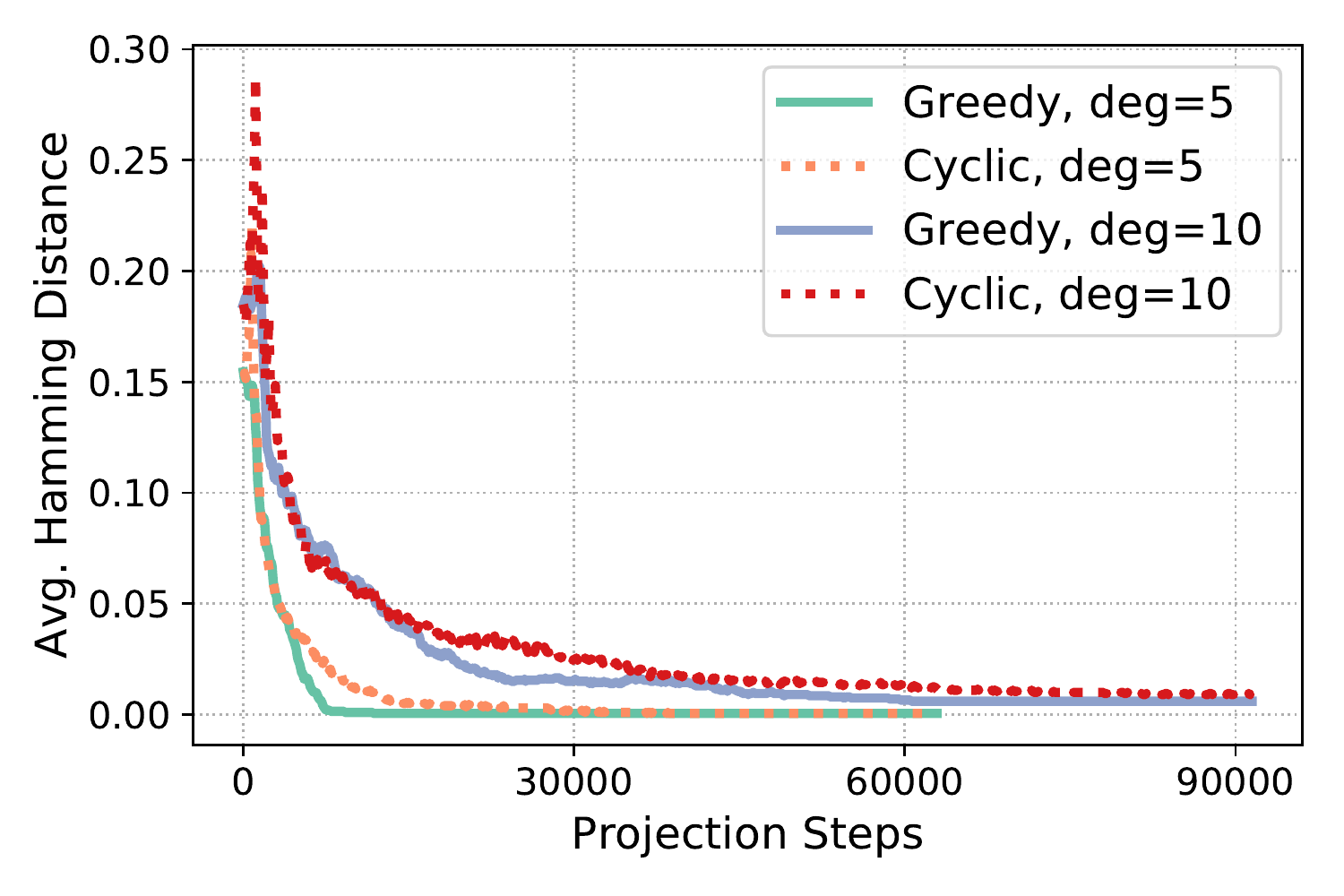}
	\caption{The algorithm variants on Erd\H{o}s-R\'enyi random graphs with $n = 400$, $\eta = 700.0$, and maximum degrees $\deg(G) = 5, 10$. The higher degree graphs (red and blue) take longer to converge to the optimal MAP assignment.}
	\label{fig:random}
	\vspace{-.5cm}

\end{figure}

\section{CONCLUSION}
In this paper, we investigated the approximation effects of entropy regularization on MAP inference objectives. We combined these approximation guarantees with a convergence analysis of an edge-based message passing algorithm that solves the regularized objective to derive guarantees on the number of iterations sufficient to recover the true MAP assignment. We also showed empirically the effect of regularization and graph propertise on both the approximation and convergence. In future work, we wish to extend the analyses and proof techniques to higher order polytopes and general block-coordinate minimization algorithms.
\pagebreak
\paragraph{Acknowledgements} 
We thank the anonymous reviewers and Marco Pavone for their invaluable feedback.
\bibliographystyle{plainnat}
\bibliography{sherali}
\raggedbottom
\pagebreak

\appendix

\onecolumn

\section{Bregman Projection Derivation}

The objective (\ref{eq:reg}) can be equivalently interpreted as a Bregman projection. This interpretation has been explored by \cite{ravikumar2010message} as a basis for proximal updates and also \cite{benamou2015iterative} for the optimal transport problem. Here, we review the transformation because it is central to the algorithm of \cite{ravikumar2010message}, upon which our main theoretical results are based.

By definition of the Bregman projection with respect to the negative entropy, $\Phi= -H$, we have 
\begin{align*}
\D_\Phi(\Gamma, \1) & =_+ \< \Gamma , \log \Gamma - \1 \> - \< \log \1, \Gamma - \1\> \\
& = -H(\Gamma)
\end{align*}
where $\1$ is a vector of ones of the same size as the marginal vector and $=_+$ denotes the two sides are equal up to a constant. Substituting this into (\ref{eq:reg}) and multiplying through by $\eta$ yields the objective:
\begin{align*}
    \min \quad \eta \< C, \Gamma\> +  \D_{\Phi} \left(\Gamma, \1 \right) \quad \st  \ \Gamma \in \LL_m.
\end{align*}
Note the similarity to a projected mirror descent update over $\LL_m$ starting from $\1$ \citep{nemirovsky1983problem,bubeck2015convex}.
Using this insight and performing a single gradient update in the dual, we can transform the problem into a single Bregman projection of the vector. The unprojected marginal vector $\Gamma'$ satisfies 
\begin{align*}
    \nabla \Phi(\Gamma') = \nabla \Phi(\1)  - \eta C,
\end{align*} 
where $\nabla \Phi(\Gamma) = -\nabla H(\Gamma) = \log \Gamma$ is the dual map and $(\nabla \Phi)^{-1}(\Gamma) = \nabla \Phi^* (\Gamma) = \exp(\Gamma)$ is the inverse dual map. We have $\Gamma' = \exp(-\eta C)$ and  the solution to the mirror descent update is $\P_{\LL_m}(\exp(-\eta C))$. Therefore it is sufficient to solve the following Bregman projection problem:
\begin{align*}
    \min \quad  \D_\Phi\left(\Gamma, \exp({-\eta C})\right) \quad \st \quad \Gamma \in \LL_m
\end{align*}

The projection, however, cannot be computed in closed form due to the complex geometry of $\LL_m$.  Sinkhorn-like algorithms such as those used in \cite{cuturi2013sinkhorn} are unavailable because the transportation polytopes $\U_d(\Gamma_i, \Gamma_j)$ are dependent on variables $\Gamma_i$ and $\Gamma_j$ which are also involved in the projection operation.

\section{Derivation of EMP Update Rules}

	We present the derivations of the update rules similar to \cite{ravikumar2010message} for a given edge $ij \in \EE$ based on the Bregman projections onto the individual constraint sets $\X_{ij \rightarrow i}$, $\X_{ij, i}$, $\X_{ij \rightarrow j}$, $\X_{ij, j}$. We refer the reader to \cite{ravikumar2010message} for the original algorithm and derivation.
	We derive only the first two projections; the last two can be found by exchanging the indices.
	
	\begin{itemize}
		\item [\ref{proj ij}]
		For the projection $\Gamma' = \P_{\X_{ij \rightarrow i}} (\Gamma)$, where
		\begin{align*}
		\X_{ij\rightarrow i} & = \{ \Gamma \ : \ \Gamma_{ij} \1  =  \Gamma_i\},
		\end{align*}
		there are no constraints on any edges or vertices other than $ij$ and $i$. Therefore,  $\forall k \neq i$, $\Gamma_k' = \Gamma_k$. Similarly, $\forall k\ell \neq ij$, $\Gamma_{k\ell}' = \Gamma_{k \ell}$.
		
		The Lagrangian of the projection is given in terms of primal variables $\Gamma$ and dual variables $\alpha$: 
		\begin{align*}
		\L(\Gamma', \alpha) & = \sum_{x_i, x_j} \Gamma'_{ij}(x_i, x_j) \left( \log \frac{\Gamma'_{ij}(x_i, x_j)}{\Gamma_{ij}(x_i, x_j)}- 1  \right) 
		+ \sum_{x_i} \Gamma'_i(x_i) \left( \log \frac{\Gamma'_i(x_i)}{ \Gamma_i(x_i)} - 1 \right) + \alpha^\top \left(\Gamma'_{ij}\1 - \Gamma'_i \right) \\
		& =\sum_{x_i, x_j} \Gamma'_{ij}(x_i, x_j) \left( \log \frac{\Gamma'_{ij}(x_i, x_j)}{\Gamma_{ij}(x_i, x_j)}- 1  + \alpha({x_i}) \right) 
		+ \sum_{x_i} \Gamma'_i(x_i) \left( \log \frac{\Gamma'_i(x_i)}{ \Gamma_i(x_i)} - 1 - \alpha(x_i)\right).
		\end{align*}
		By the first-order optimality condition, the primal solution in terms of the dual variables is
		\begin{align*}
		\Gamma_{ij}' (x_i, x_j) & = \Gamma_{ij} (x_i, x_j) e^{-\alpha(x_i)} \\
		\Gamma_{i}'(x_i) & = \Gamma_i (x_i) e^{\alpha(x_i)}.
		\end{align*}
		Substituting this solution back in to the Lagrangian, we have
		\begin{align*}
		\L(\alpha) = -\sum_{x_i, x_j} \Gamma_{ij}(x_i, x_j) e^{-\alpha(x_i)} - \sum_{x_i} \Gamma_i(x_i) e^{\alpha(x_i)}.
		\end{align*}
		Again, by the first-order optimality condition, the dual solution is
		\begin{align*}
		\alpha^*(x_i) = \frac{1}{2}\log \frac{\sum_{x_j} \Gamma_{ij}(x_i, x_j)}{\Gamma_i(x_i)}.
		\end{align*}
		Substituting this value for $\alpha^*$ into the primal solution yields the desired result.

		\item [\ref{proj i}] Again, for the projection onto 
		\begin{align*}
		\X_{ij,i} & = \{  \Gamma \ : \ \Gamma_i^\top \1 = 1, \ \1^\top \Gamma_{ij} \1= 1 \},
		\end{align*}
		only $\Gamma_i$ and $\Gamma_{ij}$ are affected. $\X_{ij, i}$
		enforces that the variables $\Gamma_{ij}$ and $\Gamma_i$ each sum to one. It is well known and easy to show that the Bregman projection with respect to the negative entropy is simply the $\Gamma_{ij}$ and $\Gamma_i$ normalized by their sums. This normalization can also be written as a multiplicative update of the same form by observing that
		\begin{align*}
		\Gamma'_{ij}(x_i, x_j) & = \Gamma'_{ij}(x_i, x_j) e^{-\xi_{ij}^*} \\
		\Gamma'_{i} (x_i) & = \Gamma'_{i}(x_i) e^{-\xi_i^*},
		\end{align*}
		where $\xi_{ij}^* = \log \sum_{x_i, x_j} \Gamma_{ij}(x_i, x_j)$ and $\xi_i^* = \log \sum_{x_i} \Gamma_i(x_i)$. Again, these can be derived via the Lagrangian.
	\end{itemize}

\section{Extensions of EMP}

\subsection{Dual EMP}\label{sec::dual}

We may also equivalently interpret the multiplicative updates in Algorithm~\ref{alg1} and Algorithm~\ref{alg2} as additive updates of the dual variables. The dual interpretation is consistent with past work in dual MAP algorithms \citep{sontag2011introduction} and may be more practical  to avoid numerical issues in implementation. Instead of tracking the primal variables $\Gamma$, we track a sum of the dual variables with $\zeta$ for each vertex and edge. Enforcing consistency between a given joint distribution and its marginals in \ref{proj ij} yields updated dual variable sums
\begin{align*}
\zeta'_{ij}(x_i, x_j)  \leftarrow \zeta_{ij}(x_i, x_j) - \alpha^*(x_i)  &  &
\zeta'_{i}(x_i)  \leftarrow \zeta_{i}(x_i) + \alpha^*(x_i),
\end{align*}
where again $\alpha^*(x_i) = \frac{1}{2}\log \frac{\sum_{x_j} \Gamma_{ij}(x_i, x_j)}{\Gamma_i(x_i)}$. The same is done for the vertex $j$ in \ref{proj ji} with indices exchanged. The normalization step in \ref{proj i} yields
\begin{align*}
\zeta'_{ij}(x_i, x_j)  \leftarrow \zeta_{ij}(x_i, x_j) - \xi^*_{ij} & &
\zeta'_i(x_i, x_j)  \leftarrow \zeta_i(x_i) - \xi^*_{i},
\end{align*} 
where $\xi_{ij}^* = \log \sum_{x_i, x_j} \Gamma_{ij}(x_i, x_j)$ and $\xi_i^* = \log \sum_{x_i} \Gamma_i(x_i)$. Again, the same is done for \ref{proj j}. The primal marginal vector is recovered with
\begin{align*}
\Gamma = \exp(-\eta C + \zeta ).
\end{align*}
We will later make explicit the dual formulation as it will aid in the theoretical analysis.

\subsection{Clique Constraints}

The version of EMP presented in the paper is for the $\LL_2$ local polytope, which enforces only pairwise consistency among the variables with edges, but this can be fairly easily extended. In this section, we discuss higher order pseudo-marginals and their constraints. Consider the polytope that enforces consistency on all subsets of $\V$ of size $k$ and below, denoted by $\C$. We use the notation of \cite{meshi2012convergence}. The constraint set is written as
\begin{align} \label{eq:lk polytope}
\LL_\C \define \left\{ \Gamma \geq 0 \: : \: \begin{array}{lr}
\Gamma_i \in \Sigma_m & \forall i \in \V  \\
\Gamma_i(x_i) = \sum_{x_{c \setminus i} } \Gamma_c & \forall x_i \in \chi, i \in c, c  \in \C,\\    
\end{array} \right\}.
\end{align}
where $x_{c \setminus i}$ denotes a marginalization over all variables except $i$. For convenience, we may also now account for higher-order interactions in the model itself:
\begin{align*}
\max \quad \sum_{c \in \C} \sum_{x_c \in \chi^k} \theta_{c}(x_c) \Gamma_c(x_c) + \sum_{i \in \V} \sum_{x_i \in \chi} \theta_i(x_i) \Gamma_i(x_i) \quad \st \quad \Gamma \in \LL_\C
\end{align*}
The projection operation in (\ref{eq:proj obj}) is the same for $C = -\theta$. Analogous update rules to Proposition~\ref{theorem:projections} can derived with exactly the same procedure. For a given subset $c \in \C$ and vertex $i$, we have that $\Gamma' = \P_{ij \rightarrow i} (\Gamma)$ constitutes the update
\begin{align*}
\Gamma_c'(x_c) = \Gamma_{c}(x_c) \sqrt{ \frac{ \Gamma_i(x_i) }{ \sum_{x_{c \setminus i }}\Gamma_{c}(x_c) } } \\
\Gamma_i'(x_i) = \Gamma_{c}(x_c) \sqrt{ \frac{ \sum_{x_{c \setminus i }}\Gamma_{c}(x_c) }{ \Gamma_i(x_i) } }. 
\end{align*}
The normalization updates are identical as well. As in the presented EMP algorithm, we can design greedy and cyclic algorithms around these update equations. The theoretical analysis in Section~\ref{section::convergence_analysis} will focus on the case with edges only. We leave the general analysis of $\LL_\C$ for future work.

\section{Omitted Proofs and Derivations from Section~\ref{section::convergence_analysis}}\label{appendix:convergence proofs}

\subsection{Derivation of the Lyapunov function (\ref{eq:lyap})}
For convenience, $L$ is restated here:
\begin{align}
\begin{aligned}
L(\lambda, \xi) & = - \sum_{ij \in \EE} \sum_{x_i, x_j \in \chi} \exp\left(- \eta C_{ij}(x_i, x_j) - \lambda_{ij}(x_i) - \lambda_{ji}(x_j) - \xi_{ij}\right) \\
& \quad - \sum_{i \in \V} \sum_{x \in \chi} \exp\left( -\eta C_{i}(x) - \xi_i +  \sum_{j \in N_r(i)} \lambda_{ij}(x) + \sum_{j \in N_c(i)} \lambda_{ji}(x) \right) \\
& \quad  - \sum_{ij \in \EE} \xi_{ij} - \sum_{i \in \V} \xi_i + \sum_{ij \in \EE} \sum_{x_i, x_j \in \chi} \exp(-\eta C_{ij}(x_i, x_j)) + \sum_{i}\sum_{x \in \chi} \exp(-\eta C_{i}(x)).
\end{aligned}
\end{align}
The Lagrangian of (\ref{eq:proj obj}) with primal variables $\Gamma$ and dual variables $(\lambda,\xi)$ can be written as
\begin{align*}
\mathcal L(\Gamma, \lambda, \xi)  & =  \D_\Phi(\Gamma, \exp(-\eta C)) + \sum_{ij} \left( \lambda_{ij}^\top ( \Gamma_{ij} \1 - \Gamma_i) + \lambda_{ji}^\top (\Gamma_{ij}^\top \1 - \Gamma_i)  \right)  \\
& \quad  +  \sum_{ij} \xi_{ij} (\1 ^\top \Gamma_{ij} \1 - 1) +  \sum_{i} \xi_i(\Gamma_i ^\top \1  - 1),
\end{align*}
where
\begin{align*}
\D_{\Phi}(\Gamma, \exp(-\eta C)) & = \sum_{ij}\sum_{x_i, x_j} \Gamma_{ij}(x_i, x_j) \left(\log \Gamma_{ij}(x_i, x_j) + \eta C_{ij}(x_i, x_j) - 1 \right) \\
& \quad + \sum_{i}\sum_{x} \Gamma_{i}(x) \left(\log \Gamma_{i}(x) + \eta C_{i}(x) - 1 \right) \\
& \quad + \sum_{ij} \sum_{x_i, x_i} \exp(-\eta C_{ij}(x_i, x_j)) + \sum_{i}\sum_{x} \exp(-\eta C_i(x)).
\end{align*}
The partial derivatives with respect to $\Gamma_{ij}(x_i, x_j)$ and $\Gamma_i(x)$ are given by
\begin{align*}
\frac{\partial \L}{\partial \Gamma_{ij}(x_i, x_j)} & = \log \Gamma_{ij}(x_i, x_j) + \eta C_{ij}(x_i, x_j) + \lambda_{ij}(x_i) + \lambda_{ji} (x_j) + \xi_{ij} \\
\frac{\partial \L}{\partial \Gamma_{i}(x)} & = \log \Gamma_{i}(x) + \eta C_{i}(x) + \xi_{i} - \sum_{j \in N_r(i)} \lambda_{ij}(x_i) + \sum_{j \in N_c(i)}\lambda_{ji} (x_j).
\end{align*}
Setting the derivatives to zero gives the solution $\Gamma$ in terms of the dual variables: 
\begin{align*}
\Gamma_{ij}(x_i, x_j) & = \exp\left(- \eta C_{ij}(x_i, x_j) - \lambda_{ij}(x_i) - \lambda_{ji}(x_j) - \xi_{ij}\right) \\
\Gamma_i(x) & = \exp\left( -\eta C_{i}(x) - \xi_i +  \sum_{j \in N_r(i)} \lambda_{ij}(x) + \sum_{j \in N_c(i)} \lambda_{ji}(x) \right).
\end{align*}
By substituting $\Gamma$ in $\L$, we obtain the Lyapunov function $L$.

\subsection{Proof of Lemma~\ref{lemma: improvement}}

In this section we prove Lemma \ref{lemma: improvement}. We restate the result for the reader's convenience. 

\begin{lemma}\label{lemma: appendix}
	For a given edge $ij \in \EE$, let $\Gamma'$ and $(\lambda', \xi')$ denote the updated primal and dual variables after a projection from one of \ref{proj ij}--\ref{proj j} in Proposition~\ref{theorem:projections}. We have the following improvements on $L$. If $\Gamma'$ is equal to:
	\begin{enumerate}[label=(\alph*)]
		\item $\P_{\X_{ij \rightarrow i}} (\Gamma)$, then $L(\lambda', \xi')- L(\lambda, \xi) = 2h^2(\Gamma_{ij} \1, \Gamma_{i})$
		\item $\P_{\X_{ij,i}} (\Gamma)$, then $L(\lambda', \xi')- L(\lambda, \xi) \geq 0$
		\item $\P_{\X_{ij \rightarrow j}} (\Gamma)$, then $L(\lambda', \xi')- L(\lambda, \xi) = 2h^2(\Gamma_{ij}^\top \1, \Gamma_j)$
		\item $\P_{\X_{ij,j}} (\Gamma)$, then $L(\lambda', \xi')- L(\lambda, \xi) \geq 0$.
	\end{enumerate}
\end{lemma}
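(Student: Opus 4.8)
The plan is to exploit the primal--dual correspondence used to build $L$ and reduce each projection's effect on $L$ to a short explicit computation. First I would substitute the stationarity relations $\Gamma_{ij}(x_i,x_j) = \exp(-\eta C_{ij}(x_i,x_j) - \lambda_{ij}(x_i) - \lambda_{ji}(x_j) - \xi_{ij})$ and $\Gamma_i(x) = \exp(-\eta C_i(x) - \xi_i + \sum_{j\in N_r(i)}\lambda_{ij}(x) + \sum_{j\in N_c(i)}\lambda_{ji}(x))$ back into the Lyapunov function, which collapses the two exponential sums and yields, along the trajectory of the algorithm, $L = -\sum_{ij}\norm{\Gamma_{ij}}_1 - \sum_i\norm{\Gamma_i}_1 - \sum_{ij}\xi_{ij} - \sum_i\xi_i + \mathrm{const}$, where the constant collects the $C$-only terms. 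The key observation I would make next is locality: each of the four projections touches only the single edge block $\Gamma_{ij}$ and one incident vertex block ($\Gamma_i$ for (a),(b) and $\Gamma_j$ for (c),(d)), and the consistency projections (a),(c) leave every $\xi$ fixed while the normalization projections (b),(d) leave every $\lambda$ fixed. Hence the increment $L(\lambda',\xi') - L(\lambda,\xi)$ reduces to the change in just the one or two affected $l_1$-norm terms, plus (for normalization) the two affected linear $\xi$-terms.

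For part (a) I would insert the closed forms from Proposition~\ref{theorem:projections}. Writing $r_i(x_i) = (\Gamma_{ij}\1)(x_i) = \sum_{x}\Gamma_{ij}(x_i,x)$, the updates $\Gamma_{ij}'(x_i,x_j) = \Gamma_{ij}(x_i,x_j)\sqrt{\Gamma_i(x_i)/r_i(x_i)}$ and $\Gamma_i'(x_i) = \sqrt{\Gamma_i(x_i)\,r_i(x_i)}$ both give $\norm{\Gamma_{ij}'}_1 = \norm{\Gamma_i'}_1 = \sum_{x_i}\sqrt{\Gamma_i(x_i)\,r_i(x_i)}$. Since the $\xi$'s are untouched, $L(\lambda',\xi') - L(\lambda,\xi) = \norm{\Gamma_{ij}}_1 + \norm{\Gamma_i}_1 - 2\sum_{x_i}\sqrt{\Gamma_i(x_i)r_i(x_i)} = \sum_{x_i}\left(\sqrt{r_i(x_i)} - \sqrt{\Gamma_i(x_i)}\right)^2$, which is exactly $2h^2(\Gamma_{ij}\1,\Gamma_i)$ by the definition of the Hellinger distance. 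Part (c) is identical with rows replaced by columns.

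For part (b) I would use that normalization rescales to unit mass, so $\norm{\Gamma_{ij}'}_1 = \norm{\Gamma_i'}_1 = 1$ and, in the dual, increments $\xi_{ij}\mapsto\xi_{ij} + \log\norm{\Gamma_{ij}}_1$ and $\xi_i\mapsto\xi_i + \log\norm{\Gamma_i}_1$. Collecting the norm changes together with the linear-$\xi$ changes gives $L(\lambda',\xi') - L(\lambda,\xi) = \left(\norm{\Gamma_{ij}}_1 - 1 - \log\norm{\Gamma_{ij}}_1\right) + \left(\norm{\Gamma_i}_1 - 1 - \log\norm{\Gamma_i}_1\right)$, and since $t - 1 - \log t \ge 0$ for every $t > 0$ (it is minimized at $t=1$), the increment is nonnegative; part (d) is symmetric. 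The main obstacle I anticipate is not any single estimate but the careful bookkeeping of signs and of exactly which dual coordinates each projection moves --- in particular ensuring the $-\sum\xi$ linear terms are handled correctly in the normalization case, since it is these terms, and not the $l_1$ norms alone, that convert the update into the nonnegative Bregman-type quantity $t - 1 - \log t$.
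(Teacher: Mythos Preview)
Your proposal is correct and follows essentially the same route as the paper's proof: both reduce $L'-L$ to the change in the affected $\ell_1$-mass terms (plus the linear $\xi$-terms in the normalization case), plug in the closed-form updates from Proposition~\ref{theorem:projections}, and obtain $\sum_{x_i}(\sqrt{r_i(x_i)}-\sqrt{\Gamma_i(x_i)})^2 = 2h^2(\Gamma_{ij}\1,\Gamma_i)$ for (a),(c) and the two copies of $t-1-\log t\ge 0$ for (b),(d). Your locality bookkeeping and the explicit collapsed form $L=-\sum_{ij}\|\Gamma_{ij}\|_1-\sum_i\|\Gamma_i\|_1-\sum_{ij}\xi_{ij}-\sum_i\xi_i+\mathrm{const}$ are exactly what the paper uses implicitly.
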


\begin{proof}

Let $L$ and $L'$ denote the values of the Lyapunov function before and after the projection in each case.
   
\ref{proj ij} \quad Due to the projection $\Gamma' =  \P_{\X_{ij\rightarrow i}}(\Gamma)$, only $\Gamma_{ij}$ and $\Gamma_i$ change values.
    \begin{align*}
        L' - L & = \sum_{x_i, x_j} \left( \Gamma_{ij}(x_i, x_j) - \Gamma_{ij}'(x_i, x_j)\right) + \sum_{x} \left( \Gamma_{i}(x) - \Gamma_{i}'(x)\right) \\
        & = \sum_{x_i, x_j} \Gamma_{ij}(x_i, x_j)  \left( 1- \sqrt{\frac{\Gamma_{i} (x_i)}{\sum_{x'}\Gamma_{ij}(x_i, x')}}\right) + \sum_{x} \Gamma_{i}(x) \left( 1- \sqrt{\frac{\sum_{x'}\Gamma_{ij}(x, x')}{\Gamma_{i} (x_i)}}\right) \\
        & = \left\|\sqrt{\Gamma_{ij} \1} - \sqrt{\Gamma_i} \right\|_2^2 = 2h^2(\Gamma_{ij}\1, \Gamma_i).
    \end{align*}
\ref{proj i} \quad Due to the projection $\Gamma' =  \P_{\X_{ij\rightarrow i}}(\Gamma)$ change, again only $\Gamma_{ij}$ and $\Gamma_i$, but they are simply normalized. From the derivation of the updates, we can see that only dual variables $\xi_i$ and $\xi_{ij}$ are updated in order for the normalization to occur. We have, from the update rule in Proposition \ref{theorem:projections}
\begin{align*}
\xi_{ij}' & = \xi_{ij} - \log \sum_{x_i, x_j} \Gamma_{ij}(x_i, x_j) \\
\xi_{i}' & = \xi_i - \log \sum_{x} \Gamma_i(x).
\end{align*}
The improvement on the Lyapunov function can then be written as
\begin{align*}
L' - L & = \sum_{x_i, x_j} \Gamma_{ij}(x_i, x_j) \left(1 - \exp(\xi_{ij}' - \xi_{ij})\right) + \sum_{x} \Gamma_i(x) \left(1 - \exp(\xi_i' - \xi_i)\right) \\
& \quad  + \xi_{ij}' - \xi_{ij} + \xi_i' - \xi_i \\
& = \sum_{x_i, x_j} \Gamma_{ij}(x_i, x_j) - \log \sum_{x_i, x_j} \Gamma_{ij}(x_i, x_j) - 1 \\
& \quad  + \sum_{x} \Gamma_i(x)- \log \sum_{x} \Gamma_i(x) - 1,
\end{align*}
where the second equality uses the fact that $\Gamma_{ij}'$ and $\Gamma_{i}$ both sum to one.
This last expression can be shown to be non-negative by recognizing the classical inequality $x - \log x - 1 \geq 0$ for all $x > 0$.

\ref{proj ji} \quad The proof of improvement is identical to \ref{proj ij}; however, we replace vertex $i$ with $j$ and all row sums $\Gamma_{ij}\1$ with column sum $\Gamma_{ij}^\top \1$.

\ref{proj j} \quad The proof of improvement is identical to \ref{proj i}, but we replace $i$ with $j$ for the vertex marginal normalization.

\end{proof}

\subsection{Fixed points of EMP}

We start this section by noting that all fixed points of EMP correspond to valid (constraint satisfying) primal solutions and therefore must equal global optima of the dual function. 

First note that any fixed point of EMP corresponds to a candidate solution all whose constraints are satisfied. Indeed, at optimality $\lambda^*, \xi^*$ satisfy:
\begin{align*}
   \left( \Gamma_\eta^*\right)_{ij}(x_i, x_j) &= \exp\left(-\eta C_{ij}(x_i,x_j) - \lambda_{ij}^*(x_i) - \lambda_{ji}^*(x_j) - \xi^*_{ij}     \right) \\
    \left(\Gamma_\eta^*\right)_i(x_i) &= \exp\left( -\eta C_{i}(x_i) - \xi^*_i +  \sum_{j \in N_r(i)} \lambda^*_{ij}(x_i) + \sum_{j \in N_c(i)} \lambda^*_{ji}(x_i) \right),
\end{align*}
with $\Gamma^*_\eta \in \LL_2$. Since all constraints are satisfied, for all projection types $\mathcal{P}$ in  Lemma \ref{lemma: improvement}, $\mathcal{P}(\Gamma_\eta^*) = \Gamma_\eta^*$. 

For the converse, we proceed by contradiction. Let $\Gamma$ be a fixed point of EMP. As such, all the normalization constraints (ensuring the edge and node distributions each sum to one) must be satisfied. Assume then that a constraint of type (a) or (c) is not satisfied. Without loss of generality let $ij \rightarrow i$ be the unsatisfied constraint. As a consequence of \ref{lemma: improvement}, the Lyapunov objective can be strictly increased by performing the corresponding Bregman projection, and therefore EMP couldn't have possibly be at a fixed point. We summarize these observations in the following proposition:

\begin{proposition}
All maxima of $L(\lambda, \xi)$ are fixed points of EMP and all fixed points of EMP are maxima of $L(\lambda, \xi)$.
\end{proposition}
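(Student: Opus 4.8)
The plan is to establish both implications in the proposition by leveraging Lemma~\ref{lemma: improvement}, which connects each Bregman projection to a monotone change in the Lyapunov function $L$. First I would handle the forward direction: every maximizer $(\lambda^*, \xi^*)$ of $L$ is a fixed point of EMP. Since $L$ is the dual objective of the strictly convex problem (\ref{eq:proj obj}), its maximizer yields a primal point $\Gamma_\eta^*$ via the first-order stationarity relations (the closed-form exponential expressions for $(\Gamma_\eta^*)_{ij}$ and $(\Gamma_\eta^*)_i$ in terms of $\lambda^*, \xi^*$). By strong duality and complementary slackness, this $\Gamma_\eta^*$ is primal feasible, i.e. $\Gamma_\eta^* \in \LL_2$, so all four constraint sets $\X_{ij\to i}, \X_{ij,i}, \X_{ij\to j}, \X_{ij,j}$ are already satisfied. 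A Bregman projection onto a set that already contains the point leaves it unchanged, hence $\P(\Gamma_\eta^*) = \Gamma_\eta^*$ for each projection type, making $\Gamma_\eta^*$ a fixed point of EMP.

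For the converse, I would argue by contradiction that every fixed point of EMP maximizes $L$. Suppose $\Gamma$ is a fixed point. Because the algorithm always concludes each edge update with the normalization projections \ref{proj i} and \ref{proj j}, any fixed point must already satisfy the normalization constraints (otherwise applying \ref{proj i} or \ref{proj j} would change $\Gamma$). Now suppose toward a contradiction that some consistency constraint of type \ref{proj ij} or \ref{proj ji} is violated, say $\Gamma_{ij}\1 \neq \Gamma_i$ for some edge $ij$. Then $h^2(\Gamma_{ij}\1, \Gamma_i) > 0$, so by Lemma~\ref{lemma: improvement}\ref{proj ij} the projection $\P_{\X_{ij\to i}}$ strictly increases $L$ by $2h^2(\Gamma_{ij}\1, \Gamma_i) > 0$, and in particular changes $\Gamma$. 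This contradicts $\Gamma$ being a fixed point. Therefore all consistency constraints hold as well, so $\Gamma$ is primal feasible and satisfies the first-order optimality conditions; since $L$ is concave in $(\lambda, \xi)$, a stationary feasible point is a global maximizer of $L$.

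The main obstacle I anticipate is making the two directions fully rigorous at the level of the correspondence between primal and dual variables. In particular, I must be careful that a ``fixed point of EMP'' is a statement about the primal iterate $\Gamma$, whereas maximizing $L$ is a statement about the dual variables $(\lambda, \xi)$; the bridge between them is the stationarity map $\Gamma \leftrightarrow (\lambda, \xi)$ derived in the Lyapunov derivation. I would need to verify that this map is well-defined and that strict concavity (or at least the absence of a strictly improving feasible direction) guarantees that a feasible stationary point is genuinely a maximizer, ruling out the possibility of a fixed point that is stationary but not optimal. The Hellinger-distance identity from Lemma~\ref{lemma: improvement} does the essential work in the converse, since it certifies that \emph{any} unsatisfied consistency constraint yields a strictly improving projection, which is exactly what forces feasibility at a fixed point.
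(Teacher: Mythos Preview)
Your proposal is correct and follows essentially the same approach as the paper: both directions rely on Lemma~\ref{lemma: improvement}, using primal feasibility of the maximizer to show invariance under each projection for the forward direction, and arguing by contradiction via the strict Hellinger improvement for the converse. Your added remark that concavity of $L$ upgrades feasibility/stationarity to global optimality is a useful clarification the paper leaves implicit.
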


\subsection{Proof of Lemma~\ref{lemma: max value}}
In this section we prove Lemma \ref{lemma: max value}, we restate it here for readability:

\begin{lemma}\label{lemma: max value_appendix} 
Let $\lambda^*$, $\xi^*$ denote the maximizers of $L$. The difference in function value between the optimal value of $L$ and the value at the        first iteration is upper bounded as
\begin{align*}
    L(\lambda^*, \xi^*) - L(\lambda^{(1)}, \xi^{(1)}) \leq \min( \|\eta C / d + \exp({-\eta C})\|_1, S).
\end{align*}
\end{lemma}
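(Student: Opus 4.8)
The plan is to bound the Lyapunov gap by comparing the optimal value $L(\lambda^*,\xi^*)$ against the initial value $L(\lambda^{(1)},\xi^{(1)})$, where the initial iterate corresponds to the normalized $\exp(-\eta C)$. First I would observe that maximizing $L$ recovers the solution $\Gamma_\eta^*$ of (\ref{eq:proj obj}) by first-order optimality, so $L(\lambda^*,\xi^*)$ equals the optimal value of the primal-dual pair. Since $L$ is derived from the dual of the Bregman-projection objective, strong duality gives $L(\lambda^*,\xi^*) = -\D_\Phi(\Gamma_\eta^*, \exp(-\eta C))$ up to the additive constants appearing in (\ref{eq:lyap}). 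The key is then to control this quantity: because $\Gamma_\eta^* \in \LL_2$ is a feasible marginal vector (all node and edge pieces are genuine probability distributions), I can bound the divergence $\D_\Phi(\Gamma_\eta^*, \exp(-\eta C))$ explicitly in terms of $C$ and the partition-function-like normalizers.

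Next I would evaluate $L$ at the initialization $(\lambda^{(1)},\xi^{(1)})$. The first iterate sets $\Gamma^{(1)}$ to the normalized $\exp(-\eta C)$, which corresponds to choosing all consistency duals $\lambda_{ij},\lambda_{ji}$ equal to zero and the normalization duals $\xi_{ij},\xi_i$ equal to the log-partition terms $\log\sum_{x_i,x_j}\exp(-\eta C_{ij})$ and $\log\sum_x \exp(-\eta C_i)$, respectively. Substituting these into (\ref{eq:lyap}), the exponential sums collapse to the total mass of the normalized distributions, which is $|\EE| + |\V|$, and the remaining terms reduce to $-\sum_{ij}\log\sum_{x_i,x_j}\exp(-\eta C_{ij}) - \sum_i \log\sum_x \exp(-\eta C_i)$ plus the constant offset involving $\sum \exp(-\eta C)$. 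This gives a clean closed form for $L(\lambda^{(1)},\xi^{(1)})$.

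Forming the difference $L(\lambda^*,\xi^*) - L(\lambda^{(1)},\xi^{(1)})$, most of the exponential and constant terms cancel, leaving a divergence-type expression. To obtain the bound $S$, I would expand $\D_\Phi(\Gamma_\eta^*,\exp(-\eta C))$ and use that each marginal piece of $\Gamma_\eta^*$ is a distribution, so its entropy contribution is nonpositive and its linear cost is bounded by evaluating against a uniform distribution, which yields the terms $\frac{\eta}{d^2}\sum C_{ij}$ and $\frac{\eta}{d}\sum C_i$ together with the log-partition terms, reproducing exactly the definition of $S$. For the alternative bound $\|\eta C/d + \exp(-\eta C)\|_1$, I would instead bound the same difference crudely, using that $\Gamma_\eta^*$ places at most unit mass per marginal and comparing the cost against the uniform distribution directly (giving the $\eta C/d$ term) while the partition terms are dominated by $\exp(-\eta C)$ via $\log\sum \exp(-\eta C) \le \sum \exp(-\eta C)$. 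Taking the minimum of the two bounds yields the claim.

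The main obstacle I anticipate is the bookkeeping in matching the additive constants across the dual objective (\ref{eq:lyap}), strong duality, and the initialization substitution; in particular, correctly tracking the offset terms $\sum_{ij}\sum_{x_i,x_j}\exp(-\eta C_{ij}) + \sum_i\sum_x \exp(-\eta C_i)$ so that they cancel cleanly and do not leak into the final bound. A secondary subtlety is justifying the uniform-distribution comparison that produces the $\frac{\eta}{d^2}$ and $\frac{\eta}{d}$ factors, which relies on the fact that each $(\Gamma_\eta^*)_{ij}$ and $(\Gamma_\eta^*)_i$ is a normalized distribution over $d^2$ and $d$ states respectively, so the worst-case linear cost is the average cost scaled appropriately rather than the maximum.
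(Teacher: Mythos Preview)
Your high-level setup is right: evaluating $L$ at the normalized initialization gives the log-partition terms, and by strong duality $L(\lambda^*,\xi^*)$ equals the primal optimal value $\D_\Phi(\Gamma_\eta^*,\exp(-\eta C))$. The difficulty is in how you propose to extract the two bounds from here.

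The step that fails is the ``uniform distribution comparison'' for the linear cost. You claim that because each piece of $\Gamma_\eta^*$ is a normalized distribution, $\langle \eta C,\Gamma_\eta^*\rangle$ is bounded by the uniform average $\tfrac{\eta}{d^2}\sum C_{ij}+\tfrac{\eta}{d}\sum C_i$. This is false in general: for a probability vector $p$ one only has $\min_x C(x)\le \langle C,p\rangle\le \max_x C(x)$, and neither side is the uniform average. Consequently your route to the $S$ bound does not go through. The paper obtains $S$ by a completely different mechanism: since both $(\lambda^{(1)},\xi^{(1)})$ and $(\lambda^*,\xi^*)$ correspond to normalized primal iterates, the exponential sums in $L$ collapse to $|\EE|+|\V|$ at both points, so the gap reduces to $\sum_{ij}(\xi_{ij}^{(1)}-\xi_{ij}^*)+\sum_i(\xi_i^{(1)}-\xi_i^*)$. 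The $\xi^{(1)}$ part gives the log-partition terms directly. For the $-\sum\xi^*$ part, the paper uses that every entry of $\Gamma_\eta^*$ lies in $[0,1]$, so taking logs of the explicit formulas for $(\Gamma_\eta^*)_{ij}$ and $(\Gamma_\eta^*)_i$ yields pointwise inequalities of the form $-\eta C_{ij}(x_i,x_j)-\lambda_{ij}^*(x_i)-\lambda_{ji}^*(x_j)-\xi_{ij}^*\le 0$ and analogously for vertices. Averaging these over $x_i,x_j$ (respectively $x$) and then summing over all edges and vertices makes the $\lambda^*$ contributions cancel exactly, because each $\lambda_{ij}^*(x)$ appears once with coefficient $-1/d$ from the edge side and once with $+1/d$ from the vertex side. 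This cancellation is the real source of the $\tfrac{\eta}{d^2}$ and $\tfrac{\eta}{d}$ factors, not a comparison with the uniform distribution.

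For the $\|\eta C/d+\exp(-\eta C)\|_1$ bound, the paper's argument is also different from yours and simpler: it first notes $L(\lambda^{(1)},\xi^{(1)})\ge L(0,0)=0$ by the monotone-improvement lemma, so it suffices to bound $L(\lambda^*,\xi^*)$ itself; then weak duality with the \emph{uniform} feasible point (not $\Gamma_\eta^*$) gives $L(\lambda^*,\xi^*)\le \D_\Phi(\Gamma_{\mathrm{unif}},\exp(-\eta C))$, which evaluates directly to the stated bound. Your proposed inequality $\log\sum\exp(-\eta C)\le \sum\exp(-\eta C)$ is valid but would still need a correct bound on the remaining cost piece, and your suggested uniform comparison for that piece again fails for the same reason as above.
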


\begin{proof}
	We start by showing the upper bound:
	\begin{align}\label{eq::upper_bound_1}
	L(\lambda^*, \xi^*) - L(\lambda^{(1)}, \xi^{(1)})\leq \| \eta C / d + \exp({-\eta C})\|_1.
	\end{align}
	
We have that $(\lambda, \xi) = (0, 0)$ when $\Gamma = e^{-\eta C}$ before any updates to the primal variables. By Lemma \ref{lemma: improvement}, $L(0, 0, 0) \leq L(\lambda^{(1)},\xi^{(1)})$. Then we have 
\begin{align*}
    L(\lambda^*, \xi^*) - L(\lambda^{(1)},\xi^{(1)}) \leq  L(\lambda^*, \xi^*) - L(0, 0) \leq L(\lambda^*, \xi^*).
\end{align*}

We may establish an upper bound on $L(\lambda^*, \xi^*)$ by finding a feasible point in the primal objective (\ref{eq:proj obj}). It is easy to verify that $\Gamma$ is in $\LL_2$ if $\forall ij \in \EE$ and $\forall i \in \V$,  $\Gamma_{ij}(x_i, x_j) = \frac{1}{d^2}$ and $\Gamma_i(x_i) = \frac{1}{d}$.
    With this choice of $\Gamma$, the value of (\ref{eq:proj obj}) is
\begin{align*}
    \D_{\Phi}(\Gamma, \exp(^{-\eta C})) & = \sum_{ij \in \EE} (\eta \E_U [C_{ij}] - 1 - \log d^2) + \sum_{i \in \V} (\eta \E_U[ C_{i}] - 1 - \log d) \\
    & \quad + \sum_{ij \in \EE} \sum_{x_i, x_j \in \chi} \exp(-\eta C_{ij}(x_i, x_j)) + \sum_{i}\sum_{x \in \chi} \exp(-\eta C_{i}(x)) \\
    & \leq \|\frac{\eta}{d} C + \exp({-\eta C})\|_1 - (|\V| + |\EE|)(\log d + 1) ,
\end{align*}
where $\E_U$ denotes the uniform distribution. where the last inequality follows from the fact that $C_{ij}(0, 0) = C_i(0) = 0$.
Therefore,
\begin{align*}
L(\lambda^*, \xi^*) -L(\lambda^{(1)},\xi^{(1)}) \leq  L(\lambda^*, \xi^*) \leq  \|\frac{\eta}{d} C + \exp({-\eta C})\|_1 - (|\V| + |\EE|)(\log d + 1).
\end{align*}

We now proceed to show the following (direct) bound on  $L(\lambda^*, \xi^* ) - L(\lambda^{(1)}, \xi^{(1)})$:

\begin{align*}
    L(\lambda^*, \xi^* ) - L(\lambda^{(1)}, \xi^{(1)}) &\leq \sum_{ij \in \mathcal{E}} \left[  \log\left( \sum_{x_i, x_j \in \chi}   \exp\left(    -\eta C_{ij}(x_i,x_j) \right) \right) + \sum_{x_i, x_j \in \chi}   \frac{\eta}{4} C_{ij}(x_i, x_j) \right] + \\
    &\sum_{i \in \mathcal{V}} \left[ \log\left( \sum_{x \in \chi} \exp\left( -\eta C_i(x)     \right)            \right) +  \sum_{x \in \chi } \frac{\eta}{2} C_{i}(x) \right].
\end{align*}

We work under the assumption that at any time $k$, all the component distributions of $\Gamma^{(k)}$ are normalized so its entries sum to $1$. Notice that in this case
\begin{align*}
L(\lambda^*, \xi^*) - L(\lambda^{(1)},\xi^{(1)}) = \sum_{ij \in \mathcal{E}} \xi^{(1)}_{ij} - \xi^{*}_{ij} + \sum_{i \in \mathcal{V}} \xi_i^{(1)} - \xi^{*}_i.
\end{align*}
If we initialize our algorithm to $\lambda^{(1)} = 0$, and $\xi^{(1)}$ be the normalization factors corresponding to this choice of $\lambda$, then

\begin{equation*}
    \sum_{ij \in \mathcal{E}} \xi^{(1)}_{ij}  + \sum_{i \in \mathcal{V}} \xi_i^{(1)} = \sum_{ij \in \mathcal{E}} \log\left( \sum_{x_i, x_j \in \chi}   \exp\left(    -\eta C(x_i,x_j) \right) \right) + \sum_{i \in \mathcal{V} } \log\left( \sum_{x \in \chi} \exp\left( -\eta C(x)     \right)            \right).
\end{equation*}

Notice that at optimality $\lambda^*, \xi^*$, for all $ij \in \mathcal{E}$ and, for all $x_i,x_j$,

\begin{equation*}
    \exp\left( -\eta C_{ij}(x_i,x_j) -\lambda^*_{ij}(x_i) -\lambda^*_{ji}(x_j) - \xi^*_{ij}  \right) = \left( \Gamma_{\eta}^* \right)_{ij}(x_i, x_j) \in [0,1].
\end{equation*}

And for all $i \in \mathcal{V}$ and for all $x$,

\begin{equation*}
    \exp\left(  -\eta C_i(x) - \xi_i^* + \sum_{j \in N_r(i)} \lambda^*_{ij}(x) + \sum_{j \in N_c(i)} \lambda^*_{ji}(x)     \right) = \left(  \Gamma_\eta^*     \right)_{i}(x) \in [0,1].
\end{equation*}

Therefore, for all $ij \in \mathcal{E}$ and for all $x_i, x_j$:
\begin{equation}\label{equation::negative_1}
    -\eta C_{ij}(x_i,x_j) -\lambda^*_{ij}(x_i) -\lambda^*_{ij}(x_j) - \xi^*_{ij}  \leq 0
\end{equation}
For all $i\in \mathcal{V}$ and for all $x$:
\begin{equation}\label{equation::negative_2}
-\eta C_i(x) - \xi^*_i + \sum_{j \in N_r(i)} \lambda^*_{ij}(x) + \sum_{j \in N_c(i)} \lambda^*_{ji}(x)  \leq 0
\end{equation}
Summing Equations (\ref{equation::negative_1}) and (\ref{equation::negative_2}) over all $ij \in \mathcal{E}$, $i \in \mathcal{V}$ and $x_i, x_j, x \in \chi$ yields:
\begin{equation}\label{equation::upper_bound}
  -  \sum_{ij \in \mathcal{E}} \xi^*_{ij} - \sum_{i \in \mathcal{V}} \xi^*_i \leq  \sum_{ij \in \mathcal{E}} \sum_{x_i, x_j \in \chi} \frac{\eta}{d^2} C_{ij}(x_i, x_j)  + \sum_{i \in \mathcal{V}} \sum_{x \in \chi } \frac{\eta}{d} C_{i}(x)
\end{equation}
And, therefore,
\begin{align}
    L(\lambda^*, \xi^* ) - L(\lambda^{(1)}, \xi^{(1)}) &\leq \sum_{ij \in \mathcal{E}} \left[  \log\left( \sum_{x_i, x_j \in \chi}   \exp\left(    -\eta C(x_i,x_j) \right) \right) + \sum_{x_i, x_j \in \chi}   \frac{\eta}{d^2} C_{ij}(x_i, x_j) \right] +  \nonumber\\
    &\sum_{i \in \mathcal{V}} \left[ \log\left( \sum_{x \in \chi} \exp\left( -\eta C(x)     \right)            \right) +  \sum_{x \in \chi } \frac{\eta}{d} C_{i}(x) \right].\label{equation::upper_bound_2}
\end{align}

Notice that the RHS of the equation above is positive since: $\sum_{i=1}^\ell \exp( a_i ) \geq \frac{1}{\ell} \sum_{i=1}^\ell \exp(a_i ) \geq  \exp\left(\frac{\sum_{i=1}^\ell a_i }{\ell}   \right)$ for all $\ell \in \mathbb{N}$ and all $a_1, \cdots, a_\ell \in \mathbb{R}$.
Combining Equations (\ref{eq::upper_bound_1}) and (\ref{equation::upper_bound_2}) and the observation that $L(0,0) \leq L(\lambda^{(1)}, \xi^1)$ (by virtue of Lemma \ref{lemma: improvement}) we obtain the final result.
\end{proof}

In the case when all entries of $C$ are positive it may be the case that $S \gg \| \exp\left(  -\eta C    \right)\|_1$. 

\subsection{Complete Proof of Theorem~\ref{theorem:convergence}}

In this section, we will complete the proof of Theorem~\ref{theorem:convergence} by handling the case of EMP-cyclic.
We require two additional technical lemmas on the $l_1$ distance between updated variables. We will use $r(\cdot)$ and $c(\cdot)$ to denote row and column sums respectively of joint distribution matrices.

\begin{lemma}\label{lemma:tv worse}
Let $a,b\in \Sigma_d$ be two points in the simplex and let $p \in \mathbb{R}^d_+$ s.t. $\min(a_i, b_i) \leq p_i \leq \max(a_i, b_i)$ for all $1 \leq i \leq d$. Let $c \in \Sigma_d$ defined as $c = \frac{p}{\sum_i p_i}$. Then:
\begin{equation*}
    \max( \| a-c \|_1, \| b-c \|_1) \leq \| a-b \|_1
\end{equation*}
\end{lemma}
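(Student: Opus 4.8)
The plan is to prove only the one-sided bound $\|a-c\|_1 \le \|a-b\|_1$; the companion bound $\|b-c\|_1 \le \|a-b\|_1$ then follows immediately, since the hypotheses are symmetric under exchanging $a$ and $b$ (the sandwiching condition, the vector $p$, and $c = p/\sum_i p_i$ are all unchanged, and $\|a-b\|_1 = \|b-a\|_1$). First I would record the bookkeeping. Let $A = \{i : a_i \ge b_i\}$ and $B = \{i : a_i < b_i\}$, and set $\delta = \sum_{i \in A}(a_i - b_i) = \sum_{i \in B}(b_i - a_i)$, the two sums agreeing because $\sum_i a_i = \sum_i b_i = 1$; hence $\|a-b\|_1 = 2\delta$. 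The sandwiching hypothesis says precisely that $p_i - a_i \le 0$ for $i \in A$ and $p_i - a_i \ge 0$ for $i \in B$, with $|p_i - a_i| \le |b_i - a_i|$ in either case. I take $c$ to be well-defined, i.e. $P := \sum_i p_i > 0$, as is implicit in the statement.

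The central step is the identity
\begin{align*}
\|a - c\|_1 = \frac{1}{P}\sum_i |Pa_i - p_i| = \frac{2}{P}\sum_i (Pa_i - p_i)_+ = \frac{2}{P}\sum_i (p_i - Pa_i)_+,
\end{align*}
where $(x)_+ = \max(x,0)$; the last two expressions coincide because $\sum_i (Pa_i - p_i) = P\cdot 1 - P = 0$, so the positive and negative parts carry equal total mass. The naive route of bounding $\|a-c\|_1 \le \|a - p\|_1 + \|p - c\|_1$ is what I expect to be the main obstacle: renormalization contributes an additive $|P - 1|$ and, more damagingly, a multiplicative factor $1/P$ that can exceed $1$ when $P < 1$, giving only $2\delta/P$. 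The identity above is precisely what lets me absorb the factor $1/P$ into each summand instead of pulling it out in front.

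I would then split on the value of $P$. If $P \le 1$, I use the form $\frac{2}{P}\sum_i (Pa_i - p_i)_+$: for $i \in B$ we have $p_i \ge a_i \ge Pa_i$, so those terms vanish and only $i \in A$ survive, where $p_i \ge b_i$ gives $(Pa_i - p_i)_+ \le (Pa_i - b_i)_+$ and hence $\tfrac{1}{P}(Pa_i - p_i)_+ \le (a_i - b_i/P)_+ \le (a_i - b_i)_+ = a_i - b_i$, the middle inequality using $b_i/P \ge b_i$; summing over $A$ yields $\|a-c\|_1 \le 2\delta$. If $P \ge 1$, I use the symmetric form $\frac{2}{P}\sum_i (p_i - Pa_i)_+$: now the terms with $i \in A$ vanish since $p_i \le a_i \le Pa_i$, and for $i \in B$ the bound $p_i \le b_i$ gives $\tfrac{1}{P}(p_i - Pa_i)_+ \le (b_i/P - a_i)_+ \le (b_i - a_i)_+ = b_i - a_i$, using $b_i/P \le b_i$, so again $\|a-c\|_1 \le 2\delta$. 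In both cases $\|a-c\|_1 \le 2\delta = \|a-b\|_1$, which together with the symmetry observation completes the lemma.
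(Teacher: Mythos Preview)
Your proof is correct, but it takes a more elaborate route than the paper's. Ironically, the paper's argument is precisely the ``naive'' triangle inequality you dismissed. From the sandwiching hypothesis one has $|a_i - p_i| + |b_i - p_i| = |a_i - b_i|$ for every $i$. Writing $t = 1/P$, the paper bounds
\[
\|a-c\|_1 = \sum_i |a_i - p_i + (1-t)p_i| \le \sum_i |a_i - p_i| + |1-t|\sum_i p_i,
\]
and observes that $|1-t|\sum_i p_i = |P - 1| = \bigl|\sum_i(p_i - b_i)\bigr| \le \sum_i |b_i - p_i|$. Adding, the right-hand side is exactly $\sum_i(|a_i - p_i| + |b_i - p_i|) = \|a-b\|_1$. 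There is no troublesome $1/P$ factor: the renormalization cost $\|p - c\|_1$ collapses to the scalar $|P-1|$, which is absorbed by the ``unused'' half of the sandwiching identity. Your approach instead bypasses the triangle inequality via the positive-part identity and a case split on $P\lessgtr 1$; this is a clean and self-contained argument, but the paper's two-line route shows that the obstacle you anticipated does not actually arise.
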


\begin{proof}
We only need to prove that $\| a-c \| \leq \|a-b\|_1$. 
From $\min(a_i, b_i) \leq p_i \leq max(a_i,b_i)$ we obtain:

\begin{equation*}
    | a_i - p_i | + |b_i - p_i| = |a_i - b_i|.
\end{equation*}
Let $t = \frac{1}{\sum_i p_i}$. The following relationships hold:
\begin{align*}
    \|a-c\|_1 &= \sum_i |a-t p_i| = \sum_i | a_i - p_i + (1-t)p_i | \\
    &\leq \sum_i | a_i - p_i | + \sum_i | (1-t)p_i|.
\end{align*}
Note that

\begin{equation*}
\sum_i | (1-t)p_i| = |1-t| \sum_i p_i = \frac{ |1-t|}{t} = |\frac{1}{t} - 1| = | \sum_i p_i - 1| ,   
\end{equation*}

and 
\begin{equation*}
    \sum_i|b_i - p_i| \geq |  \sum_i b_i - p_i | = |1-\sum_i p_i| = |\sum_i p_i - 1|.
\end{equation*}

Therefore,

\begin{equation*}
    \|a-c\|_1 \leq \sum_i |a_i - p_i| + \sum_i | b_i - p_i| = \sum_i |a_i-b_i| = \|a-b\|_1.
\end{equation*}

The result follows.

\end{proof}

Let $A  \in \Sigma_{d\times d}$ with elements $a_{ij}$ be a matrix representing joint distribution probabilities. For $p = \begin{bmatrix}p_1 & \ldots & p_d \end{bmatrix}^\top  \in \Sigma_d$, define
\begin{align*}
\tilde A = \frac{1}{z}\begin{bmatrix} a_{11} \sqrt{ \frac{ p_1}{r(A)_1} } & \cdots &  a_{1d} \sqrt{ \frac{ p_1}{r(A)_1} } \\ 
\vdots & \ddots & \vdots \\
a_{d1} \sqrt{ \frac{ p_d}{r(A)_d} } & \cdots  & a_{dd} \sqrt{ \frac{ p_d}{r(A)_d} } \end{bmatrix}
\end{align*}
where $z$ is a normalization term, such that the new probabilities matrix sums to one. The notation $r(A)_i$ denotes the $i$th element of row sum vector $r(A)$.

\begin{lemma}\label{lemma:tv worse joint}
	The following inequality holds on the difference between $A$ and $\tilde A$:
	\begin{align*}
	\| c(\tilde A) - c(A) \|_1 \leq \| r(\tilde A) - r(A) \|_1
	\end{align*}
\end{lemma}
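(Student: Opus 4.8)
The plan is to exploit the fact that the map $A \mapsto \tilde A$ rescales every entry of row $i$ by the \emph{same} factor, so that each row's contribution to the column sums changes through a single common multiplier. Concretely, writing $s_i = \sqrt{p_i / r(A)_i}$, the definition reads $\tilde A_{ij} = \frac{1}{z} a_{ij} s_i$ with $z = \sum_i s_i\, r(A)_i$ the normalizer that makes $\tilde A$ sum to one. The key quantity is the per-row multiplier $w_i := \frac{s_i}{z} - 1$, which captures how much row $i$ is stretched or shrunk in passing from $A$ to $\tilde A$.

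First I would compute both the row- and column-sum discrepancies in terms of $w_i$. Since the rescaling is uniform across a row, $r(\tilde A)_i = \frac{s_i}{z} r(A)_i$, so that $r(\tilde A)_i - r(A)_i = r(A)_i w_i$ and, because $r(A)_i \geq 0$, we get $\| r(\tilde A) - r(A) \|_1 = \sum_i r(A)_i |w_i|$. For the columns, $c(\tilde A)_j = \frac{1}{z}\sum_i a_{ij} s_i$, which gives $c(\tilde A)_j - c(A)_j = \sum_i a_{ij} w_i$.

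The result then follows from a single application of the triangle inequality, using $a_{ij} \geq 0$ and $\sum_j a_{ij} = r(A)_i$:
\begin{align*}
\| c(\tilde A) - c(A) \|_1 = \sum_j \Big| \sum_i a_{ij} w_i \Big| \leq \sum_i |w_i| \sum_j a_{ij} = \sum_i r(A)_i |w_i| = \| r(\tilde A) - r(A) \|_1.
\end{align*}

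There is no genuine obstacle here; the entire content is the observation that the row-rescaling factor $s_i/z$ is constant along each row, which lets the column-sum perturbation be written as a nonnegative-weighted combination of the very same per-row multipliers $w_i$ that exactly generate the row-sum perturbation. The only points to check are that $z$ and the $s_i$ are well defined and that $r(A)_i \geq 0$, so the absolute values distribute as claimed; the degenerate case $r(A)_i = 0$ (an all-zero row) forces $a_{ij} = 0$ for that $i$ and drops out of every sum, so it needs no special treatment.
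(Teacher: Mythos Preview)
Your proof is correct and follows essentially the same approach as the paper: both arguments exploit that each row is rescaled by a common factor, write the column-sum discrepancy as $\sum_j \lvert \sum_i a_{ij}(s_i/z - 1)\rvert$, apply the triangle inequality, and collapse $\sum_j a_{ij}$ to $r(A)_i$ to recover the row-sum discrepancy exactly. Your introduction of the per-row multiplier $w_i = s_i/z - 1$ is a slightly cleaner bookkeeping device, but the logic is identical.
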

\begin{proof}
	\begin{align*}
	\| c(\tilde A) - c(A) \|_1 & =  \sum_{j = 1}^d \left| \sum_{i = 1}^d \frac{a_{ij}}{z} \left(\sqrt{ \frac{ p_i}{r(A)_i} }  - z \right) \right| \\
	& \leq \sum_{i, j} \frac{a_{ij}}{z} \left| \sqrt{ \frac{ p_i}{r(A)_i} }  - z \right| \\
	& =  \sum_i \frac{r(A)_i}{z} \left| \sqrt{ \frac{ p_i}{r(A)_i} }  - z \right|  \\
	& = \sum_i  \left| \sqrt{ \frac{ r(A)_i p_i}{z} }  - r(A)_i \right| \\
	& = \| r(\widetilde A) - r(A)\|_1.
\end{align*}
\end{proof}

This proof of Theorem~\ref{theorem:convergence} relies heavily on the primal and dual variables at given times throughout the algorithm. As such, it is necessary to define precise notation for these temporal events. We note that there are two loops in the algorithm: an outer loop that controls the iterations and an inner one that loops over all edges in $\EE$. The outer loop's current iteration is given by $k \geq 0$, as defined and updated in Algorithm~\ref{alg1}. We denote the current step of the inner loop by $t$ where $1 \leq t \leq 4|\EE|$. This is due to the fact that there are four projections for each edge ($\X_{ij \rightarrow i}$, $\X_{ij, i}$, $\X_{ij \rightarrow j}$, and $\X_{ij,j}$) in one full iteration for $\LL_2$. Thus the algorithm alternates between enforcing consistency between an edge and vertex and normalizing the local distributions.

The value of $\Gamma$ at iteration $k$ and step $t$ within iteration $k$ is denoted by $\Gamma^{(k, t)}$. 
For example, at the very start of the algorithm, we are at iteration $k = 1$ and step $t = 1$ with initial value $\Gamma^{(1, 1)}$, which is equal to $\exp(-\eta C)$ with normalized vertex marginal and edge joint distributions.
The constraint set onto which a projection is made at $t$ in any iteration is denoted by $\X^{(t)}$. Note that we drop $k$ in the constraint set notation because the order in which the projections occur is always the same.

\begin{proof}[Proof of Theorem \ref{theorem:convergence}]

Let $k^*$ be the first iteration such that the termination condition in Algorithm~\ref{alg1} with respect to $\epsilon$ is met. For $k$ such that $1 \leq k \leq k^*$, there exists $ij \in \EE$ such that $\| r(\Gamma_{ij}^{(k,1)}) - \Gamma_i^{(k,1)}\|_1 \geq \epsilon$ or $\| c(\Gamma_{ij}^{(k,1)}) - \Gamma_j^{(k,1)}\|_1 \geq \epsilon$.

First consider the case where $\| c(\Gamma_{ij}^{(k,1)}) - \Gamma_j^{(k,1)}\|_1 \geq \epsilon$.
Let $t$ be chosen such that $\X^{(t)} = \X_{ij\rightarrow j}$. Note that $\Gamma_{j}^{(k, t)}$ can move within the $\epsilon$-ball of $c(\Gamma_{ij}^{(k, t)})$ between times $1$ and $t$ of the $k$th iteration due to earlier projections involving vertex $j$. However, $\Gamma_{ij}^{(k, t')} = \Gamma_{ij}^{(k, 1)}$ for all $t' \leq t - 2$ because it is only updated at step $t - 2$ where $\X^{(t - 2)} = \X_{ij\rightarrow i}$. Then, by repeatedly applying the triangle inequality, we have
\begin{align*}
        \epsilon & \leq \|c(\Gamma_{ij}^{(k,1)}) - \Gamma_j^{(k,1)}\|_1 \\
        & \leq \|c(\Gamma_{ij}^{(k, t - 2)}) - \Gamma_j^{(k,1)}\|_1  \\ 
         & \leq \|c(\Gamma_{ij}^{(k,t - 2)}) - \Gamma_j^{(k,t)}\|_1 + \sum_{t' \in \T_{j,r}^{(t)} \cup \T_{j,c}^{(t)}} \|\Gamma_j^{(k,t')} - \Gamma_j^{(k,t' + 2)}\|_1  \\
         & \leq \|c(\Gamma_{ij}^{(k,t)}) - \Gamma_j^{(k,t)}\|_1 + \|c(\Gamma_{ij}^{(k,t)}) - c(\Gamma_{ij}^{(k,t - 2)})\|_1 \\
         &\quad + \sum_{t' \in \T_{j,r}^{(t)} \cup \T_{j,c}^{(t)}} \|\Gamma_j^{(k,t')} - \Gamma_j^{(k,t' + 2)}\|_1,
    \end{align*}

 where $\T^{(t)}_{j, r}$ and $\T^{(t)}_{j, c}$ are sets of times before $t$ where a projection (for row and column consistency, respectively) caused $\Gamma_j$ to be updated:
    \begin{align*}
       	\T_{j,r}^{(t)} & \define \{ t' < t \ : \  \exists \ell \in N_r(i) \ \st \  \X^{(t')} = \X_{j \ell \rightarrow j} \} \\
       	\T_{j,c}^{(t)} & \define \{ t' < t \ : \  \exists \ell \in N_c(i) \ \st \  \X^{(t')} = \X_{\ell j \rightarrow j} \}.
    \end{align*}

    Therefore, $\Gamma_j^{(k, t' + 2)}$ is the result of enforcing consistency with another edge of $i$ and then normalizing $\Gamma_j$. Let $e_{t'}$ denote the edge (incident on $j$) onto which projections are occurring at step $t' \in \T_{j,r}^{(t)} \cup \T_{j,c}^{(t)}$. From Lemma~\ref{lemma:tv worse}, if $t' \in \T_{j,r}{(t)}$, then
    \begin{align*}
    \| \Gamma_{j}^{(k, t')} - \Gamma_j^{(k, t' + 2)}\|_1 & \leq \| \Gamma_{j}^{(k, t')} - r(\Gamma_{e_{t'}}^{(k, t')})\|_1.
    \end{align*}
    If $t' \in \T_{j, c}^{(t)}$, then
    \begin{align*}
    \| \Gamma_{j}^{(k, t')} - \Gamma_j^{(k, t' + 2)}\|_1 & \leq \| \Gamma_{j}^{(k, t')} - c(\Gamma_{e_{t'}}^{(k, t')})\|_1.
    \end{align*}
    Similarly, by combining Lemma~\ref{lemma:tv worse} and Lemma~\ref{lemma:tv worse joint}, we have
    \begin{align*}
    \|c(\Gamma_{ij}^{(k,t)}) - c(\Gamma_{ij}^{(k,t - 2)})\|_1 \leq \|r(\Gamma_{ij}^{(k,t)}) - r(\Gamma_{ij}^{(k,t - 2)})\|_1 \leq \|\Gamma_{i}^{(k,t - 2)} - r(\Gamma_{ij}^{(k,t - 2)})\|_1.
    \end{align*}
    Note that since the variables are normalized at every even step, they are individually valid probability distributions, and so the Hellinger inequality can be applied. For distributions, $p$ and $q$, the inequality states
    \begin{align*}
        \frac{1}{4}\| p - q \|_1^2 \leq 2 h^2(p, q).
    \end{align*}
    Therefore,
    \begin{align*}
        \frac{\epsilon^2}{4(\deg (\G) + 1)} & \leq 2h^2(c(\Gamma_{ij}^{(k,t)}), \Gamma_j^{(k,t)}) + 2h^2(r(\Gamma_{ij}^{(k,t - 2)}), \Gamma_i^{(k,t - 2)}) \\
         & \quad + \sum_{t' \in \T_{j, r}^{(t)}}  2h^2(r(\Gamma_{e_{t'}}^{(k,t')}), \Gamma_j^{(k,t')})
          + \sum_{t' \in \T_{j, c}^{(t)}} 2h^2(c(\Gamma_{e_{t'}}^{(k,t')}), \Gamma_j^{(k,t')}) \\
        & \leq L^{(k+1, 1)} -  L^{(k, 1)}.
    \end{align*}
    The last inequality follows from telescoping over all steps in iteration $k$ due to Lemma~\ref{lemma: improvement}. This proof was for the case when $\| c(\Gamma_{ij}^{(k,1)}) - \Gamma_j^{(k,1)}\|_1 \geq \epsilon$. For the case when $\| r(\Gamma_{ij}^{(k,1)}) - \Gamma_i^{(k,1)}\|_1 \geq \epsilon$, the procedure is identical except we may ignore the term $\|c(\Gamma_{ij}^{(k,t)}) - c(\Gamma_{ij}^{(k,t - 2)})\|_1$ since $\Gamma_{ij}$ is constant within iteration $k$ until the projection onto $X_{ij \rightarrow i}$. Thus, the improvement lower bound still holds.

Putting these results together with Lemma~\ref{lemma: max value}, we see that as long as a single constraint is violated above the $\epsilon$ threshold at the start of an iteration, it is possible to show that the value of $L$ increases by at least $\epsilon^2/4(\deg(\G) + 1)$ during the iteration. This implies that EMP-cyclic terminates in at most $\ceil{\frac{4\mathcal{S}_0(\deg(\G) +1)}{\epsilon^{2}}}$ iterations.

\end{proof}

\subsection{Proof of Theorem \ref{theorem:combined}}

We start by defining a version of $\LL_2$ with slack vectors. Let $\nu$ be a vector indexed in a similar way as $\Gamma$, where $\{ \nu_{ij}, \nu_{ji}\}_{ij \in \mathcal{E}}$. 
We define the slack $\nu$ as $\nu_{ij} = \Gamma_{ij} \1 - \Gamma_i$ and $\nu_{ji} = \Gamma_{ij}^\top \1 - \Gamma_j$. Then we define the slack polytope $\LL_2^\nu$ as
\begin{align} 
	\LL_2^\nu \define \left\{ \Gamma \geq 0 \ : \ \begin{array}{lr}
	\Gamma_i \in \Sigma_d & \forall i \in \V  \\
	\Gamma_{ij}\1  = \Gamma_i + \nu_{ij}  & \forall ij \in \EE\\  
	\Gamma_{ij}^\top \1  = \Gamma_j + \nu_{ji}  & \forall ij \in \EE\\
	\1^\top \Gamma_{ij} \1 = 1 & \forall ij \in \EE \\ \end{array} 
	\right\}.
\end{align}
Notice that by definition the slack vectors $\nu$ satisfy that, for all $ij\in\EE$, $\nu_{ij}^\top \1 = \nu_{ji}^\top \1 =0$.
The main difference between $\LL_2$ and $\LL_2^\nu$ lies in that the joints do not marginalize exactly to the vertex probabilities but do so up to a slack. 
Consider the entropy-regularized linear program corresponding to $\LL_2^\nu$:
\begin{align}\label{eq:reg_slack}
    \tag{Reg-slack}
    \min \quad \<C, \Gamma\> - \frac{1}{\eta}H(\Gamma) \quad \st \quad \Gamma \in \LL_2^\nu,
\end{align}
Introducing the exact same ensemble of dual variables $\lambda, \xi$ as in the Lyapunov function derivation, its dual function equals
{\small
\begin{align}\label{eq:lyap_slack}
\begin{aligned}
L^\nu(\lambda, \xi) & = - \sum_{ij \in \EE} \sum_{x_i, x_j \in \chi} \exp\left(- \eta C_{ij}(x_i, x_j) - \lambda_{ij}(x_i) - \lambda_{ji}(x_j) - \xi_{ij}\right)  \\
& \quad - \sum_{ij \in \EE} \sum_{x_i, x_j \in \chi}  \Big(  \lambda_{ij}(x_i)\nu_{ij}(x_i) +. \lambda_{ji}(x_j)\nu_{ji}(x_j)  \Big) \\
& \quad - \sum_{i \in \V} \sum_{x \in \chi} \exp\left( -\eta C_{i}(x) - \xi_i +  \sum_{j \in N_r(i)} \lambda_{ij}(x) + \sum_{j \in N_c(i)} \lambda_{ji}(x) \right) \\
& \quad  - \sum_{ij \in \EE} \xi_{ij} - \sum_{i \in \V} \xi_i + \sum_{ij \in \EE} \sum_{x_i, x_j \in \chi} \exp(-\eta C_{ij}(x_i, x_j)) + \sum_{i}\sum_{x \in \chi} \exp(-\eta C_{i}(x)).
\end{aligned}
\end{align}
}%
Furthermore, if $\lambda^*, \xi^*$ were a set of optimal dual variables, the optimal primal $\Gamma^*$ can be computed via
\begin{align}
    \Gamma_{ij}^*(x_i, x_j) &= \exp\left(-\eta C_{ij}(x_i,x_j) - \lambda_{ij}^*(x_i) - \lambda_{ji}^*(x_j) - \xi^*_{ij}     \right) \label{eq::dualfeas1} \\
    \Gamma_i^*(x_i) &= \exp\left( -\eta C_{i}(x_i) - \xi^*_i +  \sum_{j \in N_r(i)} \lambda^*_{ij}(x_i) + \sum_{j \in N_c(i)} \lambda^*_{ji}(x_i) \right).\label{eq::dualfeas2}
\end{align}
They satisfy the same formulae as the problem without slack variables.  Since dual optimality is equivalent to primal feasibility, whenever an iterate of EMP satisfies slack of $\nu$, its corresponding primal solution is optimal for (\ref{eq:reg_slack}). 

We start with a useful manipulation lemma:

\begin{lemma}\label{lemma::projection_l2_nu}
Let $\nu, \nu'$ be two slack vectors and let $\Gamma \in \LL_2^\nu$. Assume $\| v'    \|_\infty  \leq \frac{1}{2d}$. 
\begin{enumerate}
\item If for all $ij \in \mathcal{E}$ and $i \in \V$, $\Gamma_i + \nu'_{ij} \in \Sigma_{d}$, then there exists a vector $\Gamma' \in \LL_2^{\nu'}$ such that
\begin{equation}
    \| \Gamma - \Gamma' \|_1 \leq 2\| \nu - \nu'\|_1.
\end{equation}
\item If $\nu = 0$\footnote{We do not require that $\Gamma_i + \nu'_{ij} \in \Sigma_{d}$}, then there exists a vector $\Gamma' \in \LL_2^{\nu'}$ such that
\begin{equation}
    \| \Gamma - \Gamma' \|_1 \leq 6 d \deg(\G) \| \nu'\|_1 .
\end{equation}
\end{enumerate}

\end{lemma}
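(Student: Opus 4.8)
The plan is to construct the feasible point $\Gamma'$ one edge at a time using a transportation-style rounding primitive, and to reduce Part~2 to Part~1 by first creating enough head-room in the vertex marginals. The workhorse is the rounding guarantee of \cite{altschuler2017near}: given a nonnegative matrix $P$ whose entries sum to one and target marginals $r,c\in\Sigma_d$, one can produce $\hat P\ge 0$ with $\hat P\1=r$, $\hat P^\top\1=c$, and
\[
\|\hat P-P\|_1 \le 2\big(\|P\1-r\|_1+\|P^\top\1-c\|_1\big),
\]
by scaling each row and then each column down to meet its target and redistributing the residual mass through a single nonnegative rank-one correction. I will apply this to each edge joint $\Gamma_{ij}$.

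For Part~1 I keep the vertex marginals fixed, $\Gamma'_i:=\Gamma_i$, and for every $ij\in\EE$ I round $\Gamma_{ij}$ to the targets $r_{ij}:=\Gamma_i+\nu'_{ij}$ and $c_{ij}:=\Gamma_j+\nu'_{ji}$. The hypothesis $\Gamma_i+\nu'_{ij}\in\Sigma_d$ (applied at each endpoint) makes $r_{ij},c_{ij}$ genuine distributions, so the primitive applies. The resulting $\Gamma'$ lies in $\LL_2^{\nu'}$: the vertex blocks are in $\Sigma_d$, the joint row/column marginals equal $\Gamma'_i+\nu'_{ij}$ and $\Gamma'_j+\nu'_{ji}$, and $\1^\top\Gamma'_{ij}\1=\1^\top r_{ij}=1$ since $\1^\top\nu'_{ij}=0$. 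Because $\Gamma\in\LL_2^\nu$ gives $\Gamma_{ij}\1=\Gamma_i+\nu_{ij}$, the rounding bound reads $\|\Gamma'_{ij}-\Gamma_{ij}\|_1\le 2(\|\nu_{ij}-\nu'_{ij}\|_1+\|\nu_{ji}-\nu'_{ji}\|_1)$; summing over edges (the vertex blocks contributing nothing) yields $\|\Gamma-\Gamma'\|_1\le 2\|\nu-\nu'\|_1$.

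For Part~2 we have $\nu=0$ but no validity hypothesis, so $\Gamma_i+\nu'_{ij}$ may be negative where $\Gamma_i$ vanishes and Part~1 cannot be invoked directly. I first smooth each vertex marginal toward the uniform $u=\1/d$: set $\Gamma'_i:=(1-\sigma_i)\Gamma_i+\sigma_i u$ with $\sigma_i:=d\,M_i$, where $M_i$ is the largest $\ell_\infty$-norm among the slack blocks on edges incident to $i$, so that every entry of $\Gamma'_i$ is at least $\|\nu'\|_\infty$. The blanket hypothesis $\|\nu'\|_\infty\le\frac1{2d}$ gives $\sigma_i\le\frac12\le1$, and the lower bound ensures $\Gamma'_i+\nu'_{ij}\ge0$ for every incident edge. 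Feeding the now-valid targets to the primitive produces the joints $\Gamma'_{ij}$ and hence $\Gamma'\in\LL_2^{\nu'}$. The cost splits into a vertex term $\sum_i\|\Gamma_i-\Gamma'_i\|_1\le\sum_i 2\sigma_i\le 2d\|\nu'\|_1$ (bounding each $M_i$ by the sum of incident slack $\ell_\infty$-norms, then $\|\cdot\|_\infty\le\|\cdot\|_1$, charging each slack block to a unique edge--vertex pair) and a joint term; for the latter the per-edge bound is $2(\|\Gamma_i-\Gamma'_i\|_1+\|\nu'_{ij}\|_1+\|\Gamma_j-\Gamma'_j\|_1+\|\nu'_{ji}\|_1)$, and summing over edges charges each vertex perturbation at most $\deg(\G)$ times, giving $\le(4d\deg(\G)+2)\|\nu'\|_1$. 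Collecting the terms and using $\deg(\G)\ge1$ gives $\|\Gamma-\Gamma'\|_1\le 6d\deg(\G)\|\nu'\|_1$.

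The main obstacle is exactly the nonnegativity failure in Part~2: since $\Gamma$ may sit on a low-dimensional face of $\LL_2$ with vanishing coordinates, the perturbed marginals need not be distributions, and the repair (smoothing toward uniform) must be performed locally and charged carefully so that the cost scales with $\|\nu'\|_1$ rather than with the graph size. The factor $d$ arises from spreading uniform mass over $d$ states, and $\deg(\G)$ from a single vertex correction propagating to all its incident joints. The rest—verifying the error bound of the rounding primitive and tracking the final constant—is routine.
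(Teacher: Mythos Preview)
Your proposal is correct and follows essentially the same argument as the paper: Part~1 keeps the vertex marginals fixed and applies the rounding lemma of \cite{altschuler2017near} edge by edge, and Part~2 first smooths each $\Gamma_i$ toward the uniform distribution by the convex combination $(1-\sigma_i)\Gamma_i+\sigma_i u$ with $\sigma_i=d\max_{j\in N(i)}\|\nu'_{ij}\|_\infty$ to restore nonnegativity of the target marginals, then invokes the same rounding primitive and tallies the cost exactly as you do. One tiny slip: the entries of $\Gamma'_i$ are guaranteed to be at least $M_i$, not the global $\|\nu'\|_\infty$, but this is precisely what you need for $\Gamma'_i+\nu'_{ij}\ge 0$ on every incident edge, so the argument goes through unchanged.
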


\begin{proof}
First we consider the case when for all $ij\in \mathcal{E}$, $\Gamma_i + \nu_{ij}'$ is a valid distribution (in other words, all its entries are in $[0,1]$ and its values sum to $1$). In this case, we can argue for the existence of $\Gamma'$ via the following:

Let $\Gamma'_i = \Gamma_i$ for all $i\in \mathcal{V}$. Let $ij \in \mathcal{E}$ and observe that $\Gamma_{ij} \1 = \Gamma_i + \nu_{ij}$ and $\Gamma_{ij}^\top \1 = \Gamma_j + \nu_{ji}$. We invoke Lemma 7 in \cite{altschuler2017near} to claim the existence of $\Gamma_{ij}'$ such that $\Gamma_{ij}' \1 = \Gamma_i + \nu_{ij}'$ and $(\Gamma_{ij}')^\top \1 = \Gamma_j + \nu_{ji}'$ and
\begin{align*}
    \| \Gamma_{ij} - \Gamma_{ij}'\|_1 &\leq 2\left( \| \Gamma_i + \nu_{ij}  - \Gamma_i - \nu'_{ij}   \|_1 + \| \Gamma_j + \nu_{ji}  - \Gamma_j - \nu'_{ji}  \|_1 \right) \\
    &=2\left( \|  \nu_{ij}  - \nu'_{ij}   \|_1 + \|   \nu_{ji}  -  \nu'_{ji}  \|_1 \right).
\end{align*}
Setting $\Gamma'$ to be the ensemble with values $\{\Gamma'_i\}_{i\in \mathcal{V}}$ and $\{\Gamma_{ij}'\}_{ij \in \mathcal{E}}$ the result follows. 

 Now we consider the case when there exist $ij \in \mathcal{E}$ such that $\Gamma_i + v_{ij}'$ does not lie in the probability simplex. In this case we will have to define $\Gamma'_i$ different from $\Gamma_i$. Consider some $i \in \V$. Let $N(i)$ be the set of neighbouring vertices to $i$ and we abuse notation slightly and use $\nu_{ij}$ for $j \in N(i)$ to denote the slack on $i$ as of the edge marginal shared by $i$ and $j$. We define $\Gamma_i'$ in the following way:
 \begin{enumerate}
     \item If $\Gamma_i + \nu'_{ij} \in \Sigma_{d}$ for all $j \in N(i)$ then let $\Gamma_i' = \Gamma_i $.
     \item Otherwise, let $\{ x_1, \cdots, x_r \} \subseteq [d]$ be the entries of $\Gamma_i$ such that for all $x_\tau \in \{ x_1, \cdots, x_r \} $ there exists at least one $j \in N(i)$ for which $\left[\Gamma_i + \nu_{ij}'  \right](x_\tau) \not\in [0,1]$. Therefore, we must define $\Gamma_i'$ such that 
     \begin{align*}
     \max_j\|\nu_{ij}'\|_\infty \leq \Gamma_i(x) \leq 1 - \max_j\|\nu_{ij}'\|_\infty,
     \end{align*}
     which can be done by taking the convex combination of $\mu_i$ with the uniform distribution:
     \begin{align*}
         \Gamma_i' = (1 - \theta) \Gamma_i + \frac{\theta}{d}.
     \end{align*}
     Setting $\theta = d  \max_j\|\nu_{ij}'\|_\infty$ guarantees this outcome because we are given that $\|\nu\|_\infty \leq \frac{1}{2d}$. Furthermore, we have
     \begin{align*}
         \| \Gamma_i - \Gamma_i'\|_1 & = \sum_{x} | \Gamma_i(x) - \Gamma_i'(x)| \\
         &  \leq 2 d \max_j\|\nu_{ij}'\|_\infty.
     \end{align*}
     This, in turn, implies $\sum_{i \in V} \| \Gamma_i - \Gamma_i'\|_1 \leq 2d \| \nu'\|_1$. Then, we apply the result of \cite{altschuler2017near} again to achieve existence of $\{\Gamma_{ij}'\}_{ij \in \EE}$ such that
     \begin{align*}
         \| \Gamma_{ij} - \Gamma_{ij}'\|_1  &\leq 2\left( \| \Gamma_i' - \Gamma_i - \nu'_{ij}   \|_1 + \| \Gamma_j'  - \Gamma_j - \nu'_{ji}  \|_1 \right) \\
    &=2\left( \|   \nu'_{ij}   \|_1 + \|   \nu'_{ji}  \|_1  + \| \Gamma_i - \Gamma_i'\|_1 + \| \Gamma_j - \Gamma_j'\|_1 \right).
     \end{align*}
     Summing over these yields $\sum_{ij \in \EE}  \| \Gamma_{ij} - \Gamma_{ij}'\|_1 \leq 2\|\nu'\|_1 + 2d\deg(\G) \|\nu'\|_1$. Therefore $\| \Gamma - \Gamma'\|_1 \leq 6d \deg(\G) \| \nu ' \|_1$

 \end{enumerate}

 \end{proof}

 We additionally require a similar lemma which allows us to project from one polytope to another while bounding the probabilities away from zero. 
 
 \begin{lemma}\label{lemma::projection_l2_nu_tau}
 Fix $\tau$ such that $0 < \tau \leq \frac{1}{8d^2}$ and a slack vector $\nu$ such that $\| \nu\|_\infty \leq \frac{1}{4d}$. If $\Gamma \in \LL^\nu_2$, then there exists a vector $\Gamma' \in \LL_2$ such that
 \begin{align*}
     \Gamma_i'(x_i) & \geq  \tau \quad \forall  i \in \V, \ x_i \in \chi \\
     \Gamma_{ij}'(x_i, x_j) & \geq \tau  \quad \forall ij \in \EE,\ x_i, x_j \in \chi \\
     \| \Gamma - \Gamma'\|_1 & \leq  2 \| \nu\|_1 + 2( m+ n) d^2 \tau.
 \end{align*}
 If $\Gamma \in \LL_2$, then there exists a vector $\Gamma' \in \LL_2^\nu$ such that
 \begin{align*}
     \Gamma_i'(x_i) & \geq  \tau \quad \forall  i \in \V, \ x_i \in \chi \\
     \Gamma_{ij}'(x_i, x_j) & \geq \tau  \quad \forall ij \in \EE,\ x_i, x_j \in \chi \\
     \| \Gamma - \Gamma'\|_1 & \leq  6 d \deg(\G) \| \nu\|_1 +  8 (|\EE| + n) d^2 \tau.
 \end{align*}
 \end{lemma}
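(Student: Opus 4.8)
The plan is to prove each part in two stages: first translate between the two slack polytopes using Lemma~\ref{lemma::projection_l2_nu}, and then take a convex combination with a carefully chosen interior point to push every coordinate above $\tau$, controlling the extra $l_1$ cost at each stage by the triangle inequality. The guiding observation is that both $\LL_2$ and $\LL_2^\nu$ are convex (they are cut out by affine equalities together with nonnegativity), so any convex combination of two of their members stays inside, and that the $l_1$ distance between two normalized nonnegative blocks is at most $2$.

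For the first part I would start from $\Gamma \in \LL_2^\nu$ and apply Lemma~\ref{lemma::projection_l2_nu}(1) with target slack $\nu' = 0$; the hypothesis $\Gamma_i + \nu'_{ij} = \Gamma_i \in \Sigma_d$ holds automatically, so there is $\widehat\Gamma \in \LL_2^0 = \LL_2$ with $\|\Gamma - \widehat\Gamma\|_1 \leq 2\|\nu\|_1$. Next let $U$ be the uniform marginal vector, $U_i(x) = 1/d$ and $U_{ij}(x_i,x_j) = 1/d^2$, which lies in $\LL_2$, and set $\Gamma' = (1-\theta)\widehat\Gamma + \theta U$ with $\theta = d^2\tau$. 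Convexity of $\LL_2$ gives $\Gamma' \in \LL_2$; the hypothesis $\tau \leq 1/(8d^2)$ forces $\theta \leq 1/8 < 1$; and every coordinate obeys $\Gamma'_i \geq \theta/d \geq \tau$ and $\Gamma'_{ij} \geq \theta/d^2 = \tau$. Since each of the $m + n$ vertex- and edge-blocks of $\widehat\Gamma$ and of $U$ is a probability distribution, $\|\widehat\Gamma - U\|_1 \leq 2(m+n)$, whence $\|\widehat\Gamma - \Gamma'\|_1 = \theta\|\widehat\Gamma - U\|_1 \leq 2(m+n)d^2\tau$, and the triangle inequality yields the stated bound.

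For the second part I would proceed symmetrically, except that the interior point cannot be $U$, since $U \notin \LL_2^\nu$ in general. Starting from $\Gamma \in \LL_2 = \LL_2^0$, Lemma~\ref{lemma::projection_l2_nu}(2) (applied with source slack $0$ and target slack $\nu$, using $\|\nu\|_\infty \leq 1/(4d) \leq 1/(2d)$) gives $\widehat\Gamma \in \LL_2^\nu$ with $\|\Gamma - \widehat\Gamma\|_1 \leq 6d\deg(\G)\|\nu\|_1$. The key step is to exhibit an interior point $W \in \LL_2^\nu$ bounded away from $0$: I take $W_i = 1/d$ and the rank-one joint $W_{ij}(x_i,x_j) = (1/d + \nu_{ij}(x_i))(1/d + \nu_{ji}(x_j))$. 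Using the zero-sum property $\nu_{ij}^\top\1 = \nu_{ji}^\top\1 = 0$ noted after the definition of $\LL_2^\nu$, one checks directly that $W_{ij}\1 = W_i + \nu_{ij}$, $W_{ij}^\top\1 = W_j + \nu_{ji}$, and $\1^\top W_{ij}\1 = 1$, so $W \in \LL_2^\nu$; moreover $\|\nu\|_\infty \leq 1/(4d)$ makes each factor at least $3/(4d)$, hence $W_{ij} \geq 9/(16d^2)$ and $W_i = 1/d$. Setting $\Gamma' = (1-\theta)\widehat\Gamma + \theta W$ with $\theta = 4d^2\tau$ (again $\leq 1/2 < 1$) keeps $\Gamma' \in \LL_2^\nu$, forces $\Gamma'_i \geq \theta/d \geq \tau$ and $\Gamma'_{ij} \geq \theta\cdot 9/(16d^2) = 9\tau/4 \geq \tau$, and costs $\|\widehat\Gamma - \Gamma'\|_1 = \theta\|\widehat\Gamma - W\|_1 \leq 2\theta(|\EE|+n) = 8(|\EE|+n)d^2\tau$, again because all blocks are distributions; the triangle inequality then gives the claimed bound.

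The routine parts are the two convex-combination estimates and the repeated use of the fact that the $l_1$ distance between two normalized blocks is at most $2$. The only genuinely delicate step is the construction of the interior point $W$ of the shifted polytope $\LL_2^\nu$ in the second part, and I expect this to be the main obstacle: it must simultaneously (i) respect the prescribed marginal slacks — which hinges precisely on the zero-sum property of $\nu$ — and (ii) remain uniformly bounded away from $0$, which is exactly where the hypothesis $\|\nu\|_\infty \leq 1/(4d)$ is consumed.
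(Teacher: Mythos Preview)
Your proposal is correct and follows essentially the same route as the paper: for each part you first invoke Lemma~\ref{lemma::projection_l2_nu} to translate between $\LL_2$ and $\LL_2^\nu$, then take a convex combination with an interior point (uniform in the first part, the rank-one product $(1/d+\nu_{ij})(1/d+\nu_{ji})^\top$ in the second) to push all coordinates above $\tau$, and conclude by the triangle inequality. Your lower bound $W_{ij}\ge 9/(16d^2)$ is slightly sharper than the paper's $1/(4d^2)$, but both justify the same choice $\theta=4d^2\tau$.
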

 
 \begin{proof}
 We address each case individually.
 \begin{enumerate}
     \item We use the first result from Lemma~\ref{lemma::projection_l2_nu}, which yields $\widehat \Gamma \in \LL_2$ such that $\| \Gamma - \widehat \Gamma \|_1 \leq 2 \| \nu  \|_1$. If the probabilities are already bounded below $\tau$, then we are done; however, we must handle the worst case. As in the proof of Lemma~\ref{lemma::projection_l2_nu}, we compute a convex combination of $\widehat \Gamma$ with the uniform distribution to draw the distribution away from zero values. Define
     \begin{align*}
            \Gamma_{i}' & := (1 - \theta) \widehat \Gamma_i + \frac{\theta}{d} \1  \\
            \Gamma_{ij}' & := (1 - \theta) \widehat \Gamma_{ij} + \frac{\theta}{d^2} \1.
     \end{align*}
     where we set $\theta = \tau d^2$ which ensures that $\theta \in [0, 1]$ and $\Gamma \geq \tau$. Then, note that
     \begin{align*}
         \| \widehat \Gamma_i - \Gamma_i'\|_1 & = \sum_{x} | \frac{\theta}{d} - \theta \widehat\Gamma_{i}(x)| \leq 2\tau d^2 \\
         \| \widehat \Gamma_{ij} - \Gamma_{ij}'\|_1 & = \sum_{x_i, x_j} | \frac{\theta}{d} - \theta \widehat\Gamma_{ij}(x_i, x_j)| \leq 2\tau d^2.
     \end{align*}
     By the triangle inequality, we have
     \begin{align*}
         \| \Gamma - \Gamma'\|_1 & \leq \| \Gamma -\widehat \Gamma\|_1 + \| \widehat \Gamma - \Gamma'\|_1 \leq 2\|\nu\|_1 + 2 (n + m)  d^2 \tau.
     \end{align*}
     
    \item In the second case, we start by constructing a distribution $\delta$, which is nearly uniform but lives in the slack polytope is and bounded away from zero by at least $\tau$.

    For each $i \in \V$, we take $\delta_i \in \Sigma_d$ to be the uniform distribution where $\delta_i(x) = \frac{1}{d} \geq \tau$. Since $\|\nu \|_\infty \leq \frac{1}{4d}$, we perturb the uniform distribution with $\nu$ for each $j \in N(i)$, generating $\delta_{i}^j := \delta_i + \nu_{ij}$. Again, we are abusing notation slightly by using $\nu_{ij}$ to denote marginalization of edge $ij$ to vertex $i$. Note that $\delta_{i}^j \in \Sigma_d$, so we can define the product distribution $\delta_{ij} = \delta_i^j (\delta_j^i)\top \in \U_d(\delta_i^j, \delta_j^i)$, which, by construction, marginalizes such that the full vector $\delta$ given by the ensemble $\{ \delta_{i}\}_{i \in \V}$ and $\{ \delta_{ij} \}_{ij \in \EE}$ is in $\LL_2^{\nu}$. Furthermore, each component can be bounded below as \begin{align*}
    \delta_{ij}(x_i, x_j) & = \frac{1}{d^2} + \frac{\nu_{ij}(x_i)}{d} + \frac{\nu_{ji}(x_i)}{d} + \nu_{ij}(x_i) \nu_{ji}(x_j) \\
    & \geq \frac{1}{d^2} - \frac{1}{2d^2} - \frac{1}{16d^2} \\
    & \geq \frac{1}{4d^2}.
    \end{align*}
    Now, as before, we know there exists $\widehat \Gamma \in \LL_2^2$ such that $\|\Gamma -\widehat \Gamma\|_1 \leq 6 d\deg(\G) \| \nu\|_1$ from Lemma~\ref{lemma::projection_l2_nu}. Therefore, we can take the convex combination of $\Gamma' = (1 - \theta) \Gamma + \theta \delta$ to get $\Gamma' \in \LL_2^\nu$ such that $\Gamma' \geq \frac{\theta}{4d^2}$ in all entries.
    
    Taking $\theta = 4d^2 \tau \in [0, 1]$ ensures that $\Gamma' \geq \tau$. Furthermore, the difference can be computed as
    \begin{align*}
        \| \widehat \Gamma_i - \Gamma'_i \|_1 & = \sum_x | \frac{\theta}{d } - \theta \widehat \Gamma(x) | \leq 8 d^2 \tau \\
         \| \widehat \Gamma_{ij} - \Gamma_{ij}'\|_1 & = \sum_{x_i, x_j} | \theta \delta_{ij}(x_i, x_j) - \theta \widehat\Gamma_{ij}(x_i, x_j)| \leq 8d^2\tau. 
    \end{align*}
    Therefore, we have $\| \widehat \Gamma - \Gamma'\|_1 \leq 8 (|\EE| + n) d^2 \tau$, which by triangle inequality implies
    \begin{align*}
        \|  \Gamma - \Gamma' \|_1 \leq  \| \widehat \Gamma - \Gamma \|_1 +  \|  \widehat \Gamma - \Gamma' \|_1 \leq 6 d \deg(\G) \| \nu \|_1 + 8 (|\EE| + n) d^2 \tau.
    \end{align*}
    
 \end{enumerate}
 \end{proof}

We have now the necessary ingredients to prove the first theorem of this section, which provides a bound on the $l_1$ distance between the final iterate $\Gamma^{k}$ of Algorithms \ref{alg1} and \ref{alg2} and the solution $\Gamma_\eta^*$ of (\ref{eq:reg}). Crucially we analyze these iterates under the assumption all their component distributions $\Gamma^{(k)}_i$ for $i\in\mathcal{V}$ and $ \Gamma^{(k)}_{ij}$ for ${ij} \in \mathcal{E}$ are normalized. 

\begin{theorem}\label{theorem:appendix_preliminary_combined}
Let $\Gamma^{(k)}$ is the $k$th iterate of EMP and let $\nu^{(k)}$ be the slack vector corresponding to $\Gamma^{(k)}$ such that $\| \nu^{(k)}\|_\infty \leq \frac{1}{4d}$. In other words,
\begin{align*}
    \nu^{(k)}_{ij} &= \Gamma^{(k)}_{ij} \1 - \Gamma^{(k)}_i \\
    \nu^{(k)}_{ji} &= \left( \Gamma^{(k)}_{ij}\right)^\top \1 - \Gamma^{(k)}_j.
\end{align*}
Fix $\tau > 0$ such that $\tau \leq \frac{1}{8d^2}$.
 Let $\Gamma^{(k)}(2)$ be the pseudo-marginal vector in $\mathbb{L}_2$ produced by the first case of Lemma \ref{lemma::projection_l2_nu_tau} when fed with  $\Gamma^{(k)}$ and $\tau$. Then,

\begin{align*}
    \sum_{i \in \mathcal{V}} \frac{1}{2} \left\| \left(\Gamma^{(k)}(2)\right)_i - \left(\Gamma_\eta^*  \right)_i   \right\|_1^2 + \sum_{ij \in \mathcal{E}} \frac{1}{2} \left\| \left(\Gamma^{(k)}(2)\right)_{ij} - \left(\Gamma_\eta^* \right)_{ij}    \right\|_1^2 \\ \leq \left(\eta \|C\|_\infty + \log 1/\tau \right)\left( 8 d \deg(\G) \| \nu\|_1 +  10 (|\EE| + n) d^2 \tau\right).
\end{align*}

\end{theorem}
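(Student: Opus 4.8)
The plan is to combine block-wise strong convexity of the regularized objective with the optimality characterization established just above the statement. Write $f(\Gamma) := \langle C, \Gamma\rangle - \frac{1}{\eta}H(\Gamma)$. The crucial fact I would invoke is that the iterate $\Gamma^{(k)}$ is \emph{exactly} the entropy-regularized minimizer of $f$ over the slack polytope $\LL_2^{\nu^{(k)}}$: since $\Gamma^{(k)}$ has the exponential form $(\ref{eq::dualfeas1})$--$(\ref{eq::dualfeas2})$ (the same KKT/dual-feasibility form that solves (\ref{eq:reg_slack})) and is primal feasible for $\LL_2^{\nu^{(k)}}$ by definition of $\nu^{(k)}$, it minimizes $f$ there. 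This comparison principle is what lets me relate $\Gamma^{(k)}$ to the true optimum $\Gamma_\eta^*$ over $\LL_2$, while the projection Lemma~\ref{lemma::projection_l2_nu_tau} supplies the geometric closeness.

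First I would record the strong-convexity reduction. The regularizer decomposes as $\frac{1}{\eta}\sum_b(-H)(\Gamma_b)$ over vertex and edge blocks $b$, and $-H$ is $1$-strongly convex with respect to $\ell_1$ on each simplex (Pinsker's inequality), so the Bregman divergence of $f$ (whose linear part contributes no curvature) dominates $\frac{1}{2\eta}\sum_b\|\Gamma_b - \Gamma_b'\|_1^2$. Using the first-order optimality condition $\langle\nabla f(\Gamma_\eta^*),\Gamma-\Gamma_\eta^*\rangle\geq 0$ for all $\Gamma\in\LL_2$ and evaluating at the feasible point $\Gamma^{(k)}(2)\in\LL_2$ gives
\[
\sum_{i\in\V}\tfrac{1}{2}\bigl\|(\Gamma^{(k)}(2))_i - (\Gamma_\eta^*)_i\bigr\|_1^2 + \sum_{ij\in\EE}\tfrac{1}{2}\bigl\|(\Gamma^{(k)}(2))_{ij} - (\Gamma_\eta^*)_{ij}\bigr\|_1^2 \leq \eta\bigl(f(\Gamma^{(k)}(2)) - f(\Gamma_\eta^*)\bigr),
\]
so everything reduces to bounding the scaled objective gap on the right.

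Next I would control that gap by splitting it as $[f(\Gamma^{(k)}(2)) - f(\Gamma^{(k)})] + [f(\Gamma^{(k)}) - f(\Gamma_\eta^*)]$. For the second bracket I apply the second case of Lemma~\ref{lemma::projection_l2_nu_tau} to $\Gamma_\eta^*\in\LL_2$, producing $\Gamma_\eta^*(2)\in\LL_2^{\nu^{(k)}}$ with every entry at least $\tau$; because $\Gamma^{(k)}$ minimizes $f$ over $\LL_2^{\nu^{(k)}}$ we get $f(\Gamma^{(k)})\leq f(\Gamma_\eta^*(2))$, so that bracket is at most $f(\Gamma_\eta^*(2)) - f(\Gamma_\eta^*)$. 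Both brackets are now of the form $f(y)-f(x)$ with $y$ a $\tau$-bounded-below distribution, so convexity gives $f(y)-f(x)\leq\langle\nabla f(y),y-x\rangle\leq\|\nabla f(y)\|_\infty\,\|y-x\|_1$; and since $\nabla f(y) = C + \frac{1}{\eta}\log y$ with $y\in[\tau,1]$ entrywise, $\|\nabla f(y)\|_\infty\leq\|C\|_\infty + \frac{1}{\eta}\log(1/\tau)$. Multiplying through by $\eta$ isolates the common prefactor $\eta\|C\|_\infty + \log(1/\tau)$.

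Finally I would substitute the two $\ell_1$ distances from Lemma~\ref{lemma::projection_l2_nu_tau}: its first case gives $\|\Gamma^{(k)}(2)-\Gamma^{(k)}\|_1\leq 2\|\nu\|_1 + 2(|\EE|+n)d^2\tau$ and its second gives $\|\Gamma_\eta^*(2)-\Gamma_\eta^*\|_1\leq 6d\deg(\G)\|\nu\|_1 + 8(|\EE|+n)d^2\tau$. Summing yields $(2+6d\deg(\G))\|\nu\|_1 + 10(|\EE|+n)d^2\tau$, and since each vertex has at least one edge we have $d\deg(\G)\geq 1$, so $2+6d\deg(\G)\leq 8d\deg(\G)$, giving precisely the claimed right-hand side. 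The main obstacle is conceptual rather than computational: the whole argument rests on correctly invoking the characterization of $\Gamma^{(k)}$ as the genuine regularized optimum over $\LL_2^{\nu^{(k)}}$, which is what licenses $f(\Gamma^{(k)})\leq f(\Gamma_\eta^*(2))$; the single quantitative subtlety is the role of the floor $\tau$, which keeps $\log$ (hence $\|\nabla f\|_\infty$) finite on the projected points and must be traded off against the $O((|\EE|+n)d^2\tau)$ cost it introduces.
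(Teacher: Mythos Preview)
Your proposal is correct and follows essentially the same route as the paper: the paper phrases your strong-convexity/first-order-optimality step as the Bregman--Pythagorean inequality for the information projection onto $\LL_2$, introduces the same auxiliary point $\Gamma_\eta^*(\nu^{(k)})\in\LL_2^{\nu^{(k)}}$ via the second case of Lemma~\ref{lemma::projection_l2_nu_tau}, and decomposes the scaled objective gap into the same three pieces $A_1+A_2+A_3$ (your two brackets are $A_1$ and $A_2+A_3$, with $A_2\leq 0$ by optimality of $\Gamma^{(k)}$ in the slack polytope), bounding $A_1,A_3$ by $\eta\|C\|_\infty + \log(1/\tau)$ times the $\ell_1$ distances from the lemma. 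Your convexity/gradient bound $f(y)-f(x)\leq\|\nabla f(y)\|_\infty\|y-x\|_1$ is a slightly cleaner justification than the paper's appeal to $\log(1/\tau)$-Lipschitzness of the entropy, since it only needs the $\tau$-floor at the single point $y$, but the content is identical.
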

\begin{proof}
By definition $\Gamma^{(k)} \in \LL_2^{\nu^{(k)}}$. In fact, $\Gamma^{(k)}$ is the optimizer of the following regularized linear program:
\begin{align*}
    \min \quad \<C, \Gamma\> - \frac{1}{\eta}H(\Gamma) \quad \st \quad \Gamma \in \LL_2^{\nu^{(k)}},
\end{align*}
This observation follows because $\Gamma^{(k)}$ is in $\LL_2^{\nu^{(k)}}$ and its elements can be written as in  (\ref{eq::dualfeas1}) and (\ref{eq::dualfeas2}), thus satisfying dual feasibility.

Recall that after every iteration all the component distributions are normalized. 
 Recall that
\begin{align*}
    \langle \eta C, \Gamma^{(k)}(2) \rangle - H(\Gamma^{(k)}(2)) &= \D_\Phi\left(\Gamma^{(k)}(2), \exp({-\eta C})\right) + \< \1, e^{-\eta C}\> \\
    \langle \eta C, \Gamma_\eta^* \rangle - H(\Gamma_\eta^*) &= \D_\Phi\left(\Gamma_\eta^*, \exp({-\eta C})\right) + \< \1, e^{-\eta C}\>,
\end{align*}
where $\Phi = -H$ is the negative entropy. The point $\Gamma_\eta^*$ is the optimal point of the information projection $\exp\left(-\eta C\right)$ for points in $\LL_2$. By the properties of information projections,
\begin{equation*}
    \D_\Phi\left(\Gamma^{(k)}(2), \exp({-\eta C})\right) \geq  \D_\Phi\left(\Gamma^{(k)}(2),\Gamma_\eta^*\right) +  \D_\Phi\left(\Gamma_\eta^*, \exp({-\eta C})\right).
\end{equation*}
Since for $\Gamma^{(k)}(2)$ and $\Gamma_\eta^*$, the sum of their entries is the same, by Pinsker's inequality (applied to each of the component vertex and edge distributions) this in turn implies that
\begin{align}
  \D_\Phi\left(\Gamma^{(k)}(2), \exp({-\eta C})\right) - \D_\Phi\left(\Gamma_\eta^*, \exp({-\eta C})\right)  
  &\geq  \D_\Phi\left(\Gamma^{(k)}(2),\Gamma_\eta^*\right) \nonumber \\
  &\geq \sum_{i \in \mathcal{V}} \frac{1}{2} \left\| \left(\Gamma^{(k)}(2)\right)_i - \left(\Gamma_\eta^*  \right)_i   \right\|_1^2 +\\
  &\quad \sum_{ij \in \mathcal{E}} \frac{1}{2} \left\| \left(\Gamma^{(k)}(2)\right)_{ij} - \left(\Gamma_\eta^* \right)_{ij}    \right\|_1^2 . \label{eq::divergence_kl_pinsker}
\end{align}
Let $\Gamma_\eta^*(\nu^{(k)})$ in $\LL_2^{\nu^{(k)}}$ be the vector produced by Lemma~\ref{lemma::projection_l2_nu_tau} applied to $\Gamma_{\eta}^* \in \LL_2$. Note that we utilize the existence of $\Gamma_\eta^*(\nu^{(k)})$ and $\Gamma^{(k)}(2)$ for analysis but we need not actually \textit{compute} them. Expanding $I$ yields
\begin{align*}
    \D_\Phi\left(\Gamma^{(k)}(2), \exp({-\eta C})\right) -  \D_\Phi\left(\Gamma_\eta^*, \exp({-\eta C})\right)  &= \langle \eta C, \Gamma^{(k)}(2)  - \Gamma_\eta^* \rangle + H(\Gamma_\eta^*) - H(\Gamma^{(k)}(2)) \\
    &=
    \underbrace{
    \langle \eta  C, \Gamma^{(k)}(2)  - \Gamma^{(k)} \rangle + H(\Gamma^{(k)}) - H(\Gamma^{(k)}(2))
    }_{A_1}
    \\
    &\quad  + 
    \underbrace{
    \langle  \eta C, \Gamma^{(k)}  - \Gamma_\eta^*(\nu^{(k)}) \rangle + H(\Gamma_\eta^*(\nu^{(k)})) - H(\Gamma^{(k)})
    }_{A_2}
    \\
    &\quad + 
    \underbrace{
    \langle \eta C, \Gamma_\eta^*(\nu^{(k)})  - \Gamma_\eta^* \rangle + H(\Gamma_\eta^*) - H(\Gamma_\eta^*(\nu^{(k)}))
    }_{A_3}
    .
\end{align*}
Term $A_2$ is negative since $\Gamma^{(k)}$ is the optimal point in the slack polytope. Because $\Gamma_\eta^*(\nu^{(k)})$ and $\Gamma^{(k)}(2)$ were constructed such that all their probabilities are lower bounded by $\tau$, it holds that the entropies are $\log \frac{1}{\tau}$-Lipschitz in $\|\cdot\|_1$
Terms $A_1$ and $A_3$ can be then bounded:
\begin{align*}
    A_1 & \leq \eta \|C \|_\infty  \| \Gamma^{(k)}(2)  - \Gamma^{(k)} \|_1  + \log\frac{1}{\tau} \| \Gamma^{(k)}(2)  - \Gamma^{(k)}\|_1 \\
    & \leq  ( \eta \|C\|_\infty + \log 1/\tau) \left( 2 \| \nu^{(k)}\|_1 + 2(|\EE| + n) d^2 \tau \right) \\
    A_3 & \leq  \|C \|_\infty  \| \Gamma_\eta^*(\nu^{(k)})  - \Gamma_\eta^* \|_1 + \log\frac{1}{\tau} \| \Gamma_\eta^*(\nu^{(k)})  - \Gamma_\eta^*\|_1 \\
    & \leq ( \eta \|C\|_\infty + \log 1/\tau) \left( 6 d \deg(\G) \| \nu^{(k)}\|_1 +  8 (|\EE| + n) d^2 \tau\right).
\end{align*}
The result then follow as
\begin{align*}
    A_1 + A_3 
    & \leq \left(\eta \|C\|_\infty + \log 1/\tau \right)\left( 8 d \deg(\G) \| \nu^{(k)}\|_1 +  10 (|\EE| + n) d^2 \tau\right).
\end{align*}

\end{proof}
Theorem \ref{theorem:appendix_preliminary_combined}, combined with the EMP algorithm's optimality condition can provide convergence guarantees for the case when $\LL_2$ is tight and the solution is unique. We restate the main result, Theorem \ref{theorem:combined}, for readability.

\begin{theorem}
Let $\eta \geq \frac{2\log(16  n^2d^2 ) + 16|\mathcal{E}|d^2  }{\min(\Delta, \frac{1}{128})} $, and $\epsilon^{-1} > (25d\deg(\G) |\EE|)^2 \max \left( \eta \|C\|_\infty,68 \right)$. If $\LL_2$ is tight and $|\mathcal{V}^*_2| = 1$, the EMP  algorithm returns a MAP assignment after 
$\ceil{\frac{4\mathcal{S}_0(\deg(\G) +1)}{\epsilon^{2}}}$ iterations for EMP-cyclic and after $\ceil{\frac{4 \mathcal S_0}{\epsilon^2}}$ iterations for EMP-greedy.
\end{theorem}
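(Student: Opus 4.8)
The plan is to prove that the rounded iterate $\mathrm{round}(\Gamma^{(k)})$ equals the unique integral optimum $\Gamma^*$ of (\ref{eq:exact}) by making $\|\Gamma^{(k)}-\Gamma^*\|_1$ smaller than the rounding margin. First I would note that, since each $\Gamma^{(k)}_i$ is normalized and each $\Gamma^*_i$ is a unit coordinate vector, $\|\Gamma^{(k)}_i-\Gamma^*_i\|_1 = 2\,(1-\Gamma^{(k)}_i(x_i^*))$, so the rounding at vertex $i$ is correct as soon as $\|\Gamma^{(k)}_i-\Gamma^*_i\|_1<1$; it therefore suffices to guarantee the single inequality $\|\Gamma^{(k)}-\Gamma^*\|_1<1$. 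I would bound this total distance by inserting the two analysis-only anchors of the appendix, the feasible projection $\Gamma^{(k)}(2)\in\LL_2$ and the regularized optimum $\Gamma_\eta^*$, and applying the triangle inequality:
\[ \|\Gamma^{(k)}-\Gamma^*\|_1 \;\le\; \underbrace{\|\Gamma^{(k)}-\Gamma^{(k)}(2)\|_1}_{T_1} + \underbrace{\|\Gamma^{(k)}(2)-\Gamma_\eta^*\|_1}_{T_2} + \underbrace{\|\Gamma_\eta^*-\Gamma^*\|_1}_{T_3}. \]

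For $T_3$, the prescribed lower bound on $\eta$ meets the hypothesis of Theorem~\ref{prop:approximation error} (equivalently its Corollary, with the $m=2$ edge-based bounds on $\mathcal{R}_1,\mathcal{R}_H$), now with the sharpened effective gap $\min(\Delta,\tfrac1{128})$ in place of $\Delta$; since a smaller gap only forces a larger $\eta$, this yields $\mathrm{round}(\Gamma_\eta^*)=\Gamma^*$ and, quantitatively, $T_3$ at most a small absolute constant well below $\tfrac12$. For $T_1$, termination of EMP at iteration $k$ means every edge constraint is violated by less than $\epsilon$, so $\|\nu^{(k)}\|_1<2|\EE|\epsilon$ and $\|\nu^{(k)}\|_\infty<\epsilon\le\tfrac1{4d}$; the first case of Lemma~\ref{lemma::projection_l2_nu_tau} (the lemma producing $\Gamma^{(k)}(2)$) then gives $T_1\le 2\|\nu^{(k)}\|_1+2(|\EE|+n)d^2\tau$.

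For $T_2$, Theorem~\ref{theorem:appendix_preliminary_combined} controls the sum over the $n+|\EE|$ components of $\tfrac12\|\cdot\|_1^2$ by $B:=(\eta\|C\|_\infty+\log\tfrac1\tau)\,(8d\deg(\G)\|\nu^{(k)}\|_1+10(|\EE|+n)d^2\tau)$, and Cauchy--Schwarz across the components gives $T_2\le\sqrt{2(n+|\EE|)B}$. The remaining step is to fix the slack parameter $\tau\le\tfrac1{8d^2}$ (e.g.\ $\tau\asymp\epsilon$) and invoke the hypothesis $\epsilon^{-1}>(25d\deg(\G)|\EE|)^2\max(\eta\|C\|_\infty,68)$. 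This forces $\epsilon\,\eta\|C\|_\infty$ below $(25d\deg(\G)|\EE|)^{-2}$, so the $\eta\|C\|_\infty$ factor inside $B$ is cancelled by the smallness of $\epsilon$; using $n\le 2|\EE|$ to absorb $n+|\EE|$, one checks that $T_1$ and $T_2$ are each at most a small constant, whence $T_1+T_2+T_3<1$. Rounding then returns $\Gamma^*$, and substituting the two termination counts of Theorem~\ref{theorem:convergence} produces the stated iteration bounds $\ceil{4\mathcal{S}_0(\deg(\G)+1)/\epsilon^2}$ and $\ceil{4\mathcal{S}_0/\epsilon^2}$.

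I expect the main obstacle to be the constant bookkeeping in this last step: $T_3$ is shrunk by taking $\eta$ large, whereas $B$ (hence $T_2$) grows linearly in $\eta\|C\|_\infty$, so the argument closes only if $\epsilon$ is small enough to defeat this growth, and the constants $25$ and $68$, the $\log\tfrac1\tau$ term, and the precise choice of $\tau$ must all be tracked so that the three contributions provably sum to strictly less than one uniformly in $n$, $d$, $|\EE|$, and $\deg(\G)$. A secondary, more routine, subtlety is checking the side conditions $\|\nu^{(k)}\|_\infty\le\tfrac1{4d}$ and $\tau\le\tfrac1{8d^2}$ required before the projection lemmas can be applied, both of which follow once $\epsilon$ is taken as small as the hypothesis demands.
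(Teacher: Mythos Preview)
Your decomposition via the triangle inequality with the two anchors $\Gamma^{(k)}(2)$ and $\Gamma_\eta^*$ is exactly the paper's route, and your treatment of $T_1$ and $T_3$ matches. The gap is in your handling of $T_2$. Theorem~\ref{theorem:appendix_preliminary_combined} controls the \emph{sum of squares}
\[
\sum_{i\in\V}\tfrac12\bigl\|(\Gamma^{(k)}(2))_i-(\Gamma_\eta^*)_i\bigr\|_1^2
\;+\;\sum_{ij\in\EE}\tfrac12\bigl\|(\Gamma^{(k)}(2))_{ij}-(\Gamma_\eta^*)_{ij}\bigr\|_1^2\;\le\;B,
\]
and you pass from this to the aggregate $T_2=\sum\|\cdot\|_1$ via Cauchy--Schwarz, picking up a factor $\sqrt{n+|\EE|}$. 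That factor is fatal under the stated hypothesis on $\epsilon$. With the paper's choice $\tau=\epsilon/(10(|\EE|+n)d^2)$ and $\epsilon$ taken right at the threshold $\epsilon^{-1}=(25d\deg(\G)|\EE|)^2\max(\eta\|C\|_\infty,68)$, the dominant contribution to $B$ comes from the term $\log(1/\tau)\cdot 17d\deg(\G)|\EE|\epsilon$; after bounding $\log(1/\epsilon)\le 2\epsilon^{-1/2}$ this term evaluates to roughly $34/(25\sqrt{68})\approx 0.165$, a constant that does \emph{not} shrink with the graph size. Hence $T_2\le\sqrt{2(n+|\EE|)B}$ is unbounded as $|\EE|\to\infty$, and your target $T_1+T_2+T_3<1$ cannot be met uniformly.

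The remedy is to abandon the global inequality $\|\Gamma^{(k)}-\Gamma^*\|_1<1$ and instead run the triangle inequality \emph{componentwise}: for each $i\in\V$ (and each $ij\in\EE$) bound $\|(\Gamma^{(k)})_i-(\Gamma^*)_i\|_1$ by the three pieces. Since a bound $B$ on a sum of nonnegative terms bounds each term, you immediately get $\|(\Gamma^{(k)}(2))_i-(\Gamma_\eta^*)_i\|_1\le\sqrt{2B}$ for every component, with no $\sqrt{n+|\EE|}$ loss. With $B\le 3/25$ this gives each such piece below $1/2$, and since your global bounds on $T_1$ and $T_3$ are already $O(1/|\EE|)$ and $1/32$ respectively, they are a fortiori valid per component; summing the three yields the per-vertex bound strictly below $1$ (indeed below $1/2$) that rounding requires. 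This is precisely how the paper closes the argument.
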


\begin{proof}
Let $\Gamma^{(k)}$ be the last internal iterate of the EMP algorithm before rounding. Since the stopping condition has been met, the slack vector $\nu^{(k)}$ corresponding to $\Gamma^{(k)}$ must satisfy $\| \left(\nu^{(k)}  \right)_{ij} \|_1 \leq \epsilon$ for all $ij\in \mathcal{E}$ so that $\| \nu^{(k)}\|_1 \leq 2 |\EE| \epsilon$. 

Let $\Gamma^{(k)}(2)$ be defined as in Theorem  \ref{theorem:appendix_preliminary_combined} and choose $\tau = \frac{\epsilon}{10(  | \EE| + n) d^2 }$\footnote{As long as $\epsilon \leq \frac{1}{4d}$ at least, this guarantees $\tau \leq \frac{1}{8d^2}$, so we are free to use Theorem~\ref{theorem:appendix_preliminary_combined}}. Then, the bound from Theorem~\ref{theorem:appendix_preliminary_combined} becomes
\begin{align*}
    & \left(\eta \|C\|_\infty + \log 1/\tau \right)\left( 8 d \deg(\G) \| \nu^{(k)}\|_1 +  10 (|\EE| + n) d^2 \tau\right) \\
    & \leq \left(\eta \|C\|_\infty + \log 1/\tau \right)\left( 16 d \deg(\G) |\EE| \epsilon  +  10 (|\EE| + n) d^2 \tau\right) \\
    & = \left(\eta \|C\|_\infty + \log \frac{10(  | \EE| + n) d^2} { \epsilon} \right)17 d \deg(\G) |\EE| \epsilon \\
    & = \left(\eta \|C\|_\infty + \log \left(   10(| \EE| + n)  d^2  \right) + \log \frac{1}{\epsilon} \right)17 d \deg(\G) |\EE| \epsilon \\
    & \leq \left(\eta \|C\|_\infty + \log \left(   10(| \EE| + n)  d^2  \right) + 2\epsilon^{-1/2} \right)17 d \deg(\G) |\EE| \epsilon,
\end{align*}
where the last inequality used the fact that $\log(x) \leq n (x^{1/n} - 1)$ for $n > 0$. Choosing $\epsilon^{-1} > 425d^2 \deg(\G)^2 |\EE|^2 \max \left\{ \eta \|C\|_\infty,68 \right\} $ ensures that
\begin{align*}
    \sum_{i \in \mathcal{V}} \frac{1}{2} \left\| \left(\Gamma^{(k)}(2)\right)_i - \left(\Gamma_\eta^*  \right)_i   \right\|_1^2 + \sum_{ij \in \mathcal{E}} \frac{1}{2} \left\| \left(\Gamma^{(k)}(2)\right)_{ij} - \left(\Gamma_\eta^* \right)_{ij}    \right\|_1^2 & \leq \frac{3}{25}.
\end{align*}

Consequently for all $i\in \mathcal{V}$
\begin{equation*}
    \left\|  \left(\Gamma^{(k)}(2)\right)_i -\left( \Gamma_\eta^* \right)_i\right \|_1 \leq \frac{2}{5}.
\end{equation*}
and for all $ij \in \mathcal{E}$
\begin{equation*}
    \left\|  \left(\Gamma^{(k)}(2)\right)_{ij} - \left(\Gamma_\eta^*\right)_{ij} \right \|_1 \leq \frac{2}{5}.
\end{equation*}

We also have
\begin{align*}
    \| \Gamma^{(k)}(2) - \Gamma^{(k)}\|_1 & \leq 2  \| \nu^{(k)}\|_1 + 2(|\EE| + n) d^2 \tau  \\
    & \leq 4 |\EE| \epsilon + \frac{\epsilon}{5}  \\
    & \leq 5 |\EE| \epsilon,
\end{align*}
which implies $\| \Gamma^{(k)}(2) - \Gamma^{(k)}\|_1  \leq \frac{1}{24}$ and $\| \Gamma_\eta^* - \Gamma^*  \|_1 \leq \frac{1}{32}$ (by the condition on $\eta$, see Theorem \ref{prop:approximation error}). Putting these inequalities together by triangle inequality,
\begin{align*}
    \| \left(\Gamma^{(k)}\right)_i - \left(\Gamma^*\right)_i \|_1 &\leq   \| \left(\Gamma^{(k)}\right)_i  - \left(\Gamma^{(k)}(2)\right)_i   \|_1 +  \| \left(\Gamma^{(k)}(2)\right)_i  - \left(\Gamma^*_\eta\right)_i   \|_1 + \|  \left(\Gamma^*_\eta\right)_i -\left(\Gamma^*\right)_i     \|_1 \\
    &\leq  \frac{1}{24} +\frac{2}{5} + \frac{1}{32} \\
    &< \frac{1}{2}.
\end{align*}
For all $i \in \mathcal{V}$. A similar statement holds for all $ij \in \mathcal{E}$:
\begin{align*}
    \| \left(\Gamma^{(k)}\right)_{ij} - \left(\Gamma^*\right)_{ij} \|_1 &\leq   \| \left(\Gamma^{(k)}\right)_{ij}  - \left(\Gamma^{(k)}(2)\right)_{ij}   \|_1 +  \| \left(\Gamma^{(k)}(2)\right)_{ij}  - \left(\Gamma^*_\eta\right)_{ij}   \|_1 +     \|  \left(\Gamma^*_\eta\right)_{ij} -\left(\Gamma^*\right)_{ij}     \|_1 \\
    &\leq \frac{1}{24} + \frac{2}{5} + \frac{1}{32} \\
    &< \frac{1}{2}.
\end{align*}
Therefore, assuming $\Gamma^*$ (the solution of $\LL_2$) is integral,
\begin{equation*}
   \left( \mathrm{round}(\Gamma^{(k)}) \right)_i = \left(\Gamma^* \right)_i \text{ for all } i \in \mathcal{V}
\end{equation*}
and
\begin{equation*}
   \left( \mathrm{round}(\Gamma^{(k)}) \right)_{ij} = \left(\Gamma^* \right)_{ij} \text{ for all } ij \in \mathcal{E}.
\end{equation*}

\end{proof}

\section{Experiment Details}\label{section::experiment-details}

In this section, we provide some additional details for the experiments in Section~\ref{section::experiments}. As mentioned, empirical comparisons between state-of-the-art solvers and EMP-like algorithms have been studied extensively \citep{meshi2012convergence,ravikumar2010message,werner2007linear,kappes2013comparative}. For instance, \citet{meshi2012convergence} found that the regularized star-based message passing algorithms greatly outperform standard optimization techniques such as FISTA and gradient descent, which do not exploit the coordinate structure of the problem. 

The primary purpose of these experiments is to understand how the theoretical results in Section~\ref{section::convergence_analysis} manifest in a practical setting. In particular, we would like to understand how the convergence rates, in terms of the ability to round to the solution, behave as a function of the parameters of the problem such as graph size, choice of regularization $\eta$, and connectivity of the graph. In all experiments, we ran an LP solver on the graph in order to obtain the ground-truth MAP assignment. We only considered problems that were tight. The solver specifically is the ECOS solver through a CVXPY wrapper.

\subsection{Grid Experiments}
As mentioned, our first set of experiments considered solving the MAP problem on $\sqrt n \times \sqrt n$ grids, totallying $n$ vertices. The vertices were connected by edges to their vertical and horizontal neighbors in the grid. This setting is fairly standard in the literature \citep{erdogdu2017inference,globerson2008fixing,ravikumar2010message}.

We considered the MAP problem with $d = 3$ labels and choose a cost vector $C$ in the family of multi-label Potts models, another well-studied application \citep{wainwright2008graphical}. Potts models typically have diagonal potentials between edges. That is, we only penalize/reward when the labels on two connected vertices agree. We randomly generated the actual values of the vector. For vertex costs, we chose $C_i(x_i) \sim \text{Unif}(-0.5, 0.5)$ and for the edge costs we chose
\begin{align*}
    C_{ij}(x_i, x_j) & = \begin{cases}
        \beta_{ij} & x_i = x_j\\
        0 & \text{otherwise}
    \end{cases} \quad \forall ij, x_i, x_j,
\end{align*}
where $\beta_{ij}\sim\text{Unif}\{-0.1, 0.1\}$.
In the approximation results, we ran the algorithms until they had effectively converged after 80 iterations, where each iteration consisted of a full pass over the edges. For EMP-cyclic, this means simply going through all the edges once. For EMP-greedy, one iteration means the opportunity to update each edge exactly once, (although the algorithm will greedily select them in reality). Thus both algorithms update the same number of edges, though their choices will be different. Regardless, we found that 80 iterations was reasonably sufficient to observe the approximation properties. We measured the results in terms of the average Hamming distance between the LP's solution, which is integral, and the rounded solution returned by the algorithms.

\subsection{Random Graph Experiments}

While the grid topology offers a consistent platform to evaluate the algorithms, we also considered randomly generated graphs, specifically Erd\H{o}s-R\'enyi random graphs. These graphs are constructed by iterating through every pair of the $n$ vertices. Then, an edge is drawn between vertex $i$ and $j$ with probability $p$. Specifically, we chose $p = \frac{1.1 \log n}{n}$, which is just large enough that the graph is almost surely connected.
We found these to be useful hyperparameter because any lower and the graph would largely be disconnected. Any higher and typically we found the LP was not tight. We chose the same multi-label Potts model for generating the cost vector $C$.

With these experiments, we intended to understand how diverse graph topologies would affect convergence due to randomness. In particular, we restricted the degrees of the graph to $\deg(\G) = 5, 10$ to observe how the algorithms behave on denser graphs.

\end{document}